\documentclass[lettersize,journal]{IEEEtran}
\IEEEoverridecommandlockouts
\usepackage{cite}
\usepackage{amsmath,amssymb,amsfonts,amsthm}
\usepackage{pifont}
\usepackage[ruled,vlined]{algorithm2e}
\SetKwInput{KwInput}{Input}
\SetKwInput{KwOutput}{Output}
\usepackage{caption}
\usepackage[labelformat=simple]{subcaption}
\usepackage[labelsep=period]{caption}

\captionsetup{font=footnotesize}
\DeclareCaptionLabelFormat{mylabel}{#1~#2.\quad}
\captionsetup[figure]{labelformat=mylabel,labelsep=none,name=Fig.}
\captionsetup[table]{textfont={sc,footnotesize}, labelfont=footnotesize, labelsep=newline}
\usepackage{float}
\usepackage{multirow}
\usepackage{graphicx}
\usepackage{textcomp}
\usepackage{hyperref}
\usepackage{xcolor}
\usepackage{tabularx,booktabs}
\newcolumntype{Y}{>{\centering\arraybackslash}X}
\usepackage[scr=boondox, cal=esstix]{mathalpha}
\usepackage{moresize}

\DeclareMathOperator*{\argmin}{arg\,min}
\newcommand{\cmark}{\ding{51}}
\newcommand{\xmark}{\ding{55}}
\usepackage{orcidlink}
\def\BibTeX{{\rm B\kern-.05em{\sc i\kern-.025em b}\kern-.08em
    T\kern-.1667em\lower.7ex\hbox{E}\kern-.125emX}}
\newtheorem{theorem}{Theorem}
\usepackage{fancyhdr}
\pagestyle{fancy}
\fancyhead{}

\begin{document}

\fancyhead[L]{\footnotesize This paper has been accepted for publication in \textit{IEEE Transactions on Automation Science and Engineering}. DOI: \href{https://doi.org/10.1109/TASE.2025.3575237}{10.1109/TASE.2025.3575237}}

\clearpage
\thispagestyle{empty}
\begin{minipage}{\textwidth}
\Huge
IEEE Copyright Notice \\
\large
\\
\textcopyright~2025 IEEE. Personal use of this material is permitted. Permission from IEEE must be obtained for all other uses, in any current or future media, including reprinting/republishing this material for advertising or promotional purposes, creating new collective works, for resale or redistribution to servers or lists, or reuse of any copyrighted component of this work in other works.\\

This paper has been accepted for publication in \textit{IEEE Transactions on Automation Science and Engineering}. DOI: \href{https://doi.org/10.1109/TASE.2025.3575237}{10.1109/TASE.2025.3575237}

\normalsize
\end{minipage}
\newpage

\clearpage
\thispagestyle{fancy}
\setcounter{page}{1}

\title{MEF-Explore: Communication-Constrained Multi-Robot Entropy-Field-Based Exploration}

\author{Khattiya 
       Pongsirijinda\textsuperscript{\orcidlink{0000-0002-0168-8156}},
       Zhiqiang Cao\textsuperscript{\orcidlink{0000-0002-0823-7155}},
       Billy Pik Lik Lau\,\textsuperscript{\orcidlink{0000-0001-5133-2791}},~\IEEEmembership{Senior Member,~IEEE},\\
       Ran Liu\,\textsuperscript{\orcidlink{0000-0002-6343-4645}},~\IEEEmembership{Senior Member,~IEEE},
       Chau Yuen\,\textsuperscript{\orcidlink{0000-0002-9307-2120}},~\IEEEmembership{Fellow,~IEEE}, and U-Xuan Tan\,\textsuperscript{\orcidlink{0000-0002-5757-1379}},~\IEEEmembership{Senior Member,~IEEE}
\thanks{Corresponding author: Chau Yuen.
\\ \indent K. Pongsirijinda, Z. Cao, B. P. L. Lau, and U-X. Tan are with the Engineering Product Development, Singapore University of Technology and Design, Singapore 487372 {(email: \{khattiya\_pongsirijinda, zhiqiang\_cao\}@mymail.sutd.edu.sg, \{billy\_lau, uxuan\_tan\}@sutd.edu.sg}). R. Liu and C. Yuen are with the School of Electrical and Electronic Engineering, Nanyang Technological University, Singapore 639798 (email: \{ran.liu, chau.yuen\}@ntu.edu.sg).
}}

\maketitle

\begin{abstract}
Collaborative multiple robots for unknown environment exploration have become mainstream due to their remarkable performance and efficiency. However, most existing methods assume perfect robots' communication during exploration, which is unattainable in real-world settings. Though there have been recent works aiming to tackle communication-constrained situations, substantial room for advancement remains for both information-sharing and exploration strategy aspects. In this paper, we propose a Communication-Constrained \underline{M}ulti-Robot \underline{E}ntropy-\underline{F}ield-Based Exploration (MEF-Explore). The first module of the proposed method is the two-layer inter-robot communication-aware information-sharing strategy. A dynamic graph is used to represent a multi-robot network and to determine communication based on whether it is low-speed or high-speed. Specifically, low-speed communication, which is always accessible between every robot, can only be used to share their current positions. If robots are within a certain range, high-speed communication will be available for inter-robot map merging. The second module is the entropy-field-based exploration strategy. Particularly, robots explore the unknown area distributedly according to the novel forms constructed to evaluate the entropies of frontiers and robots. These entropies can also trigger implicit robot rendezvous to enhance inter-robot map merging if feasible. In addition, we include the duration-adaptive goal-assigning module to manage robots' goal assignment. The simulation results demonstrate that our MEF-Explore surpasses the existing ones regarding exploration time and success rate in all scenarios. For real-world experiments, our method leads to a 21.32\% faster exploration time and a 16.67\% higher success rate compared to the baseline.
\end{abstract}

\def\abstractname{Note to Practitioners}
\begin{abstract}
This paper is motivated by the problem of multi-robot exploration in unknown environments with communication constraints regarding information sharing between robots. The majority of existing methods have yet to consider that robots cannot always fully communicate when working in actual situations. To fill this gap, in this paper, we propose a new exploration method in which robots can share necessary information based on their communication status and explore the area efficiently based on the entropies of frontiers and robots. Experimental evaluation shows that the proposed approach leads to fast and successful exploration, which will be practical for real-world exploration tasks, such as in large buildings, warehouses, urban areas, and complex environments.
\end{abstract}

\begin{IEEEkeywords}
Communication constraints, cooperating robots, distributed exploration, entropy-field-based exploration, information sharing, multi-robot exploration.
\end{IEEEkeywords}

\section{Introduction}
\label{section:intro}

\begin{figure}
  \begin{center}
  \includegraphics[width=0.95\linewidth]{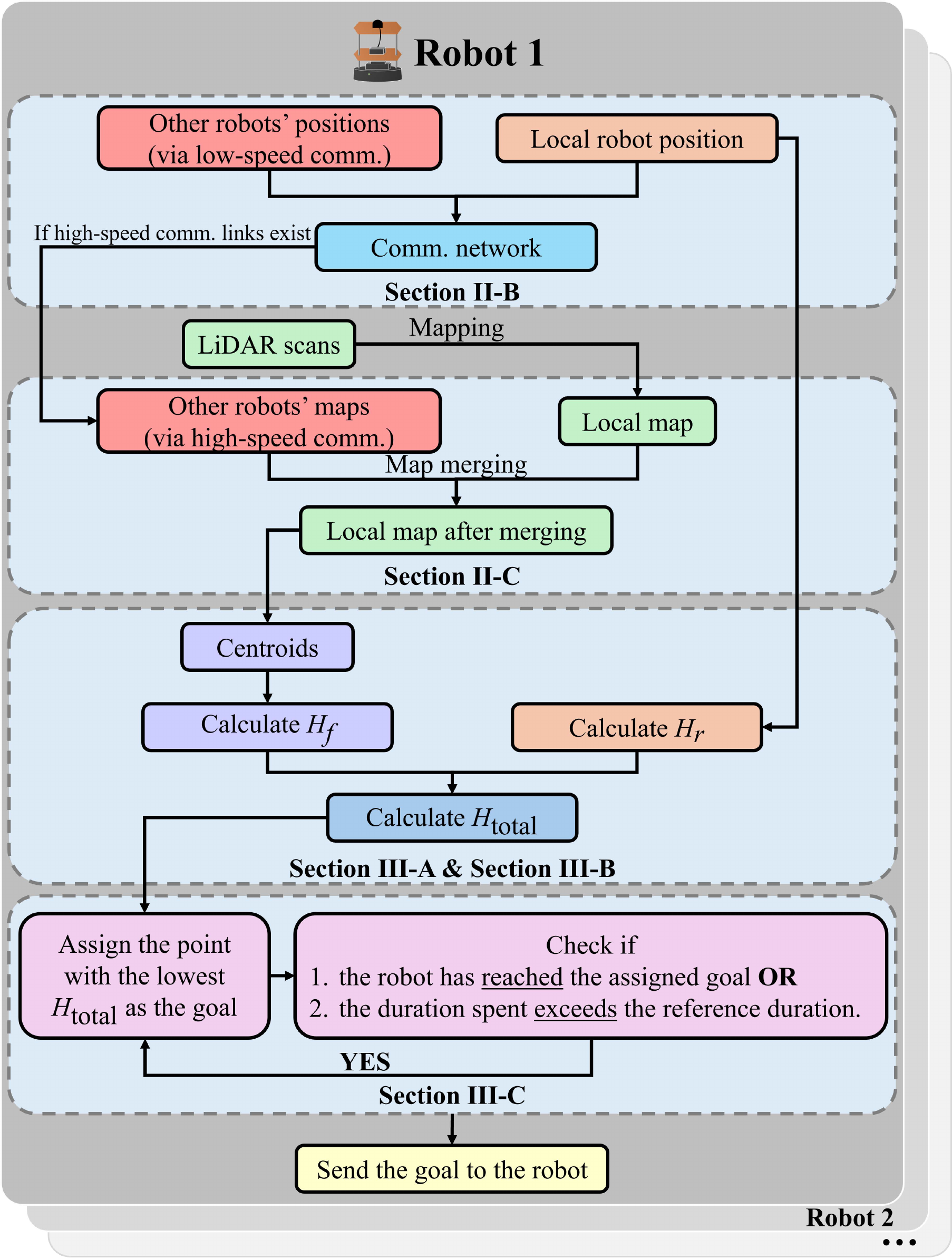}
  \caption{Overview of our proposed MEF-Explore}
  \vspace*{-6mm}
  \label{fig:exp-overview}
  \end{center}
\end{figure}

\IEEEPARstart{C}{ollaborative} multi-robot exploration systems have emerged as a prevailing approach for autonomous unknown area exploration owing to significant advancements and practical capabilities in various domains, such as search and rescue operations \cite{SAR, DIBNN}, environmental monitoring and patrolling \cite{Monitor, Patrol}, and several types of automation. So far, diverse methods have been proposed to enhance exploration performance; for instance, the conventional rapidly exploring random tree (RRT) strategy and its related developed strategies \cite{RRT, TempRRT, MultiRoom}, potential-field-based strategies \cite{MWF-CN, MMPF}, entropy-based strategies \cite{Entropy-Multi2015, Entropy-Multi2018, Entropy-Multi2019}, hierarchical-based strategies \cite{Hier, CURE}, and reinforcement-learning-based (RL-based) strategies \cite{RL2020, RL2022, CQLite}. Although the improvements are clearly shown by the exploration efficiency of the mentioned methods, most assume flawless communication between robots throughout the exploration, which is difficult to ensure in real-world scariness when infrastructure is unavailable.

\begin{table*}[htbp]
    \scriptsize
    \centering
    \caption{Communication-Constrained Multi-Robot Exploration Methods}
\begin{tabular}{lcccccc}
    \hline
    \multicolumn{1}{|c|}{{\textbf{Method}}}                              & \multicolumn{1}{c|}{\textbf{{Robot type}}}    & \multicolumn{1}{c|}{\textbf{\begin{tabular}[c]{@{}c@{}}Real-robot\\ deployment\end{tabular}}}   & \multicolumn{1}{c|}{{\textbf{Mapping}}}   & \multicolumn{1}{c|}{{\textbf{Approach}}}   & \multicolumn{1}{c|}{\textbf{{Exploration strategy}}}   & \multicolumn{1}{c|}{\textbf{{Communication requirement}}}                                                  \\ \hline
    \multicolumn{1}{|l|}{MEF-Explore (Proposed)}                             & \multicolumn{1}{c|}{UGVs}                                          & \multicolumn{1}{c|}{\cmark}                                                      & \multicolumn{1}{c|}{2D}                                        & \multicolumn{1}{c|}{Distributed}                                & \multicolumn{1}{c|}{Entropy-field-based}                                    & \multicolumn{1}{c|}{\begin{tabular}[c]{@{}c@{}}Low-speed comm.: Continuous\\ High-speed comm.: Opportunistic\end{tabular}}   \\ \hline
    \multicolumn{1}{|l|}{$\text{IR}^2$ \cite{IR2}}                           & \multicolumn{1}{c|}{UGVs}                                          & \multicolumn{1}{c|}{\cmark}                                                      & \multicolumn{1}{c|}{2D}                                        & \multicolumn{1}{c|}{Decentralized}                              & \multicolumn{1}{c|}{Deep-RL-based}                                          & \multicolumn{1}{c|}{Event-based}                                                                                             \\ \hline
    \multicolumn{1}{|l|}{MOCHA \cite{MOCHA}}                           & \multicolumn{1}{c|}{UGVs, UAVs}                              & \multicolumn{1}{c|}{\cmark}                                                & \multicolumn{1}{c|}{2D}                                  & \multicolumn{1}{c|}{Distributed}                          & \multicolumn{1}{c|}{Waypoint-based}                                   & \multicolumn{1}{c|}{Opportunistic}                                                                                     \\ \hline
    \multicolumn{1}{|l|}{Pursuit \cite{M-TARE}}                              & \multicolumn{1}{c|}{UGVs, UAVs}                                    & \multicolumn{1}{c|}{\cmark}                                                      & \multicolumn{1}{c|}{3D}                                        & \multicolumn{1}{c|}{Decentralized}                              & \multicolumn{1}{c|}{Hierarchical}                                           & \multicolumn{1}{c|}{Event-based}                                                                                             \\ \hline
    \multicolumn{1}{|l|}{Bartolomei's \cite{Fast-Forest}}                    & \multicolumn{1}{c|}{UAVs}                                          & \multicolumn{1}{c|}{\xmark}                                                      & \multicolumn{1}{c|}{3D}                                        & \multicolumn{1}{c|}{Decentralized}                              & \multicolumn{1}{c|}{Cost-based}                                             & \multicolumn{1}{c|}{Opportunistic}                                                                                           \\ \hline
    \multicolumn{1}{|l|}{RACER \cite{RACER}}                                 & \multicolumn{1}{c|}{UAVs}                                          & \multicolumn{1}{c|}{\cmark}                                                      & \multicolumn{1}{c|}{3D}                                        & \multicolumn{1}{c|}{Decentralized}                              & \multicolumn{1}{c|}{Hierarchical}                                           & \multicolumn{1}{c|}{Opportunistic}                                                                                           \\ \hline
    \multicolumn{1}{|l|}{DME-LC \cite{DME-LC}}                               & \multicolumn{1}{c|}{UUVs}                                          & \multicolumn{1}{c|}{\xmark}                                                      & \multicolumn{1}{c|}{3D}                                        & \multicolumn{1}{c|}{Decentralized}                              & \multicolumn{1}{c|}{Information-gain-based}                                 & \multicolumn{1}{c|}{Continuous}                                                                                              \\ \hline
    \multicolumn{1}{|l|}{Bramblett's \cite{CM-ERT}}                          & \multicolumn{1}{c|}{UGVs}                                          & \multicolumn{1}{c|}{\cmark}                                                      & \multicolumn{1}{c|}{2D}                                        & \multicolumn{1}{c|}{Decentralized}                              & \multicolumn{1}{c|}{Cost-based}                                             & \multicolumn{1}{c|}{Event-based}                                                                                             \\ \hline
    \multicolumn{1}{|l|}{Meeting-Merging-Mission \cite{MeetMergeMission}}    & \multicolumn{1}{c|}{UGVs, UAVs}                                    & \multicolumn{1}{c|}{\cmark}                                                      & \multicolumn{1}{c|}{3D}                                        & \multicolumn{1}{c|}{Decentralized}                              & \multicolumn{1}{c|}{Mission-based}                                          & \multicolumn{1}{c|}{Event-based}                                                                                             \\ \hline
    \multicolumn{1}{|l|}{MR-TopoMap \cite{MRTopo}}                           & \multicolumn{1}{c|}{UGVs}                                          & \multicolumn{1}{c|}{\cmark}                                                      & \multicolumn{1}{c|}{2D}                                        & \multicolumn{1}{c|}{Distributed}                                & \multicolumn{1}{c|}{Topology-based}                                         & \multicolumn{1}{c|}{Event-based}                                                                                             \\ \hline
    \multicolumn{1}{|l|}{GVGExp \cite{GVGExp}}                               & \multicolumn{1}{c|}{UGVs}                                          & \multicolumn{1}{c|}{\xmark}                                                      & \multicolumn{1}{c|}{2D}                                        & \multicolumn{1}{c|}{Distributed}                                & \multicolumn{1}{c|}{Topology-based}                                         & \multicolumn{1}{c|}{Event-based}                                                                                             \\ \hline
    \multicolumn{1}{|l|}{Klaesson's \cite{InterConn}}                  & \multicolumn{1}{c|}{UGVs}                                    & \multicolumn{1}{c|}{\xmark}                                                & \multicolumn{1}{c|}{2D}                                  & \multicolumn{1}{c|}{Centralized}                          & \multicolumn{1}{c|}{Cost-based}                                       & \multicolumn{1}{c|}{Opportunistic}                                                                                     \\ \hline
    \multicolumn{7}{l}{\begin{tabular}[c]{@{}l@{}}Note: UGVs stands for unmanned ground vehicles, UAVs stands for unmanned aerial vehicles, UUVs stands for unmanned underwater vehicles, and comm. is short\\ for communication. For the exploration approaches, both decentralized and distributed do not have any central authority controlling the entire system. Still, the dec-\\ entralized ones have a control center for passing some commands to robots, such as rendezvous in some methods. On the other hand, the controls of the distributed\\ methods are fully separated on each robot.\end{tabular}}
\end{tabular}
\vspace*{-4mm}
\label{table:related-works}
\end{table*}

According to the survey \cite{Survey2017} and the review \cite{Review2022}, communication for information exchange between robots is a crucial component of multi-robot exploration. Lack of good quality for such communication can result in significantly deteriorated exploration performance. Some works, such as \cite{DSE}, have considered distributed state estimation using intermittently connected multi-robot systems, allowing them to exchange information when they are close to each other. However, there is not much research investigating multi-robot exploration under communication-constrained situations to date. Some of the works presented in \cite{Survey2017} do not consider any explicit assumption regarding robot communication capability, which makes it challenging to design a suitable exploration method to match the communication conditions. Meanwhile, some centralized approaches require a base station for information relays, which can be unsuccessful when facing the issues of a single point of failure and communication overhead, especially in case of unreliable communication. Most of them, for example \cite{Explo2006, TreeExplo, TabooList, MeetOrNot}, are also mainly focused on simulation, so they have not yet devised their methods for practical real-robot deployment.

Apart from the stated works, other novel methods, \cite{MeetMergeMission, RACER, Fast-Forest, M-TARE}, also have recently been proposed, predominantly targeting unmanned aerial vehicles (UAVs). One of the works \cite{MOCHA} has also presented an exploration strategy specifically for the collaborative deployment of both UAVs and UGVs with opportunistic communication. Some techniques,  \cite{MeetMergeMission, CM-ERT, M-TARE}, tend to be semi-distributed since rounds of centralized rendezvous are mandatory for map exchange. Although in \cite{IR2}, robot rendezvous is decentralized, the exploration performance can occasionally be suboptimal since the rendezvous pattern entirely relies on what it has learned from specific training maps. Additionally, in \cite{GVGExp, DME-LC, MRTopo}, strict conditions are applied to control robots' communication manner, and these can sometimes lead to lengthy and unaccomplished exploration. Again, some methods, such as \cite{InterConn, GVGExp, DME-LC, Fast-Forest}, still concentrate exclusively on simulation so that more modifications might be needed for advanced real-world exploration tasks. We summarize the details of each method in Table \ref{table:related-works}. Note that their communication requirements are categorized similarly to those in \cite{Survey2017}. Hence, in keeping with the abovementioned gaps, these allude to the need for a distributed multi-robot exploration method for UGVs that can work efficiently within the confines of communication constraints in real-world scenarios.

Therefore, in the context of this study, we address the problem of efficiently coordinating multiple autonomous homogeneous robots to explore unknown environments under communication-constrained situations. The first goal is to ensure effective information sharing according to the available communication, as different types of information, such as robot positions and maps, necessitate varying communication capabilities. Since communication constraints can exacerbate the difficulties in exploration, the second goal is to achieve fast and successful multi-robot exploration in a distributed manner. Accomplishing these requires thorough analysis and carefully crafted strategic designs.

With these objectives in mind, we propose a Communication-Constrained \textbf{M}ulti-Robot \textbf{E}ntropy-\textbf{F}ield-Based Exploration (MEF-Explore), consisting of two key components: information-sharing and distributed exploration strategies. Firstly, we present a two-layer inter-robot communication-aware information-sharing strategy to handle unsteady communication status. We adopt the perception of dynamic graphs to portray the exploring robots as the graph nodes. Then, we consider the edge between them formed as the availability of high-speed communication, in which robots can exchange their positions and local maps. Conversely, if there is no edge linking between nodes, the robots can only communicate via low-speed communication, where they are only allowed to share their positions. The existence of the mentioned edges is based on a specific range, which will be explained later in this paper. Secondly, we introduce a novel entropy-field-based exploration strategy for robots to decide where to explore next in the unknown environment. The intuition behind this is mainly from the uncertainty affecting frontiers during exploration, which is caused by the nature of unknown areas and communication constraints. Here, we adopt the Shannon entropy introduced in \cite{Shannon-1, Shannon-2} to measure such uncertainty and integrate it with the attractive effects inspired by the potential field to construct novel reformed entropies. This way, each location in the environment is embedded with the composite of entropy and attractiveness, which substantially reflects if it is worth exploring for robots. We also intentionally formulate our entropies to activate some rendezvous at appropriate times based on each robot's individual exploration to elevate inter-robot map merging. In addition, to foster better exploration, we present a duration-adaptive goal-assigning module for evaluating the feasibility of each robot's goal candidates. For clarification, the simplified overview is illustrated in Fig. \ref{fig:exp-overview}. In summary, the main contributions of our proposed MEF-Explore are as follows:

\begin{itemize}
    \item To deal with communication constraints between robots, we propose a two-layer inter-robot communication-aware information-sharing strategy, where inter-robot map merging opportunistically happens when robots are within high-speed communication range and position exchange over low-speed communication is always possible.
    \item To handle the uncertainty of frontiers in each robot's map under communication-constrained situations, we propose a novel distributed entropy-field-based exploration strategy, combining the concepts of entropy and potential fields for efficient communication-constrained exploration. The duration-adaptive goal-assigning module is also designed to strategically control robots to explore the environment without overloading goals in a short period.
    \item We implement the MEF-Explore and conduct simulation and real-world experiments to compare its performance with other exploration methods regarding exploration time and success rate. For clarification, a video of the experiments has been attached as supplementary material.
\end{itemize}

The novelties of our proposed MEF-Explore, distinguishing it from previous works, stem directly from the aforementioned contributions in both theoretical and practical aspects. Firstly, robots using the proposed information-sharing strategy effectively share their positions continuously via low-speed communication and maps opportunistically via high-speed communication. Unlike the previous works, our novel strategy considers the different capabilities of communication between robots during exploration, as shown in Table \ref{table:related-works}.Secondly, the entropies, which play the leading role in the proposed exploration strategy, are novelly constructed by combining Shannon entropy with different forms of attractive potentials. To the best of our knowledge, our novel entropies have never been introduced before, as other existing methods are mainly based on the variants of Shannon entropy \cite{Entropy-Multi2015, Entropy-Vallve2015, Entropy-Multi2018, Entropy2020}, the compound of Shannon and Rényi entropies \cite{Entropy-Carrillo2015, Entropy-Carrillo2017}, or the Fisher information \cite{Entropy-Multi2019}. Robots using the proposed strategy have better exploration performance under communication-constrained situations. Specifically, they select goals to explore the unknown area efficiently and rendezvous to share their maps at appropriate times without predetermination.

The upcoming sections will proceed as follows: Firstly, in Section \ref{section:info-sharing}, we describe the communication constraints considered in this work and then elucidate our proposed information-sharing strategy on how each robot communicates during area exploration. The exploration problem and our proposed entropy-field-based exploration strategy will then be explained in Section \ref{section:exploration}. Section \ref{section:simulation} will provide the evaluation metrics, simulation results, and related analyses. Then, we detail how we conduct the real-world deployment and discuss the experimental results in Section \ref{section:real-world}. Section \ref{section:challenges} will discuss the challenges that may occur during simulation and real-world deployment. Finally, in Section \ref{section:conclusion}, we provide the conclusion of this paper and the potential directions for future research.

\section{Two-Layer Inter-Robot Communication-Aware Information-Sharing Strategy}
\label{section:info-sharing}

\begin{figure}
  \begin{center}
  \includegraphics[width=0.9\linewidth]{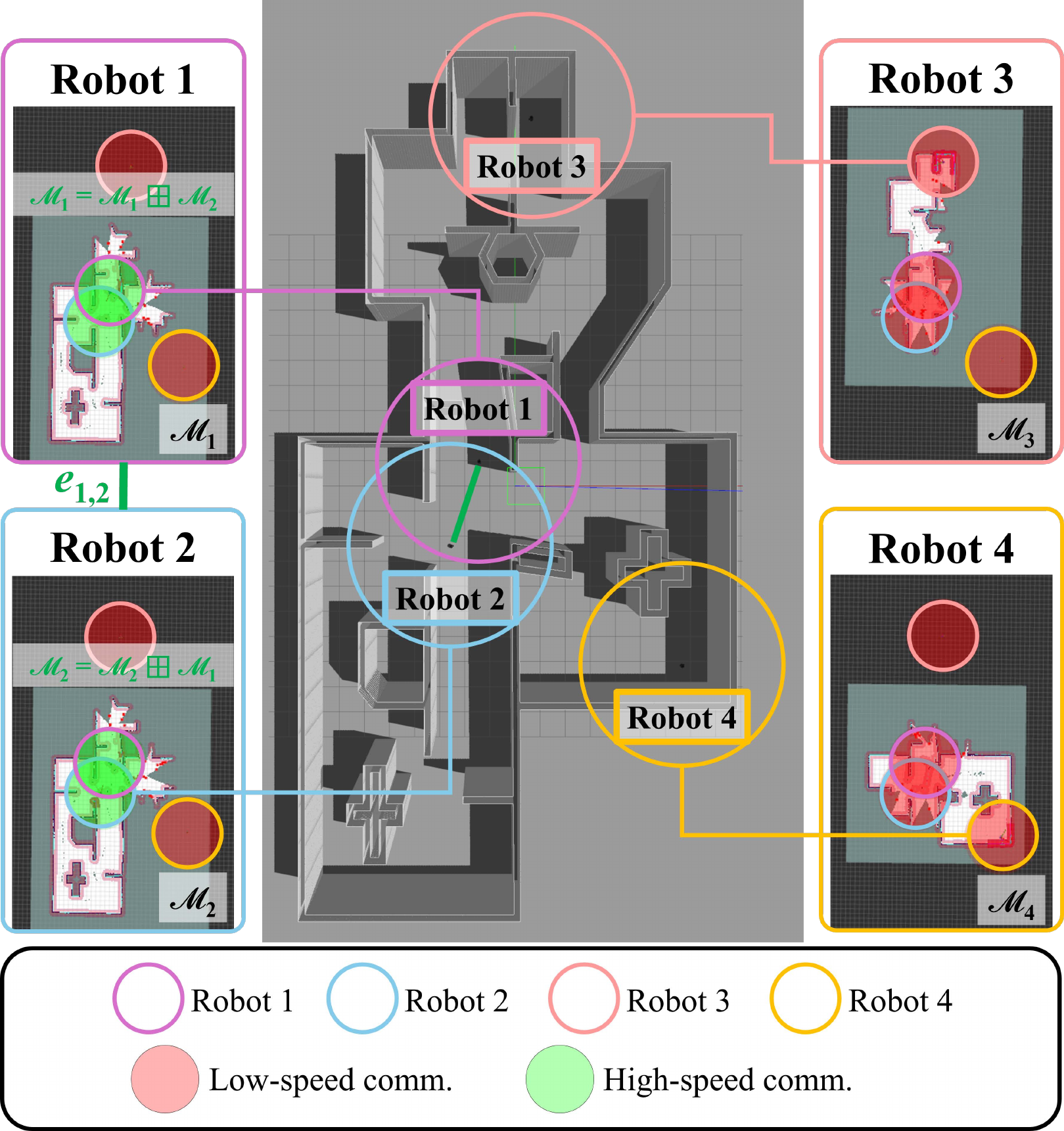}
  \setlength{\belowcaptionskip}{-10pt}
  \caption{Graphical representation of the proposed information-sharing strategy during four-robot exploration. $\mathscr{M}_i$ is each robot $i$'s local map, $\boxplus$ is the map merging operator, and the visualization of each robot is based on the current communication status between that robot and the others. Each circle represents the range of each robot's $r_\mathrm{comm}$. For instance, considering the top-left visualization for Robot 1's point of view, we can see the green-filled circle of Robot 2, which means a high-speed communication link $e_{1,2}$ is established between them. Meanwhile, Robot 3 and Robot 4 are red-filled, which means Robot 1 communicates with them via low-speed communication only.}
  \label{fig:info-sharing}
  \end{center}
\end{figure}

In this section, we provide details of the information-sharing strategy proposed for communication-constrained exploration. The first subsection will give the definition of communication constraints considered in this work. In the second subsection, we will explain the construction of the communication network, and in the third subsection, we will describe how the maps from different robots are merged based on their communication.

\subsection{Communication Constraints}
In this work, we focus on the communication constraints that prevent robots from always being able to share large amounts of information with each other during exploration. Specifically, all robot positions are shareable via ever-ready low-speed communication, but for the more extensive information like maps, they can only be shared via high-speed communication, which is available when the robots are within the specific range called $r_\mathrm{comm}$. To ensure continuous exploration without communication interruption, we propose an information-sharing strategy to control the sharing behavior of different robots' information based on the two layers of communication: low-speed and high-speed, which will be explained further in the following subsections.

\subsection{Two-Layer Communication Network}
For clarification, we represent the communication network as a dynamic graph to facilitate direct communication between robots. Formally, let $G=(V,E)$ be a graph with moving robots as nodes in $V = \{ 1, 2, ..., N_r\}$ where $N_r$ is the total number of robots. For any $i, j \in V$, each edge $e_{i,j} \in E$ connects robot $i$ and robot $j$ and acts for the high-speed communication between the corresponding robots. This edge is added to or removed from $G$ based on the following conditions:
\begin{align}
    &[d_\mathrm{cur} (i,j) < r_\mathrm{comm}] \land [e_{i,j} \notin E] \notag\\
    &\Rightarrow [E = E \cup \{ e_{i,j} \}] \land [G = (V, E)] \text{,} \\
    &[d_\mathrm{cur} (i,j) \geq r_\mathrm{comm}] \land [e_{i,j} \in E] \notag\\ &\Rightarrow [E = E \setminus \{ e_{i,j} \}] \land [G = (V, E)] \text{,}
\end{align}
where $d_\mathrm{cur}(i,j) = d(\mathrm{Pos}_\mathrm{cur}(i),\mathrm{Pos}_\mathrm{cur}(j))$ is the Euclidean distance between the robot $i$’s and robot $j$'s current positions.

Using a dynamic graph to represent the communication network provides a clear understanding of the current status of inter-robot communication during exploration. As shown in Fig. \ref{fig:info-sharing}, we can perceive the edge $e_{1,2}$ linked between Robot 1 and Robot 2, which means they are communicating at high speed. On the other hand, Robot 3 and Robot 4, which do not have any edges linked to any other robots, are under low-speed communication.

\subsection{Inter-Robot Map Merging Module}
\label{subsection:map-merging}
In this study, we utilize occupancy grid maps \cite{OGM} as the primary representation of the environment. By adhering to the widely used occupancy grid maps, we ensure that our proposed MEF-Explore is directly comparable with most existing works focused on exploration, which also use this map type. We define the robot $i$'s occupancy grid map $\mathscr{M}_i$ as follows:
\begin{equation}
    \mathscr{M}_i = \{ \mathscr{m}_{xy} \mid x \in \{1, 2, ..., N_\mathrm{row}\}, y \in \{1, 2, ..., N_\mathrm{col}\}\}\text{,}
\end{equation}
where $\mathscr{m}_{xy}$ is the grid cell at row $x$ and column $y$, and $N_\mathrm{row}$ and $N_\mathrm{col}$ are the numbers of rows and columns in the grid map, respectively. Note that the grid size depends on the map resolution. Each grid cell $\mathscr{m}_{xy}$ has a value $\mathrm{Occ}(\mathscr{m}_{xy})$ to indicate its occupancy where
\begin{itemize}
    \item $\mathrm{Occ}(\mathscr{m}_{xy}) = 0$: The grid cell is free, meaning it does not contain an obstacle.
    \item $\mathrm{Occ}(\mathscr{m}_{xy}) = 1$: The grid cell is occupied, meaning it contains an obstacle.
    \item $\mathrm{Occ}(\mathscr{m}_{xy}) = 0.5$: The grid cell is unknown.
\end{itemize}

\begin{algorithm}
\scriptsize
\DontPrintSemicolon
\SetAlgoCaptionSeparator{}
\SetKw{Continue}{continue}
  \KwInput{$N_r$: number of robots,
  \\ \qquad\quad $V = \{ i,j, ... \}$: the set of nodes (robots) of graph $G$,
  \\ \qquad\quad $E$: the set of edges of graph $G$
  \\ \qquad\quad $\mathrm{Pos}_\mathrm{cur}(i)$, $\mathrm{Pos}_\mathrm{cur}(j)$: current positions of robots $i$, $j$, and
  \\ \qquad\quad $\mathscr{M}_i, \mathscr{M}_j$: local maps of robots $i$, $j$}
  
  \KwOutput{$\mathscr{M}_i, \mathscr{M}_j$: local maps of robots $i$, $j$ after merging}
  
  \While{the exploration has not ended}{
  
  \For{any robots $i,j \in V$}
  {
    \uIf{$d_\mathrm{cur} (i,j) < r_\mathrm{comm}$ and $e_{i,j} \notin E$}{
    
        $E = E \cup \{ e_{i,j} \}$ 

        $G = (V, E)$
        
        $\mathscr{M}_i = \mathscr{M}_i \boxplus \mathscr{M}_j$ and $\mathscr{M}_j = \mathscr{M}_j \boxplus \mathscr{M}_i$
        
        }

    \uElseIf{$d_\mathrm{cur} (i,j) \geq r_\mathrm{comm}$ and $e_{i,j} \in E$}
    {
    
        $E = E \setminus \{ e_{i,j} \}$

        $G = (V, E)$
    
    }
    \Else{
        \Continue
    }
  }
  }
  
\KwRet{$\mathscr{M}_i, \mathscr{M}_j$}
\caption{Inter-robot map merging module}
\label{algo:comm}
\end{algorithm}

Moving on, we will delve into how maps of different robots are merged under communication-constrained situations. Let $\mathscr{M}_i$ and $\mathscr{M}_j$ be the local maps of robot $i$ and robot $j$, respectively. We define a new operator $\boxplus: \mathbb{R}^2 \times \mathbb{R}^2 \to \mathbb{R}^2$ to represent map merging, i.e., $\mathscr{M}_i \boxplus \mathscr{M}_j$ means merging $\mathscr{M}_j$ into $\mathscr{M}_i$. It is important to note the operator $\boxplus$ is not explicitly defined in this module, as its formulation depends on the specific map merger employed. This approach ensures the flexibility of this module, allowing it to accommodate a variety of map mergers without being limited by a predefined operator. We present some useful properties in Appendix \ref{appendix:prop}.

Within $r_\mathrm{comm}$ where high-speed communication is available between any robot $i$ and robot $j$, the edge $e_{i,j}$ exists in $G$. So, their maps will be updated with the $\boxplus$ product of both maps, which means
\begin{equation}
   \mathscr{M}_i = \mathscr{M}_i \boxplus \mathscr{M}_j \text{ and } \mathscr{M}_j = \mathscr{M}_j \boxplus \mathscr{M}_i \text{.}
\end{equation}
On the other hand, if the distance between robot $i$ and robot $j$ is beyond $r_\mathrm{comm}$, $e_{i,j}$ is removed. This transition signifies a shift in communication from high speed to low speed, where only the positions of the robots are shareable. In summary, inter-robot map merging is consistently initiated whenever high-speed communication is available based on the distance between them. The process is presented in Algorithm \ref{algo:comm}. 

It is to be noted that this strategy enables each robot to opportunistically share its map information without the need for predetermined events. With reference to Fig. \ref{fig:info-sharing}, Robot 1 and Robot 2 merge their maps when they are under high-speed communication. Also, based on the properties of our operator $\boxplus$, if a group of robots assembles as a connected subgraph, we recognize that their local maps are identical, even though some of them may not be directly linked together in the subgraph.

\section{Entropy-Field-Based Exploration Strategy}
\label{section:exploration}

Before introducing the proposed exploration strategy, we first define the exploration problem in the context of this work, which involves coordinating a group of $N_r$ homogeneous robots with identical capabilities and sensor range $d_s$ to explore an unknown environment efficiently under communication-constrained situations. As explained in Section \ref{section:info-sharing}, each robot has an occupancy grid map that is the outcome of the map from its individual exploration, merged with maps received from other robots when high-speed communication is available. Resultantly, different robots will not have the same maps during exploration. Therefore, the main objective is for at least one robot to have a fully explored map without any frontiers left using minimum time spent. Since frontiers are the boundary between explored and unexplored areas, their absence implies that all grid cells in the map have been classified as either free or occupied, with no cells remaining unknown, meaning all grid cells have an occupancy value equal to either 0 or 1, not 0.5. Hence, formally, the corresponding optimization problem is as follows:
\begin{equation}
    \text{Minimize } T \text{ subject to } \sum_{i=1}^{N_r} \gamma_i(T) \geq 1\text{,}
\end{equation}
where $\gamma_i(T)$ is the binary variable indicating whether robot $i$ achieves a fully explored map at time $T$. We have
\begin{equation}
    \gamma_i(T) = 1 \iff \forall \mathscr{m}_{xy} \in \mathscr{M}_i, \mathrm{Occ}(\mathscr{m}_{xy}) \neq 0.5\text{.}
\end{equation}
If otherwise, we have $\gamma_i(T) = 0$, meaning some grid cells in its map are still unknown.

In the following subsections, the proposed exploration strategy will be explained thoroughly, including components of our novel entropies and the module for the robot's goal assignment. 

\begin{figure*}
  \begin{center}
  \includegraphics[width=0.75\linewidth]{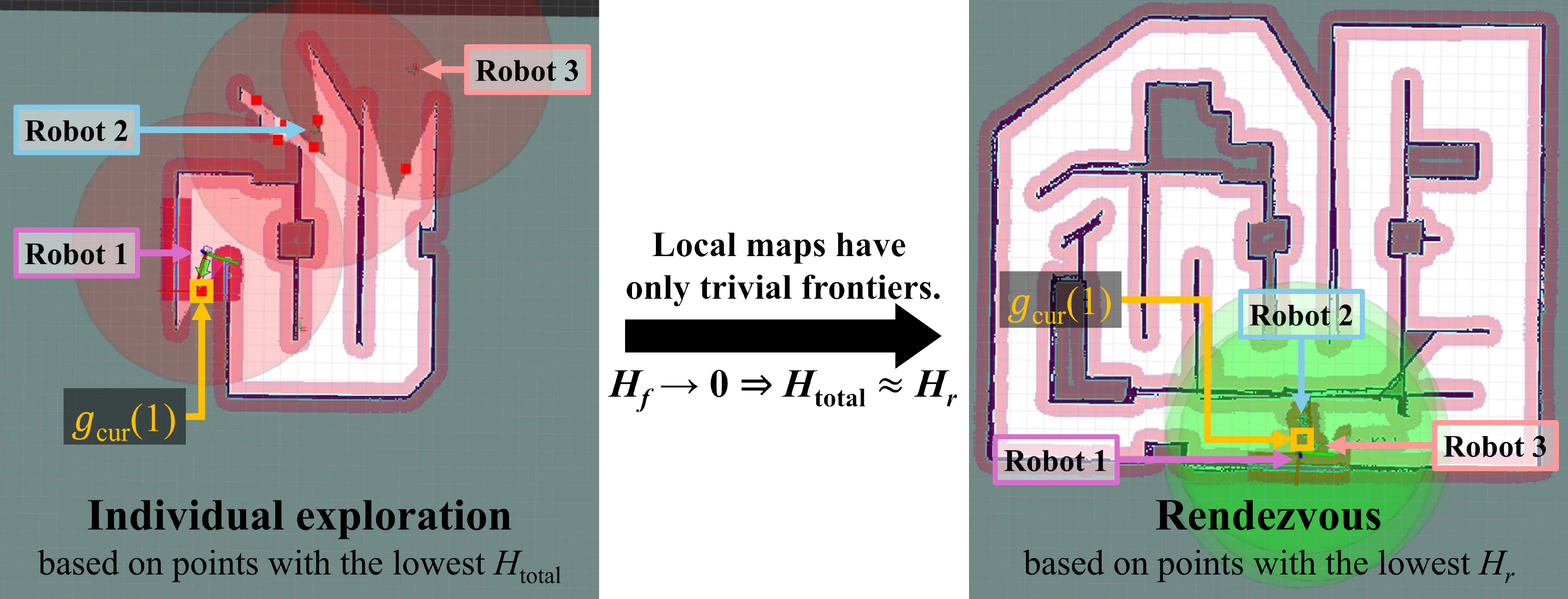}
  \setlength{\belowcaptionskip}{-10pt}
  \caption{Graphical representation of three-robot exploration by the proposed entropy-field-based exploration strategy from Robot 1's point of view when robots have individual exploration and then rendezvous. Red squares are frontier centroids, and ${g}_\mathrm{cur}(1)$ is the current goal of Robot 1.}
  \label{fig:exploration}
  \end{center}
\end{figure*}

\subsection{Total Entropy}
Initially, the continuity-based clustering method \cite{MMPF} is applied to the detected frontiers to generate the frontier centroids, which are the central points of the clusters obtained from clustering the boundaries between explored and unexplored areas. Then, the entropy will be calculated accordingly. We define the total entropy of robot $i$ for centroid $q$ at point $p$ as follows:
\begin{equation} \label{eq:H_total}
    H_\mathrm{total}(i,p,q) = H_f(p,q) + H_r(i,p)\text{,}
\end{equation}
where $H_f(p,q)$ is the entropy of the frontier centroid $q$ at point $p$ and $H_r(i,p)$ is the entropy of robot $i$ at point $p$. These two entropies will be described in detail in Section \ref{subsection:H_f} and Section \ref{subsection:H_r}.

Our motivation behind eq. (\ref{eq:H_total}) is to reform the standard information entropy by integrating it with the potential field to be the core component of the proposed exploration strategy. In particular, entropy can efficiently reflect the level of uncertainty, so we utilize it for multi-robot exploration with communication constraints. At the same time, the potential field can produce a powerful attractiveness for robots to explore environments collaboratively. Our novel reformed entropies concentrate on the frontiers of environments during exploration. So, this formation makes our entropies differ from existing entropy forms aforementioned in Section \ref{section:intro}. 

\begin{algorithm}
\scriptsize
\DontPrintSemicolon
\SetAlgoCaptionSeparator{}
  \KwInput{$p$, $q$: current positions of point $p$ and centroid $q$, \\
  \qquad\quad $\mathrm{Pos}_\mathrm{cur}(i)$: current position of robot $i$, \\ \qquad\quad $\mathrm{Pos}_\mathrm{pre}(i)$: previous position of robot $i$ before assigning the goal, \\
  \qquad\quad $C_q$: the total number of frontiers in the cluster that $q$ belongs to, 
  \\ \qquad\quad $N_C$: the total number of centroids, and\\
  \qquad\quad $N_r$: number of robots}
  \KwOutput{${g}_\mathrm{cur}(i)$: current goal for robot $i$}
  
  \While{the exploration has not ended}{
  
  \For{$l \leftarrow  1$ \textbf{to} $8$ (number of neighbors)}
  {
    $H_f = 0$, $H_r = 0$
    
    // calculate the entropy of frontiers
    
    \For{$m  \leftarrow  1$ \textbf{to} $C_q$}{
        $H_f -= \frac{k_f(N_r)C_q}{d^*(p,q)} \log (N_C C_q)$
        
        }
        
    // calculate the entropy of robots
    
    \For{$n \leftarrow 1$ \textbf{to} $N_r$}{
        \If{$d_\mathrm{cur}(i,p) < d_s$} {
        $H_r += \frac{k_r \sigma_r N_r}{d_\mathrm{cur}(i,p) - d_s}\log N_r + \chi^p_i(\alpha,\sigma_d)$

        }
    }
    
    // calculate the total entropy
    
    $H_\mathrm{total} = H_f + H_r$
    }

  // assign the robot's goal

  $\mathrm{Goal}_i = ({g}_\mathrm{cur}(i), {g}_\mathrm{new}(i))$
  
  \For{$p$ in robot $i$'s local map}{
       \If{$p$ has the minimum $H_\mathrm{total}(i,p,q)$}{

            ${g}_\mathrm{new}(i) = p$
       
       }
       \If{$\mathrm{Pos}_\mathrm{cur}(i) = {g}_\mathrm{cur}(i)$ or $t(\mathrm{Pos}_\mathrm{pre}(i) \to {g}_\mathrm{cur}(i)) \geq t_\mathrm{ref}(i)$}{

            ${g}_\mathrm{cur}(i) = {g}_\mathrm{new}(i)$
        
       }
    }
    }
\KwRet{${g}_\mathrm{cur}(i)$}
\caption{Entropy-field-based exploration strategy}
\label{algo:exp}
\end{algorithm}

Based on our proposed approach illustrated in Fig. \ref{fig:exploration}, in normal circumstances, robots explore the environment depending on $H_\mathrm{total}$, which includes $H_f$ and $H_r$. Since, in general, $H_f \gg H_r$, $H_f$ plays a dominant role in $H_\mathrm{total}$ values. Consequently, we have robots traveling to the goals mainly based on $H_f$. However, in addition to inter-robot map merging, which arises opportunistically based on the proposed information-sharing strategy, we intend to enhance exploration performance by having robots meet to exchange maps at the appropriate time, i.e., when each of them finished individual exploration. Hence, rooted in our well-constructed entropy, robot rendezvous will be spontaneously triggered by the effect of $H_r$ when the robots can detect just a small number of frontiers in their local map during exploration. In other words, our rendezvous mechanism is designed to enable robots to merge their maps more frequently, complementing the usual map merging that occurs opportunistically. We provide formal proof of this rendezvous manner in Appendix \ref{appendix:rendezvous}. 

While $H_\mathrm{total}$ is being computed, our proposed duration-adaptive goal-assigning module will also determine which points should be the goal candidates for each robot. For better understanding, we present the overall process in Algorithm \ref{algo:exp}. In the following subsections, we will detail all the components of the proposed exploration strategy.

\subsection{Entropy of Frontiers}
\label{subsection:H_f}
We integrate the effects of the Shannon entropy and the attractive potential to encourage robots to move toward frontier centroids. First, we consider the Shannon entropy of frontiers $\mathbb{H}_f$ as follows:
\begin{align} \label{eq:Shannon_H_f}
    \mathbb{H}_f &= -\sum_{m=1}^{N_f} \mathrm{Pr}(f_m) \log \mathrm{Pr} (f_m) \notag\\
    &= N_f \cdot \frac{1}{N_f} \bigg(-\log \bigg( \frac{1}{N_f} \bigg) \bigg) = \log N_f\text{,}
\end{align}
where $(f_m)_{m=1}^{N_f}$ are the frontiers, $\mathrm{Pr}(f_m)$ is probability that $f_m$ is selected as the robot's goal, and $N_f$ is the total number of frontiers.
We then formulate the entropy of the frontier centroid $q$ at point $p$ by incorporating the attractive potential as a multiplicative factor preceding $\mathbb{H}_f$ as follows:
\begin{equation} \label{eq:H_f}
    H_f(p,q) = \text{\small $-\frac{k_f(N_r)C_q}{d^*(p,q)} \mathbb{H}_f$} = \text{\small $-\frac{k_f(N_r)C_q}{d^*(p,q)} \log (N_C C_q)$}\text{,}
\end{equation}
where $k_f(N_r)$ is a scale factor based on the number of robot $N_r$, $C_q$ is the total number of frontiers in the cluster that centroid $q$ belongs to, $d^*$ is the 8-sector modified wavefront distance presented in \cite{MWF-CN}, and $N_C$ is the total number of centroids. Note that a special case of $\mathbb{H}_f = \log N_f$ from eq. (\ref{eq:Shannon_H_f}), which is $\mathbb{H}_f = \log (N_C C_q)$, is used in eq. (\ref{eq:H_f}) because we aim to make $H_f(p,q)$ more distinctly reflect each centroid $q$.

\subsection{Entropy of Robots}
\label{subsection:H_r}
In addition to the entropy of frontiers, we also formulate the entropy of robots by considering each robot's Shannon entropy. In addition, this entropy also induces robots to rendezvous if only a deficient number of frontier centroids can be detected on their local maps, as proven in Appendix \ref{appendix:rendezvous}. So, we first consider the Shannon entropy of robots $\mathbb{H}_r$ as follows:
\begin{align}
    \mathbb{H}_r &= -\sum_{m=1}^{N_r} \mathrm{Pr}(r_m) \log \mathrm{Pr} (r_m) \notag\\
    &= N_r \cdot \frac{1}{N_r} \bigg(-\log \bigg( \frac{1}{N_r} \bigg) \bigg) = \log N_r\text{,}
\end{align}
where $(r_m)_{m=1}^{N_r}$ are the robots, $\mathrm{Pr}(r_m)$ is probability that $r_m$ is selected, and $N_r$ is the total number of robots. Here, as we mainly focus on the surrounding area within each robot's sensor range (e.g., laser scan from 2D LiDAR), we then define the entropy of robot $i$ at point $p$ as follows:

\begin{align} \label{eq:H_r}
    H_r(i,p) &=
    \left\{
    \begin{array}{ll}
        \frac{k_r \sigma_r N_r}{d_\mathrm{cur}(i,p) - d_s}\mathbb{H}_r + \chi^p_i(\alpha,\sigma_d)&d_\mathrm{cur}(i,p) < d_s\\
        0 & \text{otherwise,}\\
    \end{array} 
    \right. \notag \\
    & =
    \left\{
    \begin{array}{ll}
        \text{\footnotesize $\frac{k_r \sigma_r N_r}{d_\mathrm{cur}(i,p) - d_s}\log N_r + \chi^p_i(\alpha,\sigma_d)$} &d_\mathrm{cur}(i,p) < d_s\\
        0 & \text{otherwise,}\\
    \end{array} 
    \right.
\end{align}
where $k_r$ is the scale factor, $\sigma_r$ is the relaxation distance, $N_r$ is the number of robots, $d_\mathrm{cur}(i,p) = d(\mathrm{Pos}_\mathrm{cur}(i),p)$ is the Euclidean distance between the robot $i$’s current position and point $p$, and $d_s$ is the sensor range. Note that since fluctuations can help reduce the local optima issues that make robots stuck during exploration, $\chi^p_i(\alpha,\sigma_d)$, which is the colored noise with noise color $\alpha$ and variance $\sigma_d$, is added to $H_r(i,p)$ in the same manner as the one for robots' potential in the previous work \cite{MWF-CN}.

\subsection{Duration-Adaptive Goal-Assigning Module}
In some existing exploration approaches such as \cite{MWF-CN}, \cite{MMPF}, \cite{Bench}, the goals for robots are always calculated and unceasingly assigned to them while exploring. This practice sometimes causes robots to be stuck as the generic navigation stack is not designed to handle many concurrent goals simultaneously. On the other hand, if the new goal is given just when the robots have completely arrived at the previously set goal, the exploration will be sub-optimal as there can be other better choices of goals that robots find during the travel, but they are not assigned. Therefore, we present a duration-adaptive goal-assigning module to cope with the mentioned situations. Let $\mathrm{Goal}_i = ({g}_\mathrm{cur}(i), {g}_\mathrm{new}(i))$ be a sequence of candidates for current and next goals of robot $i$, respectively. Note that unless the exploration has ended, ${g}_\mathrm{new}(i)$ is always available since we have
\begin{equation} \label{eq:g_new}
    {g}_\mathrm{new}(i) = \argmin_p H_\mathrm{total}(i,p,q)\text{.}
\end{equation}
We assign ${g}_\mathrm{cur}(i) = {g}_\mathrm{new}(i)$ when at least one of the following conditions is met:
\begin{align}
    &\mathrm{Pos}_\mathrm{cur}(i) = {g}_\mathrm{cur}(i)\text{,}\\
    &t(\mathrm{Pos}_\mathrm{pre}(i) \to {g}_\mathrm{cur}(i)) \geq t_\mathrm{ref}(i)\text{,}
\end{align}
where $\mathrm{Pos}_\mathrm{cur}(i)$ is the current position of robot $i$, $\mathrm{Pos}_\mathrm{pre}(i)$ is the previous position of robot $i$ before assigning ${g}_\mathrm{cur}(i)$, $t(\mathrm{Pos}_\mathrm{pre}(i) \to {g}_\mathrm{cur}(i))$ is the duration that robot $i$ spent to travel from $\mathrm{Pos}_\mathrm{pre}(i)$ to ${g}_\mathrm{cur}(i)$, and $t_\mathrm{ref}(i)$ is the reference duration. Consider
\begin{align}
    t(\mathrm{Pos}_\mathrm{pre}(i) \to {g}_\mathrm{cur}(i)) &\geq \frac{\mathrm{TravelDist}(\mathrm{Pos}_\mathrm{pre}(i), {g}_\mathrm{cur}(i))}{v_\mathrm{max}(i)} \notag\\
    &\geq \frac{k_\mathrm{ref}d(\mathrm{Pos}_\mathrm{pre}(i), {g}_\mathrm{cur}(i))}{v_\mathrm{max}(i)}\text{,}
\end{align}
where $\mathrm{TravelDist}(\mathrm{Pos}_\mathrm{pre}(i), {g}_\mathrm{cur}(i))$ is the function that calculates the actual distance that robot $i$ traveled from $\mathrm{Pos}_\mathrm{pre}(i)$ to ${g}_\mathrm{cur}(i)$, $v_\mathrm{max}(i)$ is the maximum velocity of robot $i$, $k_\mathrm{ref} < 1$ is the scale factor, and $d(\mathrm{Pos}_\mathrm{pre}(i), {g}_\mathrm{cur}(i))$ is the Euclidean distance between $\mathrm{Pos}_\mathrm{pre}(i)$ and ${g}_\mathrm{cur}(i)$. Hence, we select
\begin{equation} \label{eq:t_ref}
    t_\mathrm{ref}(i) = \frac{k_\mathrm{ref}d(\mathrm{Pos}_\mathrm{pre}(i), {g}_\mathrm{cur}(i))}{v_\mathrm{max}(i)} \text{.}
\end{equation}

Additionally, we would like to highlight that other existing goal-assignment strategies differ from our proposed one in terms of purposes and formulation. For example, some methods \cite{Goal-exp2020, Goal2023} leverage frontiers to form graphs to decide which goals should be assigned among all the robots. Some have purpose-built goal allocation that also might not be suitable in communication-constrained situations, such as generating safe trajectories \cite{Goal-Safe2020}, preventing robots from sticking to unapproachable goals \cite{TempRRT}, and matching only among an equal amount of robots and goals within the communication range \cite{Goal2024}.

\section{Simulation}
\label{section:simulation}

\begin{figure}
     \centering
     \begin{subfigure}[b]{0.322\linewidth}
         \centering
         \includegraphics[width=\linewidth]{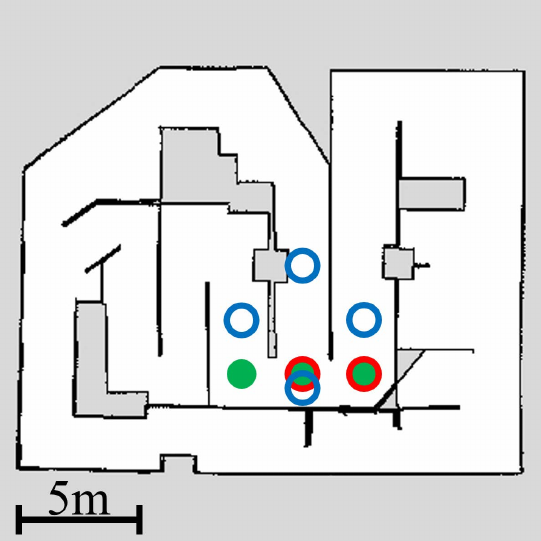}
         \caption{Map 1}
         \label{fig:map1}
     \end{subfigure}
     \begin{subfigure}[b]{0.6\linewidth}
         \centering
         \includegraphics[width=\linewidth]{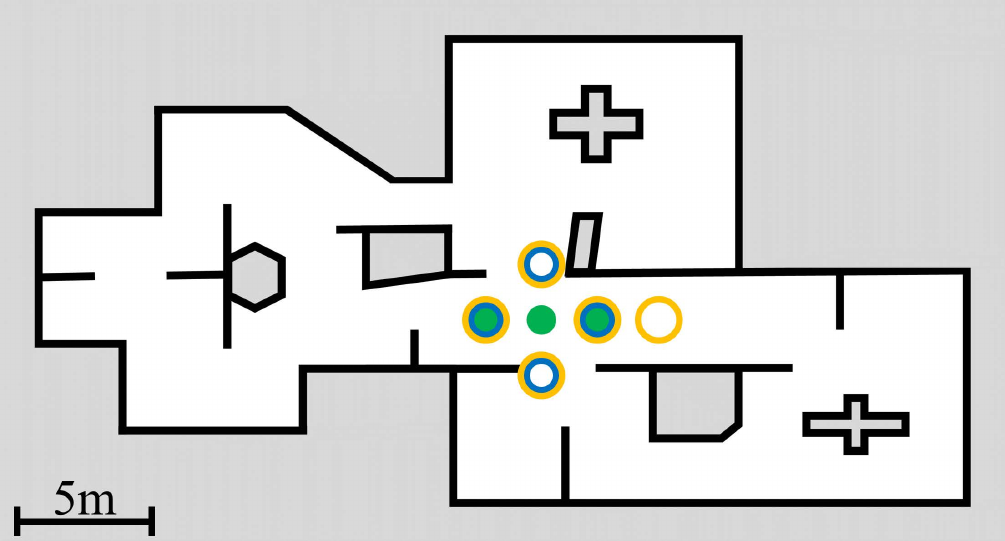}
         \caption{Map 2}
         \label{fig:map2}
     \end{subfigure}
     \begin{subfigure}[b]{\linewidth}
         \centering
         \includegraphics[width=\linewidth]{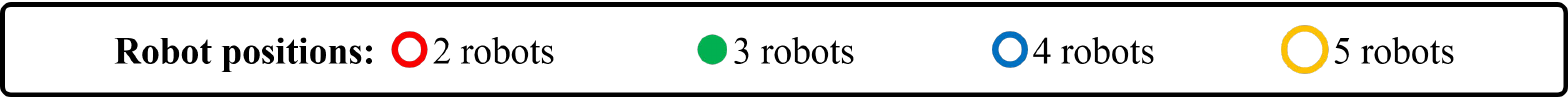}
     \end{subfigure}
        \caption{Environments for simulation. The red-outlined, green-filled, blue-outlined, and yellow-outlined circles indicate initial positions for two-robot, three-robot, four-robot, and five-robot exploration, respectively.}
        \label{fig:maps}
\end{figure}

\begin{figure*}
     \centering
     \begin{subfigure}[b]{0.9\linewidth}
         \centering
         \includegraphics[width=\linewidth]{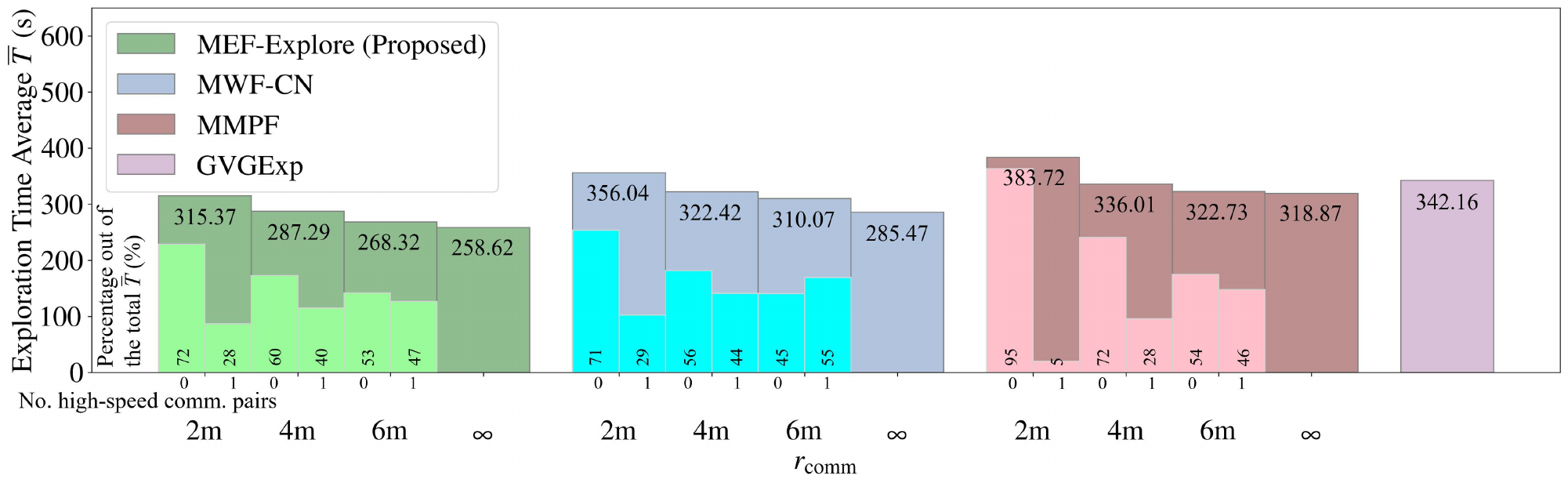}
         \caption{$\bar{T}$ (Map 1: 2 robots)}
         \label{fig:map1_2robots_Ttotal}
     \end{subfigure}
     \begin{subfigure}[b]{0.9\linewidth}
         \centering
         \includegraphics[width=\linewidth]{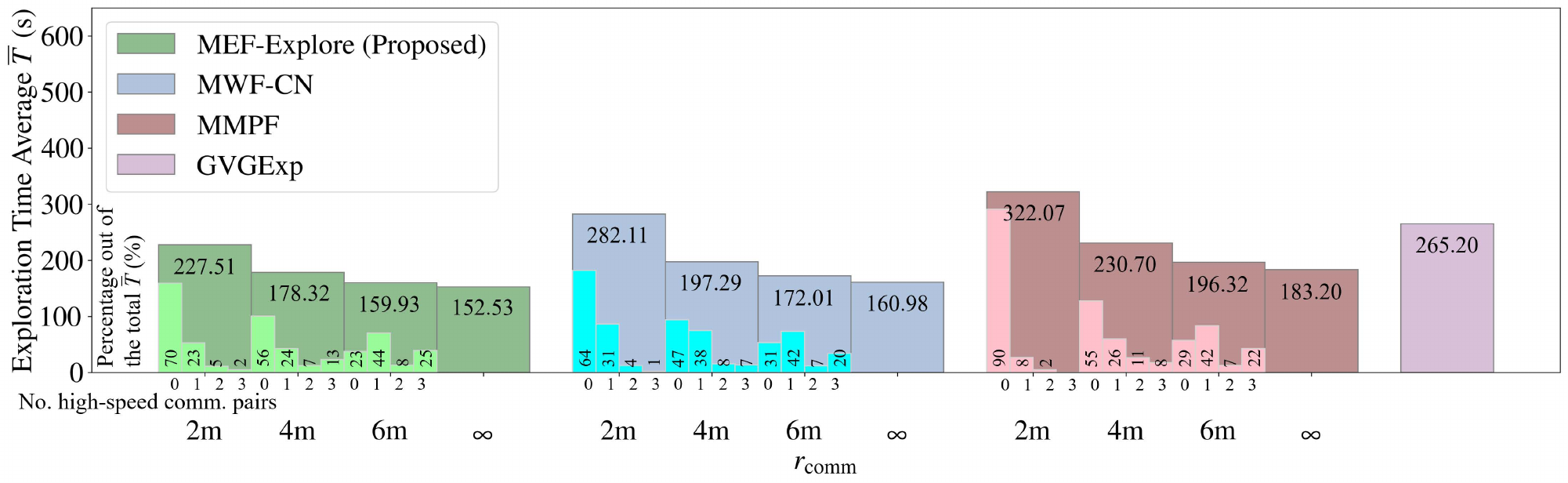}
         \caption{$\bar{T}$ (Map 1: 3 robots)}
         \label{fig:map1_3robots_Ttotal}
     \end{subfigure}
     \begin{subfigure}[b]{0.9\linewidth}
         \centering
         \includegraphics[width=\linewidth]{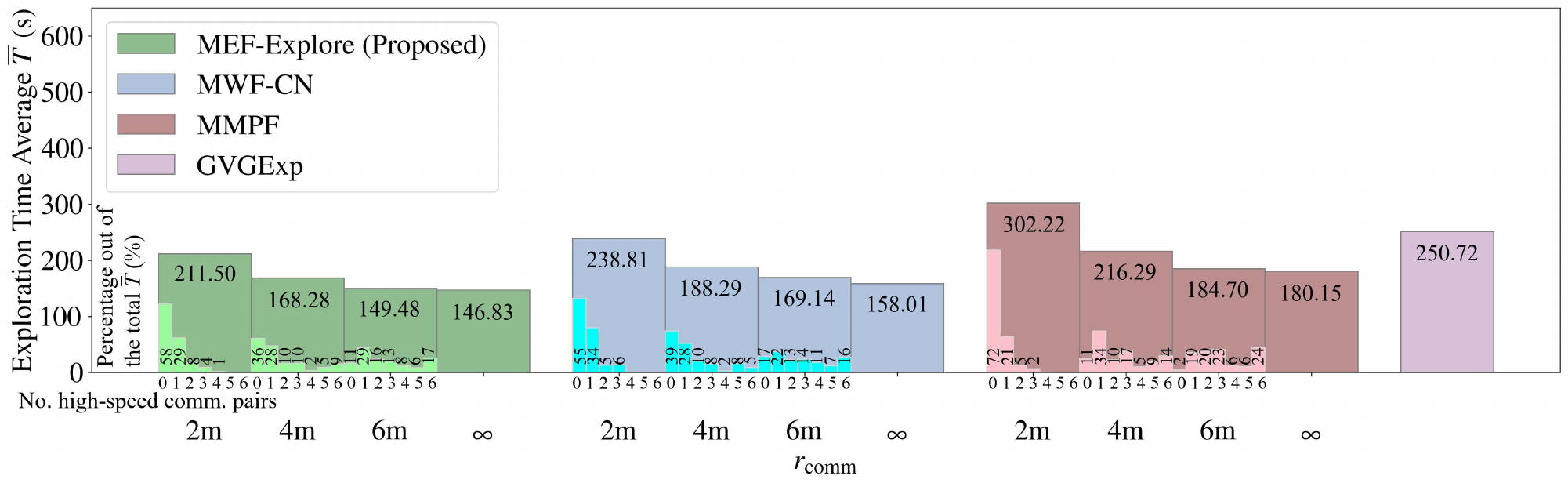}
         \caption{$\bar{T}$ (Map 1: 4 robots)}
         \label{fig:map1_4robots_Ttotal}
     \end{subfigure}
     \begin{subfigure}[b]{0.45\linewidth}
         \centering
         \includegraphics[width=\linewidth]{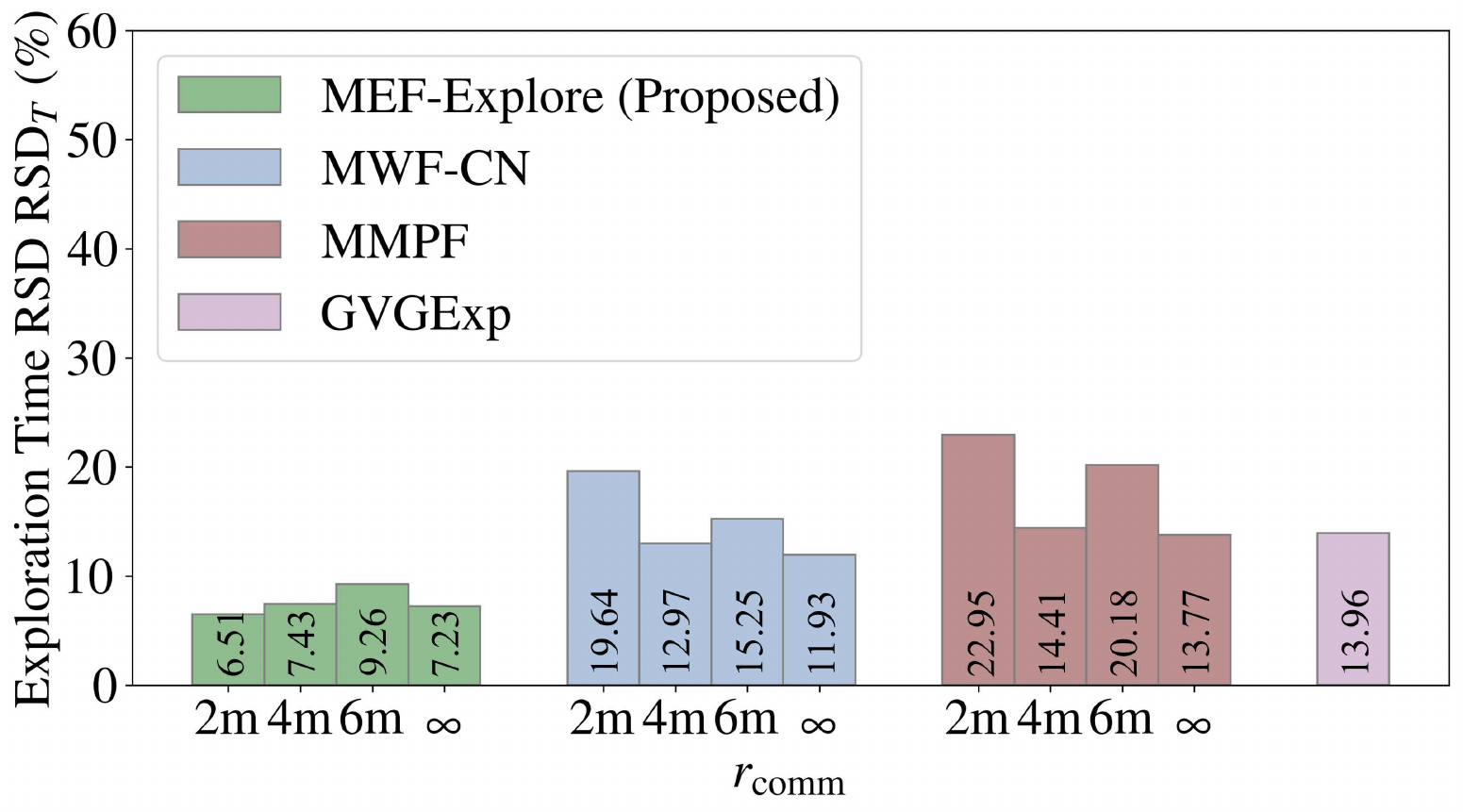}
         \caption{$\mathrm{RSD}_T$ (Map 1: 2 robots)}
         \label{fig:map1_2robots_RSD}
     \end{subfigure}
     \begin{subfigure}[b]{0.45\linewidth}
         \centering
         \includegraphics[width=\linewidth]{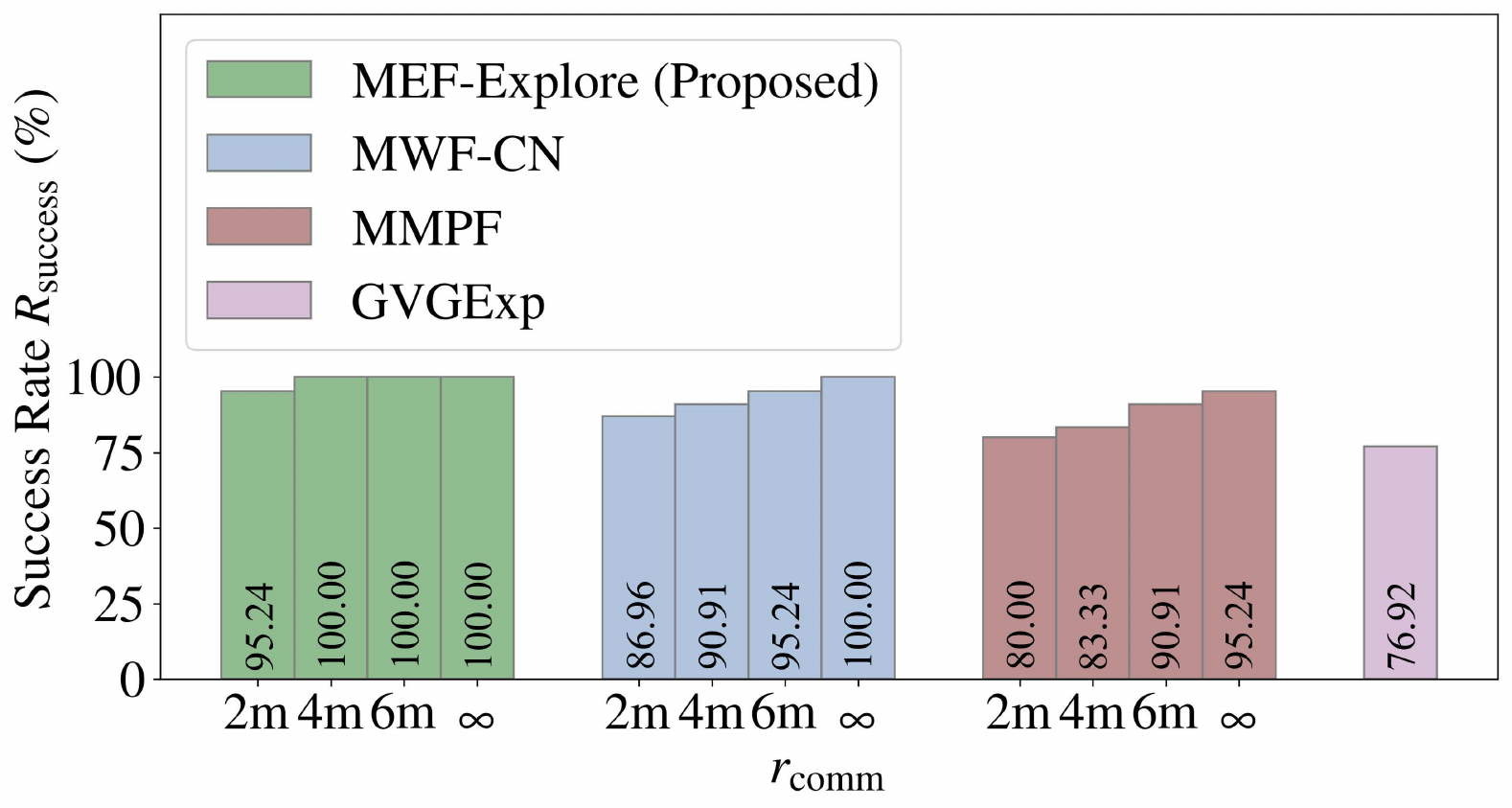}
         \caption{$R_\mathrm{Success}$ (Map 1: 2 robots)}
         \label{fig:map1_2robots_Success}
     \end{subfigure}
        \caption{Simulation results on Map 1 of our proposed MEF-Explore, the MWF-CN, the MMPF, and the GVGExp}
        \label{fig:map1_sim_results}
\end{figure*}

\begin{figure*}
    \centering
    \ContinuedFloat
    \begin{subfigure}[b]{0.45\linewidth}
         \centering
         \includegraphics[width=\linewidth]{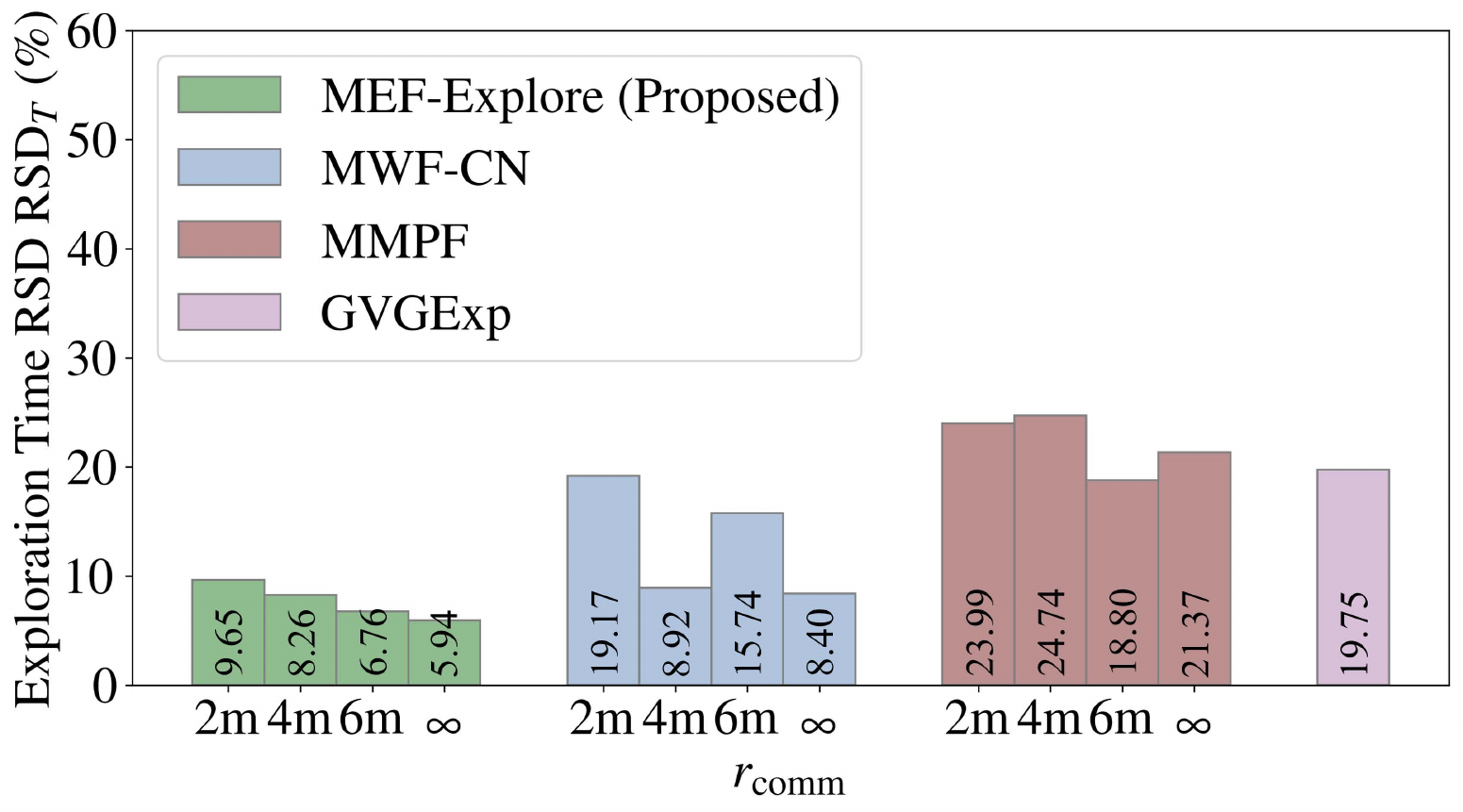}
         \caption{$\mathrm{RSD}_T$ (Map 1: 3 robots)}
         \label{fig:map1_3robots_RSD}
     \end{subfigure}
     \begin{subfigure}[b]{0.45\linewidth}
         \centering
         \includegraphics[width=\linewidth]{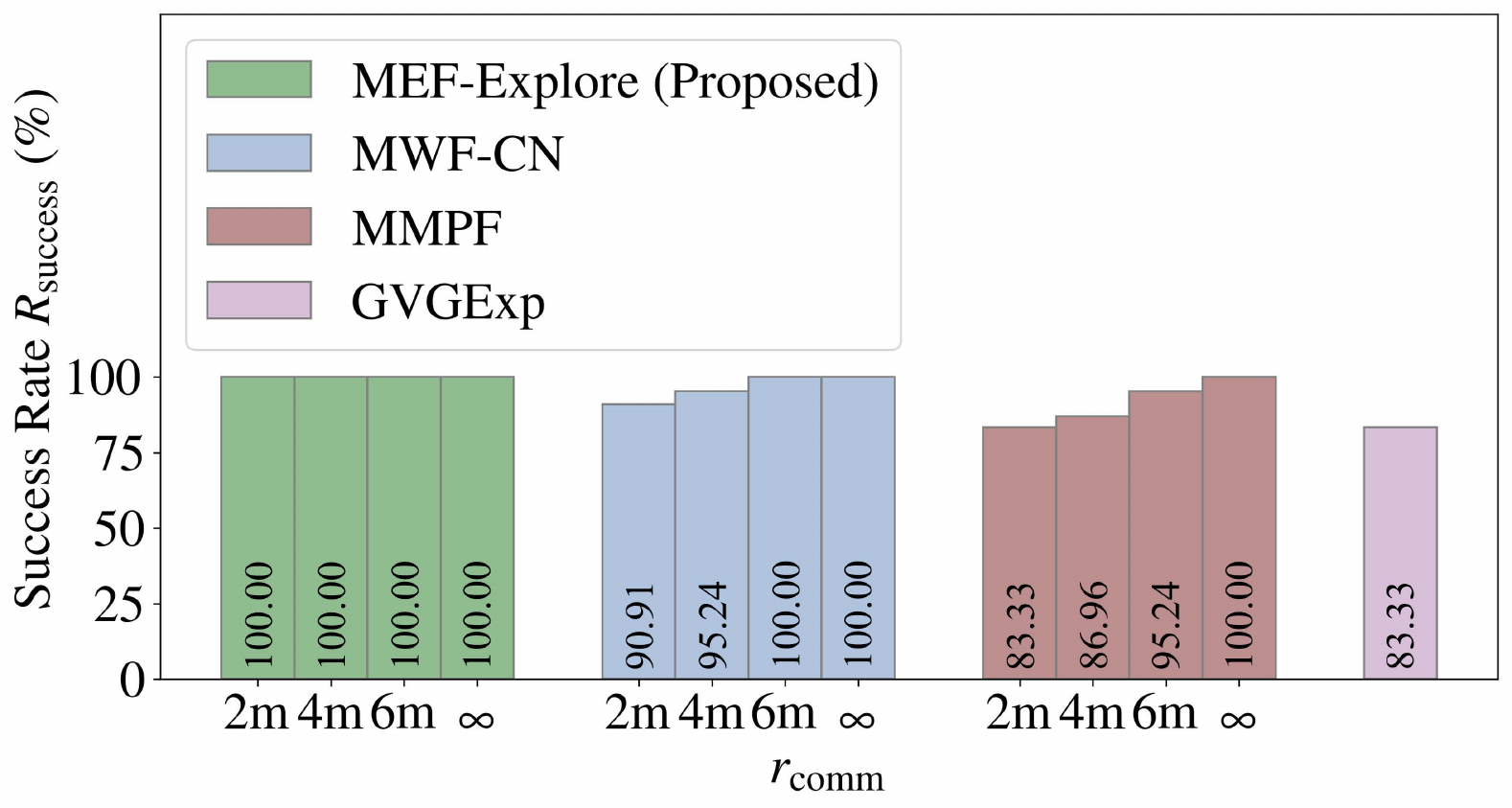}
         \caption{$R_\mathrm{Success}$ (Map 1: 3 robots)}
         \label{fig:map1_3robots_Success}
     \end{subfigure}
     \begin{subfigure}[b]{0.45\linewidth}
         \centering
         \includegraphics[width=\linewidth]{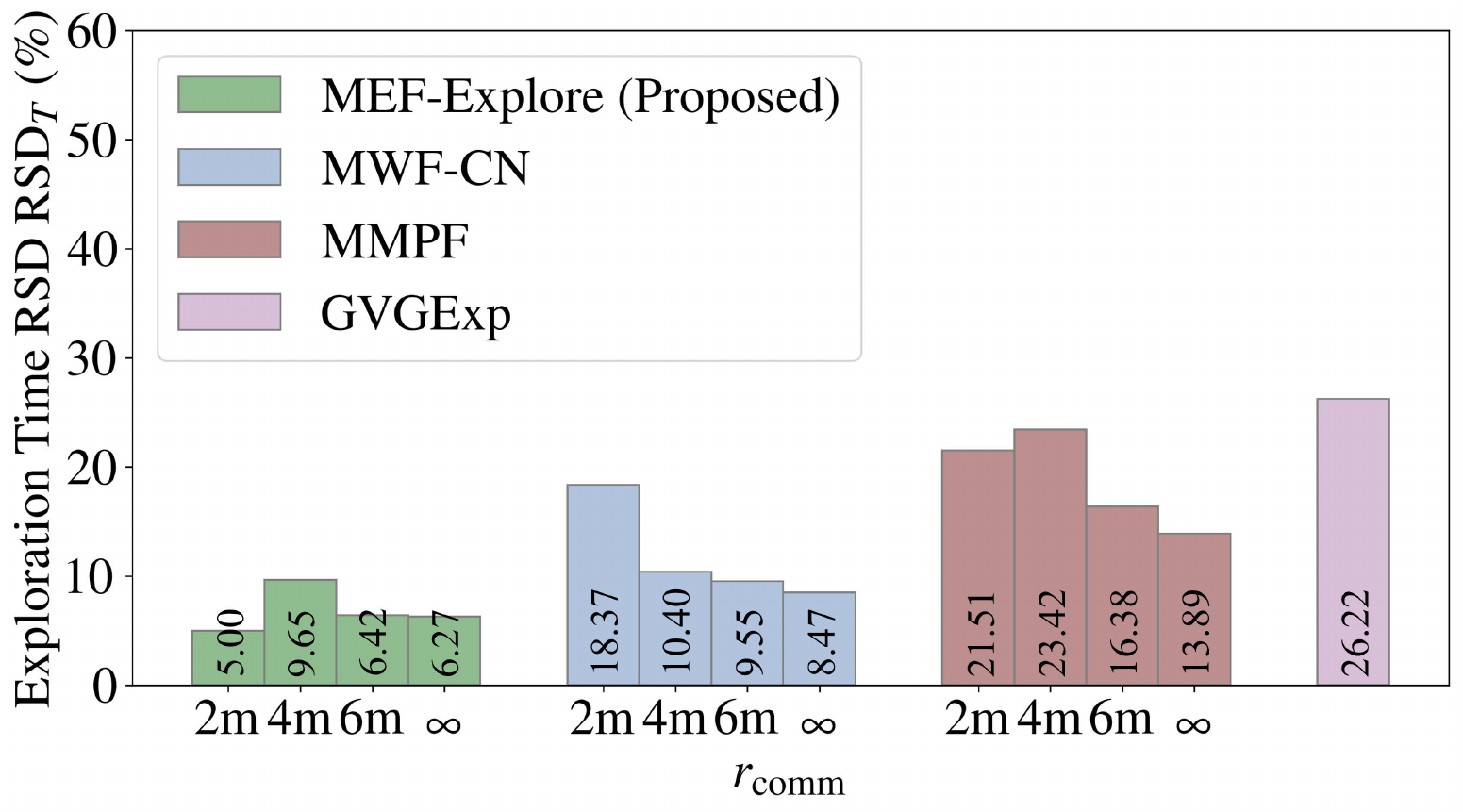}
         \caption{$\mathrm{RSD}_T$ (Map 1: 4 robots)}
         \label{fig:map1_4robots_RSD}
     \end{subfigure}
     \begin{subfigure}[b]{0.45\linewidth}
         \centering
         \includegraphics[width=\linewidth]{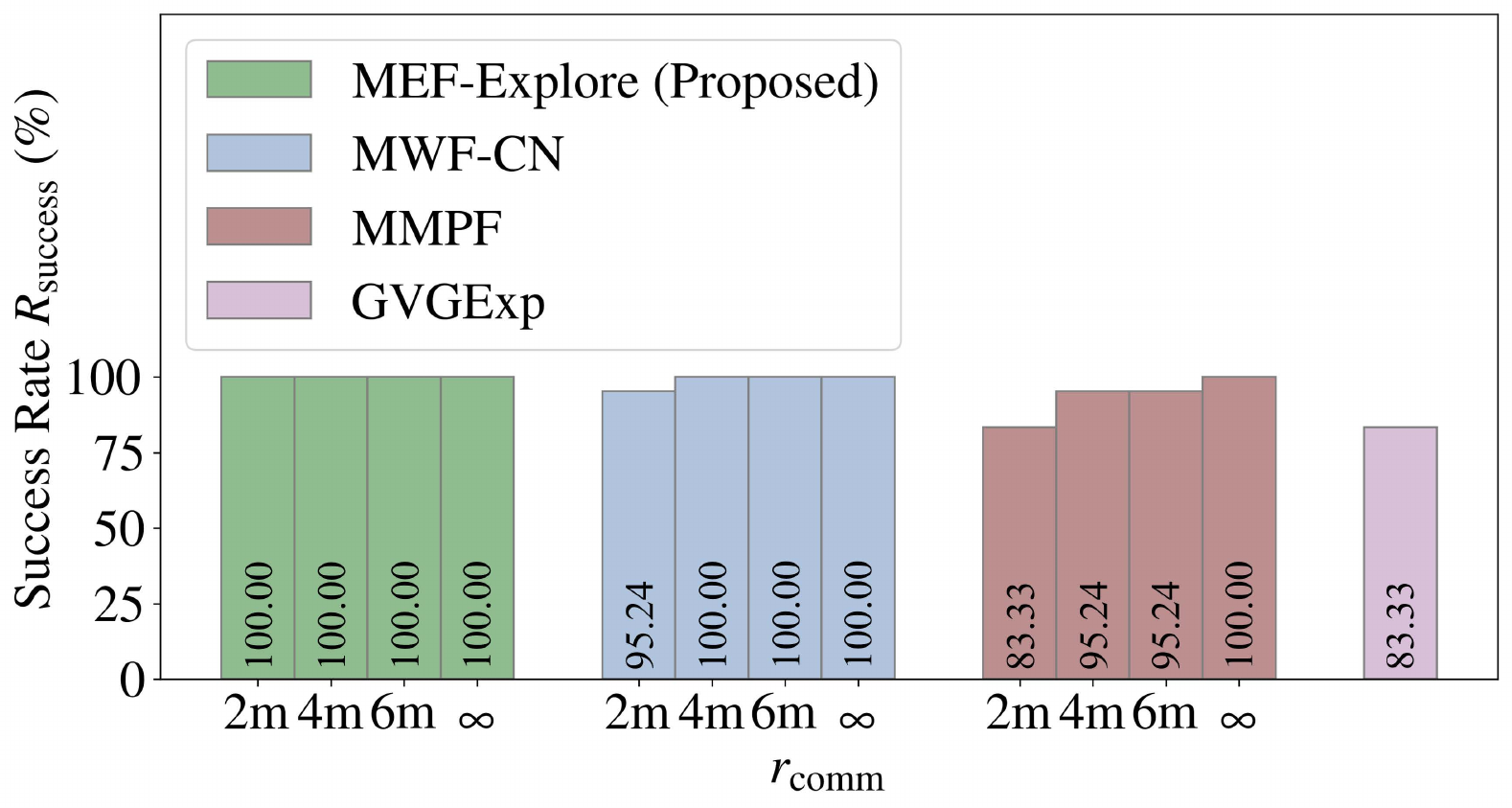}
         \caption{$R_\mathrm{Success}$ (Map 1: 4 robots)}
         \label{fig:map1_4robots_Success}
     \end{subfigure}
     \caption{Simulation results on Map 1 of our proposed MEF-Explore, the MWF-CN, the MMPF, and the GVGExp}
\end{figure*}

\begin{figure*}
     \centering
     \begin{subfigure}[b]{0.9\linewidth}
         \centering
         \includegraphics[width=\linewidth]{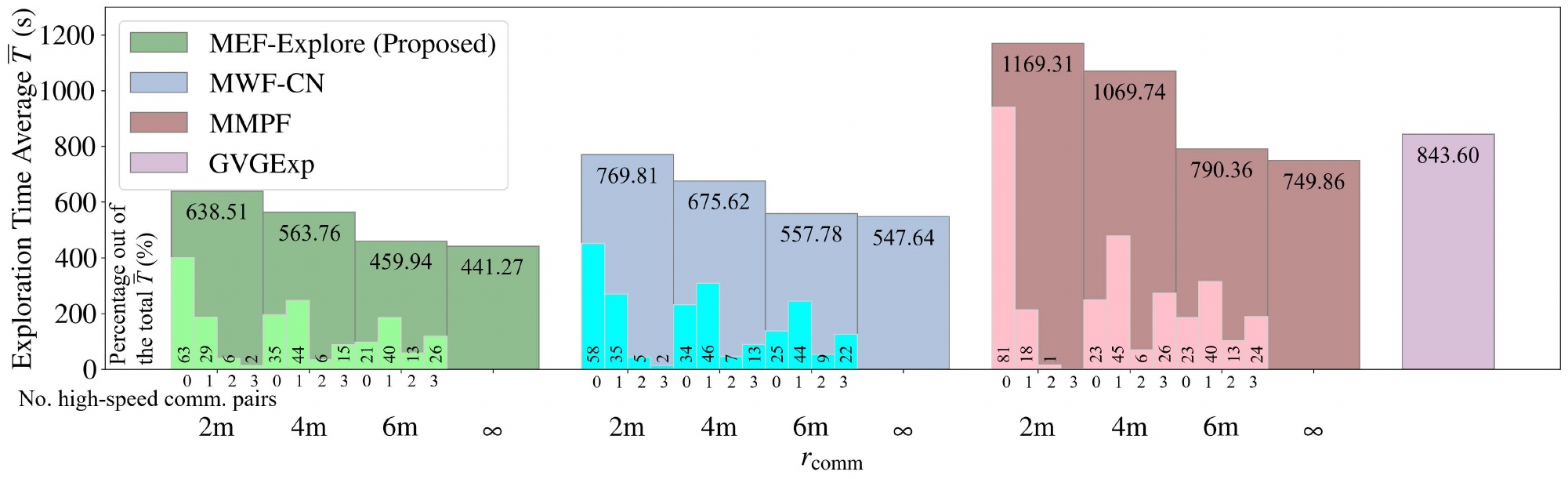}
         \caption{$\bar{T}$ (Map 2: 3 robots)}
         \label{fig:map2_3robots_Ttotal}
     \end{subfigure}
     \begin{subfigure}[b]{0.9\linewidth}
         \centering
         \includegraphics[width=\linewidth]{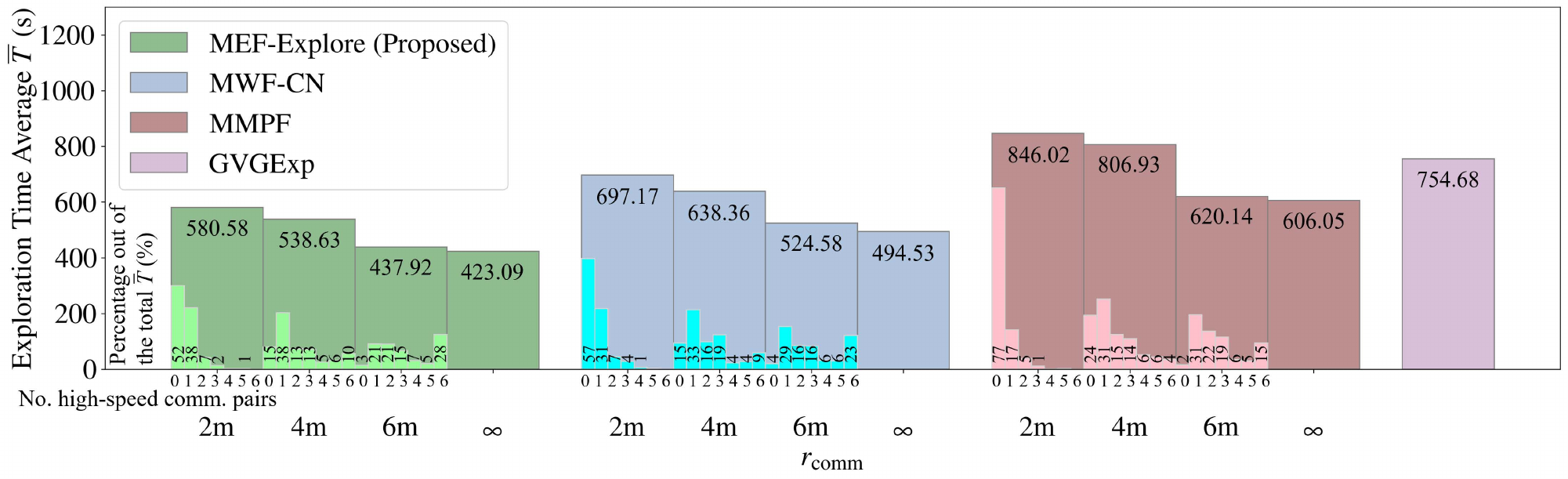}
         \caption{$\bar{T}$ (Map 2: 4 robots)}
         \label{fig:map2_4robots_Ttotal}
     \end{subfigure}
        \caption{Simulation results on Map 2 of our proposed MEF-Explore, the MWF-CN, the MMPF, and the GVGExp}
        \label{fig:map2_sim_results}
\end{figure*}

\begin{figure*}
    \centering
    \ContinuedFloat
    \begin{subfigure}[b]{0.9\linewidth}
         \centering
         \includegraphics[width=\linewidth]{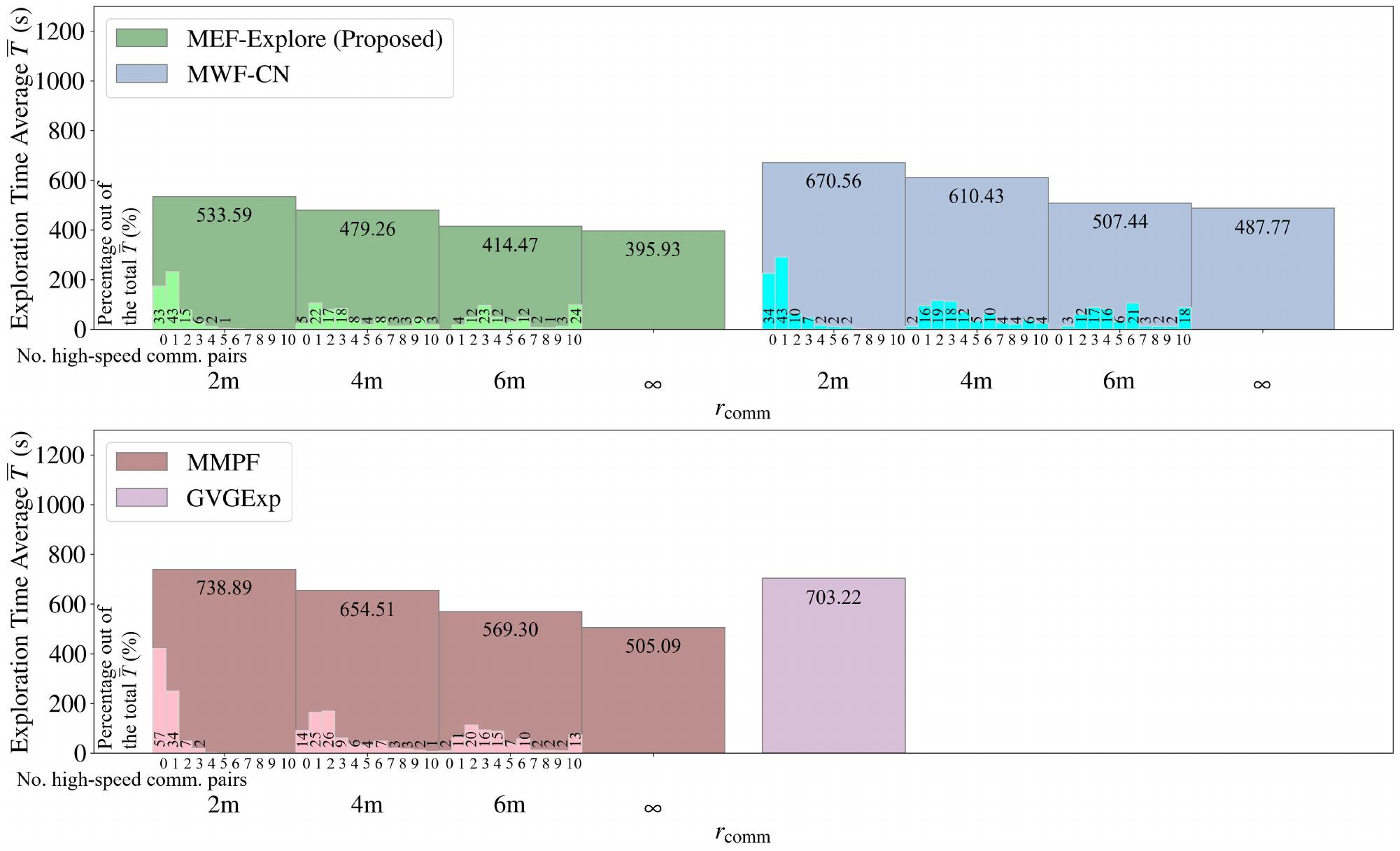}
         \caption{$\bar{T}$ (Map 2: 5 robots)}
         \label{fig:map2_5robots_Ttotal}
     \end{subfigure}
    \begin{subfigure}[b]{0.45\linewidth}
         \centering
         \includegraphics[width=\linewidth]{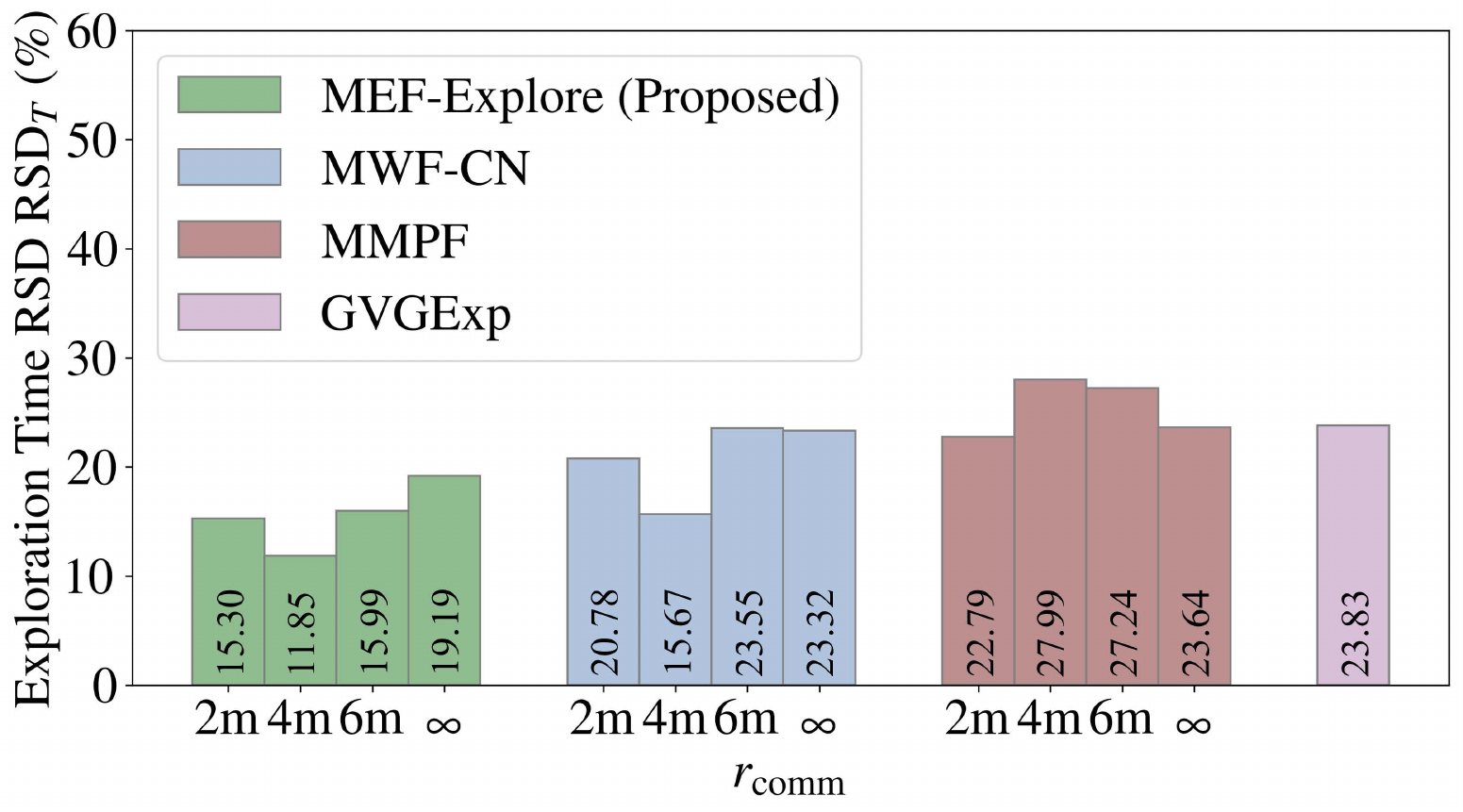}
         \caption{$\mathrm{RSD}_T$ (Map 2: 3 robots)}
         \label{fig:map2_3robots_RSD}
     \end{subfigure}
     \begin{subfigure}[b]{0.45\linewidth}
         \centering
         \includegraphics[width=\linewidth]{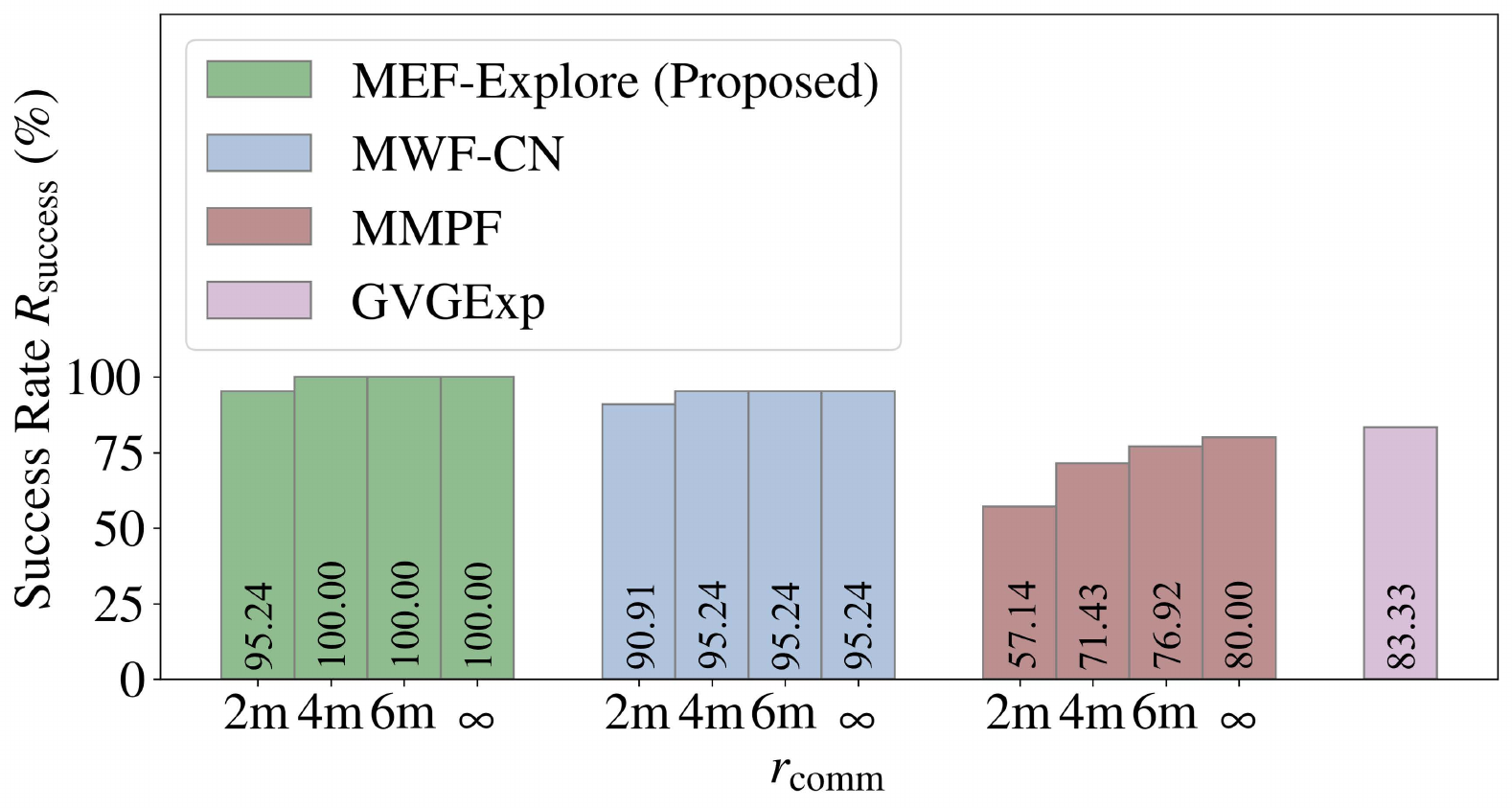}
         \caption{$R_\mathrm{Success}$ (Map 2: 3 robots)}
         \label{fig:map2_3robots_Success}
     \end{subfigure}
     \begin{subfigure}[b]{0.45\linewidth}
         \centering
         \includegraphics[width=\linewidth]{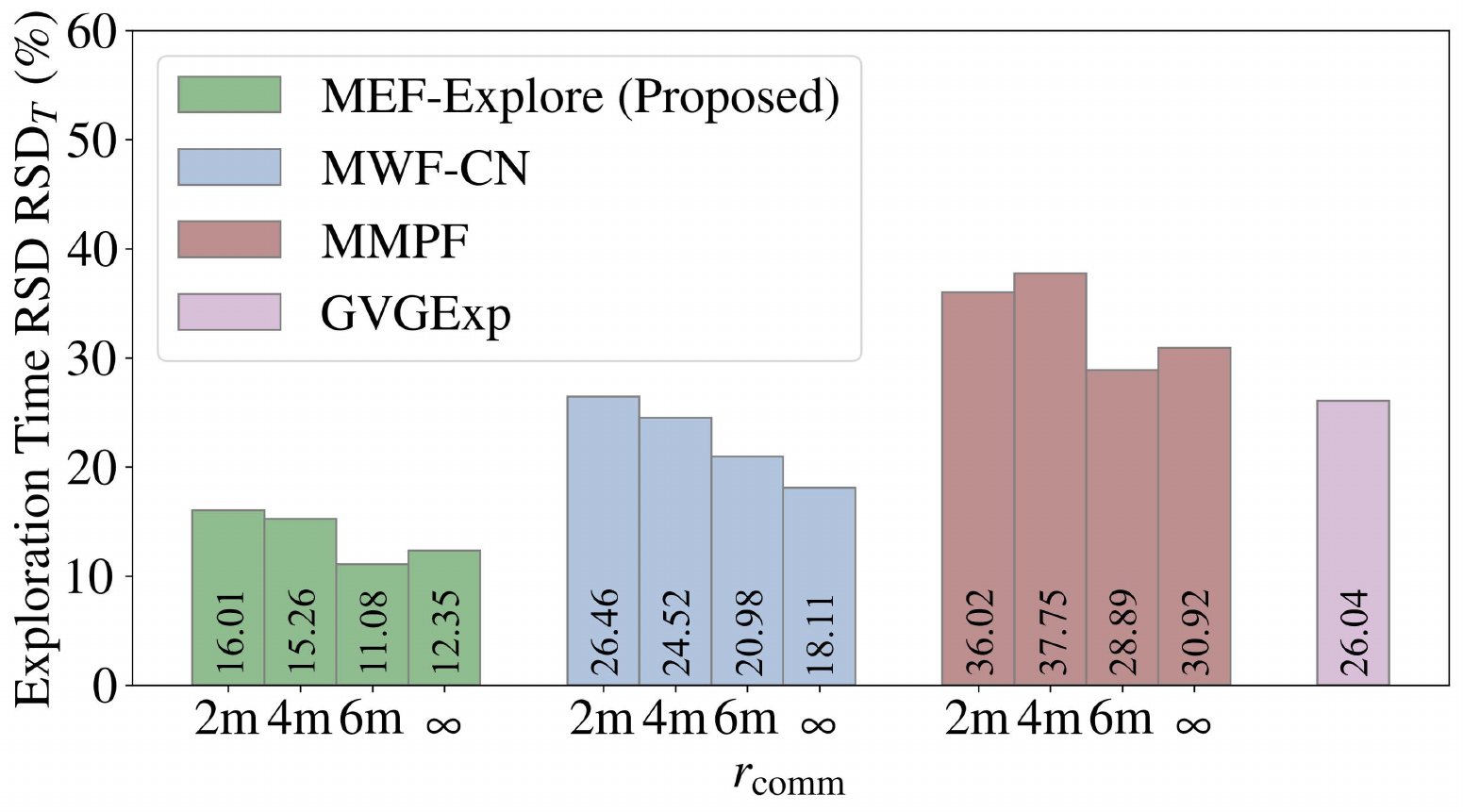}
         \caption{$\mathrm{RSD}_T$ (Map 2: 4 robots)}
         \label{fig:map2_4robots_RSD}
     \end{subfigure}
     \begin{subfigure}[b]{0.45\linewidth}
         \centering
         \includegraphics[width=\linewidth]{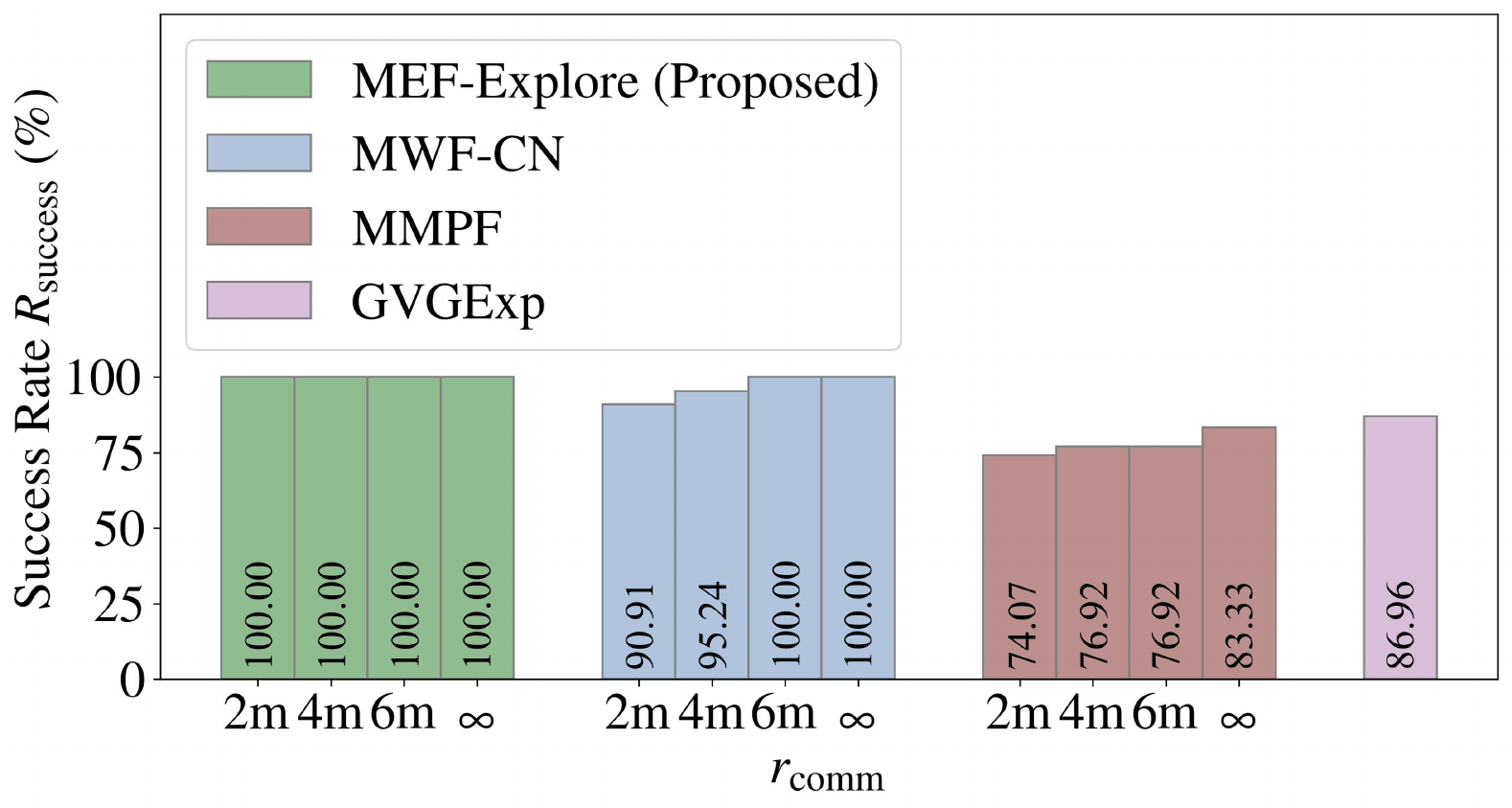}
         \caption{$R_\mathrm{Success}$ (Map 2: 4 robots)}
         \label{fig:map2_4robots_Success}
     \end{subfigure}
     \caption{Simulation results on Map 2 of our proposed MEF-Explore, the MWF-CN, the MMPF, and the GVGExp}
\end{figure*}

\begin{figure*}
    \centering
    \ContinuedFloat
    \begin{subfigure}[b]{0.45\linewidth}
         \centering
         \includegraphics[width=\linewidth]{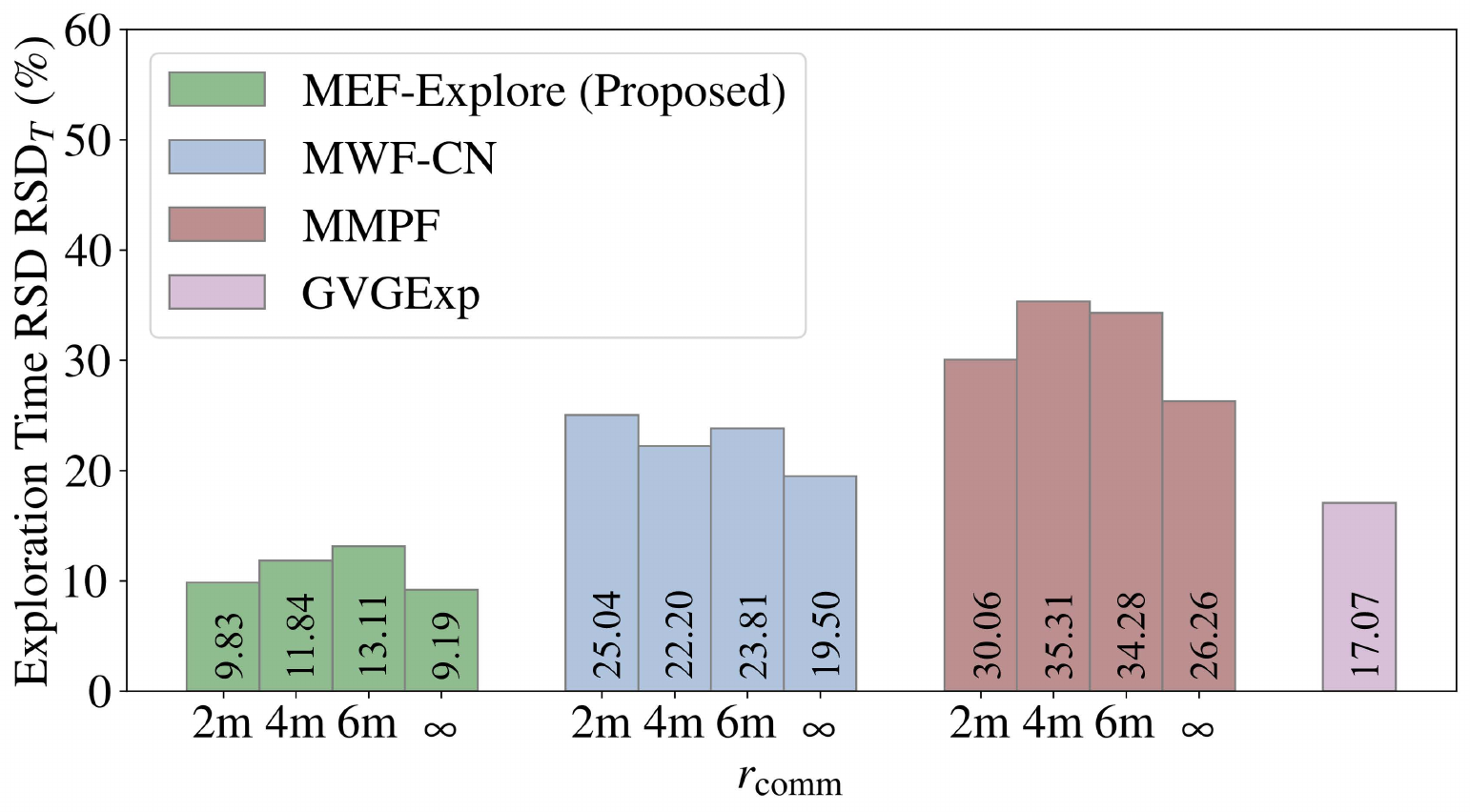}
         \caption{$\mathrm{RSD}_T$ (Map 2: 5 robots)}
         \label{fig:map2_5robots_RSD}
     \end{subfigure}
     \begin{subfigure}[b]{0.45\linewidth}
         \centering
         \includegraphics[width=\linewidth]{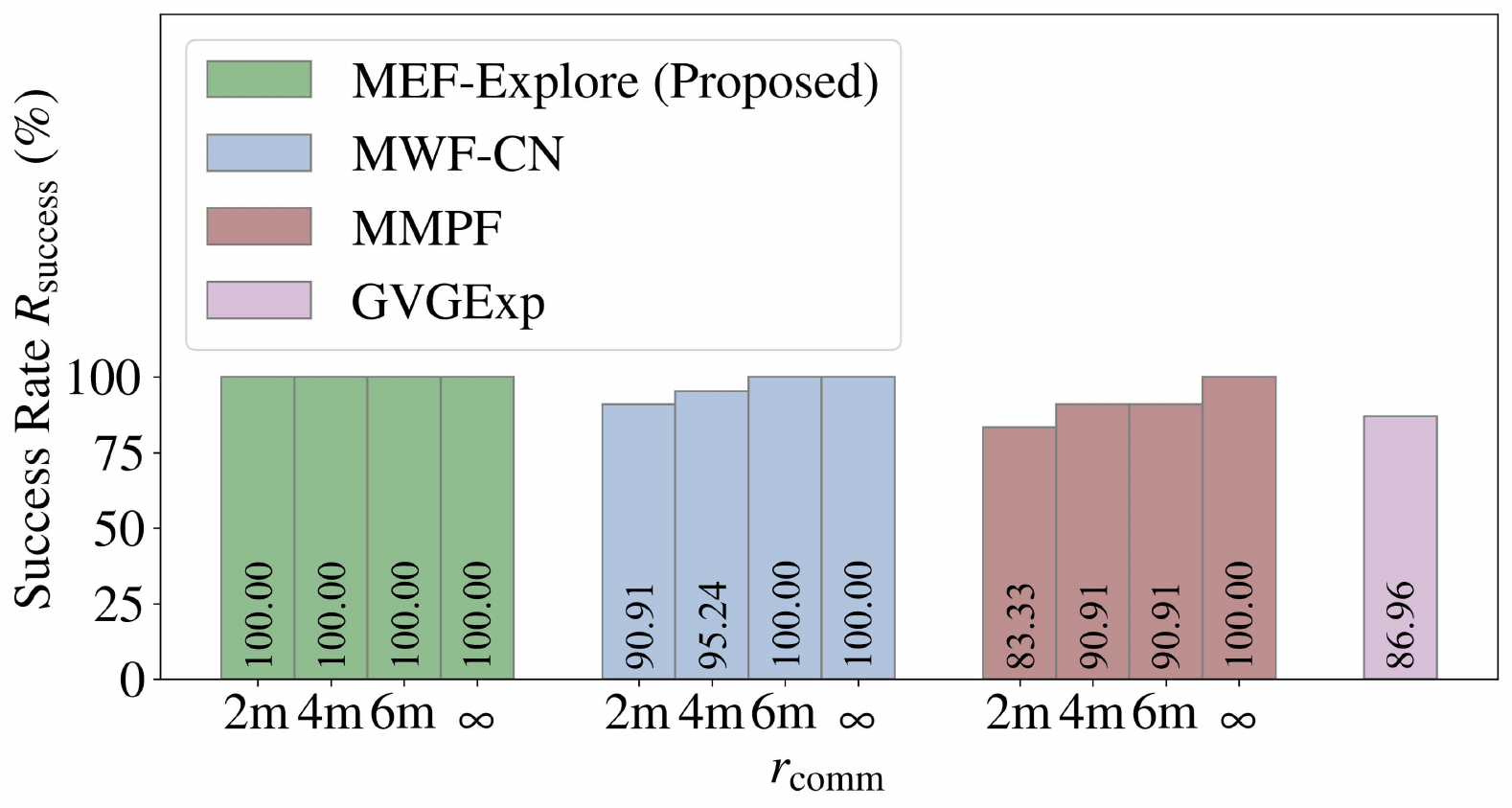}
         \caption{$R_\mathrm{Success}$ (Map 2: 5 robots)}
         \label{fig:map2_5robots_Success}
     \end{subfigure}
     \caption{Simulation results on Map 2 of our proposed MEF-Explore, the MWF-CN, the MMPF, and the GVGExp}
\end{figure*}

This section describes the simulation of the MEF-Explore. The first subsection details the evaluation metrics used to compare each exploration strategy's performance. Then, in the second subsection, we present and discuss the simulation results of different methods. The third subsection will be dedicated to the scalability analysis in case large numbers of robots are employed. Finally, in the fourth subsection, we analyze the stability of each method when inter-robot communication is unavailable.

The simulations are conducted using ROS Melodic with Ubuntu 18.04 on a Desktop PC with Xeon(R) CPU E5-1680 v3 @ 3.20GHz × 16 and 32GB RAM. We conduct multi-robot exploration utilizing TurtleBots (138mm × 178mm × 192mm) with 2D LiDAR performed by the Gazebo simulator \cite{Gazebo} in two simulated environments: Map 1 (218.6$\text{m}^2$) and Map 2 (492.9$\text{m}^2$), as shown in Fig. \ref{fig:maps}. We utilize GMapping \cite{GMapping} to create occupancy grid maps with a map resolution of 0.05m per grid cell and the modified multirobot\_map\_merge \cite{map_merge_2d} (the original version \cite{Orig-Merger}) as a map merger, which creates merged maps by stitching local maps based on rigid transformations between them that are estimated by feature detection. Hence, the process described in Section \ref{subsection:map-merging} will align with this specified map merger. Also, the TEB planner is employed for navigation.

We select the potential-field-based strategies: MWF-CN \cite{MWF-CN} and MMPF \cite{MMPF, Bench}, and the topology-based strategy: GVGExp \cite{GVGExp} as the benchmarks for our proposed MEF-Explore. Note that since the MWF-CN and the MMPF were designed based on perfect communication. To ensure a fair comparison, we also employ our information-sharing strategy to facilitate their ability to work under communication constraints. For the GVGExp, we utilize its originally presented communication and exploration strategies.

For parameters of our MEF-Explore, we have $k_f(N_r) = 2^{N_r - 3}$ in eq. (\ref{eq:H_f}) to scale the entropy values progressively. We select $k_\mathrm{ref} = 0.1$ in eq. (\ref{eq:t_ref}). For eq. (\ref{eq:H_r}), we choose $k_r = 1$ and follow some parameter selection in the previous work \cite{MWF-CN}, which is also applied to the MWF-CN: $\sigma_r=0.6$, $\alpha=2$, $\sigma_d=0.035$. Note that $\sigma_r$ reflects the preferable distance between robots. For the parameters of colored noise, which are the noise color $\alpha$ and the noise variance $\sigma_d$, $\alpha$ can trigger different behaviors of noise values, and  $\sigma_d$ is for adjusting the noise intensity.

\subsection{Evaluation Metrics}
\label{subsection:metrics}
Since our purpose is to tackle exploration speed and task success in communication-constrained situations, we select the metrics to assess the performance of each exploration strategy as follows:

\begin{enumerate}
    \item \textbf{Exploration time average \big($\bar{T}$\big):}
    $\bar{T}$ is the average of the exploration times spent by robots to explore 99\% of the environmental area. As mentioned in Section \ref{section:exploration}, we consider the time the first robot has fully explored the whole area to be the exploration time of each round.
    \item \textbf{Exploration time’s relative standard deviation \big($\mathrm{RSD}_T$\big):}
    This metric measures the consistency of exploration times spent by robots. The $\mathrm{RSD}_T$ is defined as follows:
    \begin{equation}
        \mathrm{RSD}_T = \frac{\mathrm{SD}_T}{\bar{T}} \times 100 \text{,}
    \end{equation}
    where $\mathrm{SD}_T$ is the exploration time’s standard deviation.
    
    \item \textbf{Success rate ($R_\text{success}$): }
    We define $R_\text{success}$ as follows:
    \begin{equation}
        R_\text{success} = \frac{N_\text{success}}{N_\text{total}} \times 100 \text{,}
    \end{equation}
    where $N_\text{success}$ is the number of successful exploration rounds and $N_\text{total}$ is the total number of exploration rounds. Note that we consider the exploration to be unsuccessful if all the robots are stuck and not moving for 120 seconds.
\end{enumerate}

\begin{table*}[ht]
    \scriptsize
    \centering
    \captionsetup{justification=centering}
    \caption{{Exploration Efficiency Improvement Percentages Between the Proposed Method (MEF-Explore)\\ and the Benchmarks (MWF-CN, MMPF, and GVGExp) Under Varying Numbers of Robots and $r_\mathrm{comm}$}}
    \begin{tabular}{cccccccccccccc}
    \cline{3-14}
                                                                                                                  & \multicolumn{1}{c|}{}                                                     & \multicolumn{12}{c|}{{\textbf{Map 1: Improvement of the proposed MEF-Explore compared to the benchmarks}}}                                                                                                                                                                                                                                                                                                                                                                                                                                   \\ \cline{2-14} 
    \multicolumn{1}{c|}{}                                                                                         & \multicolumn{1}{c|}{\textbf{$N_r$}}                                       & \multicolumn{4}{c|}{\textbf{2 robots}}                                                                                                          & \multicolumn{4}{c|}{\textbf{3 robots}}                                                                                                          & \multicolumn{4}{c|}{{\textbf{4 robots}}}                                                                                                                                                                               \\ \cline{2-14} 
    \multicolumn{1}{c|}{}                                                                                         & \multicolumn{1}{c|}{\textbf{$r_\mathrm{comm}$}}                           & \multicolumn{1}{c|}{\textbf{2m}} & \multicolumn{1}{c|}{\textbf{4m}} & \multicolumn{1}{c|}{\textbf{6m}} & \multicolumn{1}{c|}{\textbf{$\infty$}} & \multicolumn{1}{c|}{\textbf{2m}} & \multicolumn{1}{c|}{\textbf{4m}} & \multicolumn{1}{c|}{\textbf{6m}} & \multicolumn{1}{c|}{\textbf{$\infty$}} & \multicolumn{1}{c|}{{\textbf{2m}}} & \multicolumn{1}{c|}{{\textbf{4m}}} & \multicolumn{1}{c|}{{\textbf{6m}}} & \multicolumn{1}{c|}{{\textbf{$\infty$}}} \\ \hline
    \multicolumn{1}{|c|}{{}}                                                                 & \multicolumn{1}{c|}{\textbf{\tiny $\bar{T}$}}              & \multicolumn{1}{c|}{12.90\%}     & \multicolumn{1}{c|}{12.23\%}     & \multicolumn{1}{c|}{15.56\%}     & \multicolumn{1}{c|}{10.38\%}           & \multicolumn{1}{c|}{24.00\%}     & \multicolumn{1}{c|}{10.64\%}     & \multicolumn{1}{c|}{7.55\%}      & \multicolumn{1}{c|}{5.54\%}            & \multicolumn{1}{c|}{{12.91\%}}     & \multicolumn{1}{c|}{{11.89\%}}     & \multicolumn{1}{c|}{{13.15\%}}     & \multicolumn{1}{c|}{{7.62\%}}            \\ \cline{2-14} 
    \multicolumn{1}{|c|}{{}}                                                                 & \multicolumn{1}{c|}{\textbf{$\mathrm{RSD}_T$}}                            & \multicolumn{1}{c|}{13.12\%}     & \multicolumn{1}{c|}{5.55\%}      & \multicolumn{1}{c|}{6.00\%}      & \multicolumn{1}{c|}{4.70\%}            & \multicolumn{1}{c|}{9.52\%}      & \multicolumn{1}{c|}{0.65\%}      & \multicolumn{1}{c|}{8.98\%}      & \multicolumn{1}{c|}{2.47\%}            & \multicolumn{1}{c|}{{13.37\%}}     & \multicolumn{1}{c|}{{0.75\%}}      & \multicolumn{1}{c|}{{3.13\%}}      & \multicolumn{1}{c|}{{2.20\%}}            \\ \cline{2-14} 
    \multicolumn{1}{|c|}{\multirow{-3}{*}{{\textbf{MWF-CN \cite{MWF-CN}}}}} & \multicolumn{1}{c|}{\textbf{$R_\mathrm{success}$}}                        & \multicolumn{1}{c|}{8.28\%}      & \multicolumn{1}{c|}{9.09\%}      & \multicolumn{1}{c|}{4.76\%}      & \multicolumn{1}{c|}{0.00\%}            & \multicolumn{1}{c|}{9.09\%}      & \multicolumn{1}{c|}{4.76\%}      & \multicolumn{1}{c|}{0.00\%}      & \multicolumn{1}{c|}{0.00\%}            & \multicolumn{1}{c|}{{4.76\%}}      & \multicolumn{1}{c|}{{0.00\%}}      & \multicolumn{1}{c|}{{0.00\%}}      & \multicolumn{1}{c|}{{0.00\%}}            \\ \hline
    \multicolumn{1}{|c|}{{}}                                                                 & \multicolumn{1}{c|}{\textbf{\tiny $\bar{T}$}}              & \multicolumn{1}{c|}{21.67\%}     & \multicolumn{1}{c|}{16.96\%}     & \multicolumn{1}{c|}{20.28\%}     & \multicolumn{1}{c|}{23.30\%}           & \multicolumn{1}{c|}{41.56\%}     & \multicolumn{1}{c|}{29.37\%}     & \multicolumn{1}{c|}{22.75\%}     & \multicolumn{1}{c|}{20.11\%}           & \multicolumn{1}{c|}{{42.89\%}}     & \multicolumn{1}{c|}{{28.53\%}}     & \multicolumn{1}{c|}{{23.56\%}}     & \multicolumn{1}{c|}{{22.69\%}}           \\ \cline{2-14} 
    \multicolumn{1}{|c|}{{}}                                                                 & \multicolumn{1}{c|}{\textbf{$\mathrm{RSD}_T$}}                            & \multicolumn{1}{c|}{16.44\%}     & \multicolumn{1}{c|}{6.98\%}      & \multicolumn{1}{c|}{10.92\%}     & \multicolumn{1}{c|}{6.54\%}            & \multicolumn{1}{c|}{14.34\%}     & \multicolumn{1}{c|}{16.48\%}     & \multicolumn{1}{c|}{12.04\%}     & \multicolumn{1}{c|}{15.43\%}           & \multicolumn{1}{c|}{{16.51\%}}     & \multicolumn{1}{c|}{{13.77\%}}     & \multicolumn{1}{c|}{{9.97\%}}      & \multicolumn{1}{c|}{{7.62\%}}            \\ \cline{2-14} 
    \multicolumn{1}{|c|}{\multirow{-3}{*}{{\textbf{MMPF \cite{MMPF}}}}}     & \multicolumn{1}{c|}{\textbf{$R_\mathrm{success}$}}                        & \multicolumn{1}{c|}{15.24\%}     & \multicolumn{1}{c|}{16.67\%}     & \multicolumn{1}{c|}{9.09\%}      & \multicolumn{1}{c|}{4.76\%}            & \multicolumn{1}{c|}{16.67\%}     & \multicolumn{1}{c|}{13.04\%}     & \multicolumn{1}{c|}{4.76\%}      & \multicolumn{1}{c|}{0.00\%}            & \multicolumn{1}{c|}{{16.67\%}}     & \multicolumn{1}{c|}{{4.76\%}}      & \multicolumn{1}{c|}{{4.76\%}}      & \multicolumn{1}{c|}{{0.00\%}}            \\ \hline
    \multicolumn{1}{|c|}{{}}                                                                 & \multicolumn{1}{c|}{\textbf{\tiny $\bar{T}$}}              & \multicolumn{1}{c|}{8.49\%}      & \multicolumn{1}{c|}{19.10\%}     & \multicolumn{1}{c|}{27.52\%}     & \multicolumn{1}{c|}{32.30\%}           & \multicolumn{1}{c|}{16.57\%}     & \multicolumn{1}{c|}{48.72\%}     & \multicolumn{1}{c|}{65.82\%}     & \multicolumn{1}{c|}{73.87\%}           & \multicolumn{1}{c|}{{18.55\%}}     & \multicolumn{1}{c|}{{49.00\%}}     & \multicolumn{1}{c|}{{67.73\%}}     & \multicolumn{1}{c|}{{70.76\%}}           \\ \cline{2-14} 
    \multicolumn{1}{|c|}{{}}                                                                 & \multicolumn{1}{c|}{\textbf{$\mathrm{RSD}_T$}}                            & \multicolumn{1}{c|}{7.45\%}      & \multicolumn{1}{c|}{6.54\%}      & \multicolumn{1}{c|}{4.71\%}      & \multicolumn{1}{c|}{6.73\%}            & \multicolumn{1}{c|}{10.10\%}     & \multicolumn{1}{c|}{11.49\%}     & \multicolumn{1}{c|}{12.99\%}     & \multicolumn{1}{c|}{13.81\%}           & \multicolumn{1}{c|}{{21.22\%}}     & \multicolumn{1}{c|}{{16.57\%}}     & \multicolumn{1}{c|}{{19.80\%}}     & \multicolumn{1}{c|}{{19.95\%}}           \\ \cline{2-14} 
    \multicolumn{1}{|c|}{\multirow{-3}{*}{{\textbf{GVGExp \cite{GVGExp}}}}} & \multicolumn{1}{c|}{\textbf{$R_\mathrm{success}$}}                        & \multicolumn{1}{c|}{18.32\%}     & \multicolumn{1}{c|}{23.08\%}     & \multicolumn{1}{c|}{23.08\%}     & \multicolumn{1}{c|}{23.08\%}           & \multicolumn{1}{c|}{16.67\%}     & \multicolumn{1}{c|}{16.67\%}     & \multicolumn{1}{c|}{16.67\%}     & \multicolumn{1}{c|}{16.67\%}           & \multicolumn{1}{c|}{{16.67\%}}     & \multicolumn{1}{c|}{{16.67\%}}     & \multicolumn{1}{c|}{{16.67\%}}     & \multicolumn{1}{c|}{{16.67\%}}           \\ \hline
    \multicolumn{1}{l}{}                                                                                          & \multicolumn{1}{l}{}                                                      & \multicolumn{1}{l}{}             & \multicolumn{1}{l}{}             & \multicolumn{1}{l}{}             & \multicolumn{1}{l}{}                   & \multicolumn{1}{l}{}             & \multicolumn{1}{l}{}             & \multicolumn{1}{l}{}             & \multicolumn{1}{l}{}                   & \multicolumn{1}{l}{}                                    & \multicolumn{1}{l}{}                                    & \multicolumn{1}{l}{}                                    & \multicolumn{1}{l}{}                                          \\ \cline{3-14} 
                                                                                                                  & \multicolumn{1}{c|}{}                                                     & \multicolumn{12}{c|}{{\textbf{Map 2: Improvement of the proposed MEF-Explore compared to the benchmarks}}}                                                                                                                                                                                                                                                                                                                                                                                                                                   \\ \cline{2-14} 
    \multicolumn{1}{c|}{}                                                                                         & \multicolumn{1}{c|}{\textbf{$N_r$}}                                       & \multicolumn{4}{c|}{\textbf{3 robots}}                                                                                                          & \multicolumn{4}{c|}{\textbf{4 robots}}                                                                                                          & \multicolumn{4}{c|}{{\textbf{5 robots}}}                                                                                                                                                                               \\ \cline{2-14} 
    \multicolumn{1}{c|}{}                                                                                         & \multicolumn{1}{c|}{\textbf{$r_\mathrm{comm}$}}                           & \multicolumn{1}{c|}{\textbf{2m}} & \multicolumn{1}{c|}{\textbf{4m}} & \multicolumn{1}{c|}{\textbf{6m}} & \multicolumn{1}{c|}{\textbf{$\infty$}} & \multicolumn{1}{c|}{\textbf{2m}} & \multicolumn{1}{c|}{\textbf{4m}} & \multicolumn{1}{c|}{\textbf{6m}} & \multicolumn{1}{c|}{\textbf{$\infty$}} & \multicolumn{1}{c|}{{\textbf{2m}}} & \multicolumn{1}{c|}{{\textbf{4m}}} & \multicolumn{1}{c|}{{\textbf{6m}}} & \multicolumn{1}{c|}{{\textbf{$\infty$}}} \\ \hline
    \multicolumn{1}{|c|}{{}}                                                                 & \multicolumn{1}{c|}{\textbf{\tiny $\bar{T}$}}              & \multicolumn{1}{c|}{20.56\%}     & \multicolumn{1}{c|}{19.84\%}     & \multicolumn{1}{c|}{21.27\%}     & \multicolumn{1}{c|}{24.10\%}           & \multicolumn{1}{c|}{20.08\%}     & \multicolumn{1}{c|}{18.51\%}     & \multicolumn{1}{c|}{19.79\%}     & \multicolumn{1}{c|}{16.88\%}           & \multicolumn{1}{c|}{{25.67\%}}     & \multicolumn{1}{c|}{{27.37\%}}     & \multicolumn{1}{c|}{{22.43\%}}     & \multicolumn{1}{c|}{{23.20\%}}           \\ \cline{2-14} 
    \multicolumn{1}{|c|}{{}}                                                                 & \multicolumn{1}{c|}{\textbf{$\mathrm{RSD}_T$}}                            & \multicolumn{1}{c|}{5.48\%}      & \multicolumn{1}{c|}{3.81\%}      & \multicolumn{1}{c|}{7.56\%}      & \multicolumn{1}{c|}{4.13\%}            & \multicolumn{1}{c|}{10.46\%}     & \multicolumn{1}{c|}{9.25\%}      & \multicolumn{1}{c|}{9.90\%}      & \multicolumn{1}{c|}{5.76\%}            & \multicolumn{1}{c|}{{15.21\%}}     & \multicolumn{1}{c|}{{10.36\%}}     & \multicolumn{1}{c|}{{10.70\%}}     & \multicolumn{1}{c|}{{10.31\%}}           \\ \cline{2-14} 
    \multicolumn{1}{|c|}{\multirow{-3}{*}{{\textbf{MWF-CN \cite{MWF-CN}}}}} & \multicolumn{1}{c|}{\textbf{$R_\mathrm{success}$}}                        & \multicolumn{1}{c|}{4.33\%}      & \multicolumn{1}{c|}{4.76\%}      & \multicolumn{1}{c|}{4.76\%}      & \multicolumn{1}{c|}{4.76\%}            & \multicolumn{1}{c|}{9.09\%}      & \multicolumn{1}{c|}{4.76\%}      & \multicolumn{1}{c|}{0.00\%}      & \multicolumn{1}{c|}{0.00\%}            & \multicolumn{1}{c|}{{9.09\%}}      & \multicolumn{1}{c|}{{4.76\%}}      & \multicolumn{1}{c|}{{0.00\%}}      & \multicolumn{1}{c|}{{0.00\%}}            \\ \hline
    \multicolumn{1}{|c|}{{}}                                                                 & \multicolumn{1}{c|}{\textbf{\tiny $\bar{T}$}}              & \multicolumn{1}{c|}{83.13\%}     & \multicolumn{1}{c|}{89.75\%}     & \multicolumn{1}{c|}{71.84\%}     & \multicolumn{1}{c|}{69.93\%}           & \multicolumn{1}{c|}{45.72\%}     & \multicolumn{1}{c|}{49.81\%}     & \multicolumn{1}{c|}{41.61\%}     & \multicolumn{1}{c|}{43.24\%}           & \multicolumn{1}{c|}{{38.48\%}}     & \multicolumn{1}{c|}{{36.57\%}}     & \multicolumn{1}{c|}{{37.35\%}}     & \multicolumn{1}{c|}{{27.57\%}}           \\ \cline{2-14} 
    \multicolumn{1}{|c|}{{}}                                                                 & \multicolumn{1}{c|}{\textbf{$\mathrm{RSD}_T$}}                            & \multicolumn{1}{c|}{7.49\%}      & \multicolumn{1}{c|}{16.14\%}     & \multicolumn{1}{c|}{11.24\%}     & \multicolumn{1}{c|}{4.45\%}            & \multicolumn{1}{c|}{20.01\%}     & \multicolumn{1}{c|}{22.48\%}     & \multicolumn{1}{c|}{17.81\%}     & \multicolumn{1}{c|}{18.57\%}           & \multicolumn{1}{c|}{{20.23\%}}     & \multicolumn{1}{c|}{{23.48\%}}     & \multicolumn{1}{c|}{{21.17\%}}     & \multicolumn{1}{c|}{{17.07\%}}           \\ \cline{2-14} 
    \multicolumn{1}{|c|}{\multirow{-3}{*}{{\textbf{MMPF \cite{MMPF}}}}}     & \multicolumn{1}{c|}{\textbf{$R_\mathrm{success}$}}                        & \multicolumn{1}{c|}{38.10\%}     & \multicolumn{1}{c|}{28.57\%}     & \multicolumn{1}{c|}{23.08\%}     & \multicolumn{1}{c|}{20.00\%}           & \multicolumn{1}{c|}{25.93\%}     & \multicolumn{1}{c|}{23.08\%}     & \multicolumn{1}{c|}{23.08\%}     & \multicolumn{1}{c|}{16.67\%}           & \multicolumn{1}{c|}{{16.67\%}}     & \multicolumn{1}{c|}{{9.09\%}}      & \multicolumn{1}{c|}{{9.09\%}}      & \multicolumn{1}{c|}{{0.00\%}}            \\ \hline
    \multicolumn{1}{|c|}{{}}                                                                 & \multicolumn{1}{c|}{\textbf{\tiny $\bar{T}$}}              & \multicolumn{1}{c|}{32.12\%}     & \multicolumn{1}{c|}{49.64\%}     & \multicolumn{1}{c|}{83.41\%}     & \multicolumn{1}{c|}{91.17\%}           & \multicolumn{1}{c|}{29.99\%}     & \multicolumn{1}{c|}{40.11\%}     & \multicolumn{1}{c|}{72.33\%}     & \multicolumn{1}{c|}{78.37\%}           & \multicolumn{1}{c|}{{31.79\%}}     & \multicolumn{1}{c|}{{46.73\%}}     & \multicolumn{1}{c|}{{69.67\%}}     & \multicolumn{1}{c|}{{77.61\%}}           \\ \cline{2-14} 
    \multicolumn{1}{|c|}{{}}                                                                 & \multicolumn{1}{c|}{\textbf{$\mathrm{RSD}_T$}}                            & \multicolumn{1}{c|}{8.53\%}      & \multicolumn{1}{c|}{11.97\%}     & \multicolumn{1}{c|}{7.83\%}      & \multicolumn{1}{c|}{4.64\%}            & \multicolumn{1}{c|}{10.04\%}     & \multicolumn{1}{c|}{10.78\%}     & \multicolumn{1}{c|}{14.97\%}     & \multicolumn{1}{c|}{13.69\%}           & \multicolumn{1}{c|}{{7.24\%}}      & \multicolumn{1}{c|}{{5.23\%}}      & \multicolumn{1}{c|}{{3.95\%}}      & \multicolumn{1}{c|}{{7.88\%}}            \\ \cline{2-14} 
    \multicolumn{1}{|c|}{\multirow{-3}{*}{{\textbf{GVGExp \cite{GVGExp}}}}} & \multicolumn{1}{c|}{\textbf{$R_\mathrm{success}$}}                        & \multicolumn{1}{c|}{11.90\%}     & \multicolumn{1}{c|}{16.67\%}     & \multicolumn{1}{c|}{16.67\%}     & \multicolumn{1}{c|}{16.67\%}           & \multicolumn{1}{c|}{13.04\%}     & \multicolumn{1}{c|}{13.04\%}     & \multicolumn{1}{c|}{13.04\%}     & \multicolumn{1}{c|}{13.04\%}           & \multicolumn{1}{c|}{{13.04\%}}     & \multicolumn{1}{c|}{{13.04\%}}     & \multicolumn{1}{c|}{{13.04\%}}     & \multicolumn{1}{c|}{{13.04\%}}           \\ \hline
    \end{tabular}
    \label{table:sim-improve}
\end{table*}

\subsection{Benchmark and Comparison}
\label{subsection:benchmark}
In this subsection, we discuss the simulation results of exploration performance by our MEF-Explore, the MWF-CN, the MMPF, and the GVGExp. The initial positions of two, three, and four robots for Map 1 and three, four, and five robots for Map 2 are shown in Fig. \ref{fig:maps}. For the first three mentioned methods, we evaluate the impact of the cut-off range $r_\mathrm{comm}$ using the following ranges: 2m, 4m, 6m, and $\infty$ (representing perfect communication). Additionally, we assess the number of robot pairs with high-speed communication throughout the exploration to depict the distribution of communication status. As shown by the x-axis of Fig. \ref{fig:map1_2robots_Ttotal}, \ref{fig:map1_3robots_Ttotal}, \ref{fig:map2_3robots_Ttotal}, and \ref{fig:map2_4robots_Ttotal}, the number of pairs is from 0 to $\frac{N_r(N_r-1)}{2}$ depending on the number of robots $N_r$. As the GVGExp has its own communication strategy based on generalized Voronoi graphs, it is not based on the concept of low-speed and high-speed communication. So, we consider only the cases of two, three, and four robots for Map 1 and three, four, and five robots for Map 2 without taking $r_\mathrm{comm}$ into account. Note that the simulations of different methods are run for 20 rounds per each $r_\mathrm{comm}$, so the results for two simulated environments shown in Fig. \ref{fig:map1_sim_results} and \ref{fig:map2_sim_results} are calculated based on 1,560 rounds of successful exploration, and the number of unsuccessful rounds is also collected for computing $R_\mathrm{success}$.

The presented results show that, overall, more robots and larger cut-off ranges $r_\mathrm{comm}$ lead to shorter time robots spent on exploration and fewer failed rounds. Apart from the apparent reason that a higher number of robots can cover larger areas, we also observe that improved outcomes are from increasing chances of map information relays. The more extensive range of $r_\mathrm{comm}$ also implies the broader region with the availability of high-speed communication, which gives rise to better exploration time and success rate. Considering the performance of each method, our proposed MEF-Explore outperforms the MWF-CN, the MMPF, and the GVGExp for all metrics and $r_\mathrm{comm}$ ranges on both maps. Specifically, it achieves the best results in all test cases involving two, three, and four robots exploring Map 1, as well as three, four, and five robots exploring Map 2. These outcomes highlight the consistently superior performance of our proposed MEF-Explore across all evaluated scenarios. We present Table \ref{table:sim-improve} to summarize the efficiency improvement for $\bar{T}$, $\mathrm{RSD}_T$, and $R_\mathrm{success}$ of the MEF-Explore compared to the benchmark methods. 

Based on the observation from Table \ref{table:sim-improve}, we notice exploration by robots using the GVGExp requires a considerably longer exploration time $\bar{T}$, has higher $\mathrm{RSD}_T$, and completes the mission with less $R_\mathrm{success}$, compared to our MEF-Explore. The main reason behind this is its data exchange, which relies significantly on the generalized Voronoi graphs reconstructed anytime the robot's maps are updated. Its strict conditions also make inter-robot map sharing occur infrequently, so robots often spend extra time exploring.

For the methods that are deployed together with our proposed information-sharing strategy, the overall computational time of our MEF-Explore, the MWF-CN, and the MMPF grows exponentially based on $N_C$ to the power of the total exploration area $A$ or $\mathscr{O}\big(N_C^A\big)$, as their core calculation is based on the potential field. However, our MEF-Explore has the best performance. The key factors behind this achievement are our well-developed entropies, which can realistically represent the uncertainty throughout the exploration mission. For our strategy, the environmental areas are indicated by the combination of uncertainty and attractiveness levels, unlike the MWF-CN and the MMPF ones, which are mainly based on attractive and repulsive potential values. So, our robot goal candidates are chosen more efficiently, resulting in faster, more consistent, and more successful exploration.

Some rounds of our designed robot rendezvous that arise without predetermination also help speed up inter-robot map merging, which is beneficial for completing exploration. Meanwhile, since the MWF-CN and the MMPF do not have this rendezvous feature, robots exchange their maps via high-speed communication only when they are occasionally near each other. In other words, the robots using these two benchmarks meet less frequently during exploration than those using our MEF-Explore, which also meet during rounds of implicit rendezvous. Hence, those using the MWF-CN and the MMPF finish exploring the environments more slowly, and sometimes, their mission is not accomplished because they do not receive some parts of the map from other robots during exploration to merge together into the complete map. 

Furthermore, since our proposed duration-adaptive goal-assigning module can effectively control each robot's goal assignment, the exploration finishes more rapidly and successfully than when using the MWF-CN and the MMPF. Since, for both methods, robots' goals are assigned nonstop, they sometimes traverse jerky or get stuck because of frequent interruptions from the exploration stack to the navigation stack. Therefore, in light of the aforementioned components altogether, it is clear how the exploration performance of our MEF-Explore is better than the benchmark methods in all scenarios.

\begin{figure}
     \centering
     \begin{subfigure}[b]{0.322\linewidth}
         \centering
         \includegraphics[width=\linewidth]{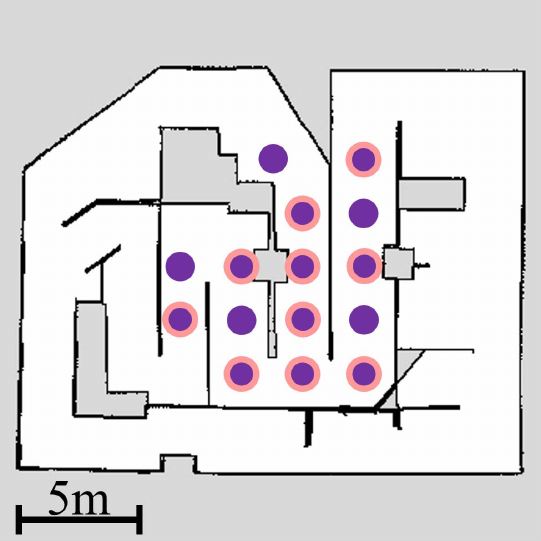}
         \caption{Map 1}
         \label{fig:map1-scalability}
     \end{subfigure}
     \begin{subfigure}[b]{0.6\linewidth}
         \centering
         \includegraphics[width=\linewidth]{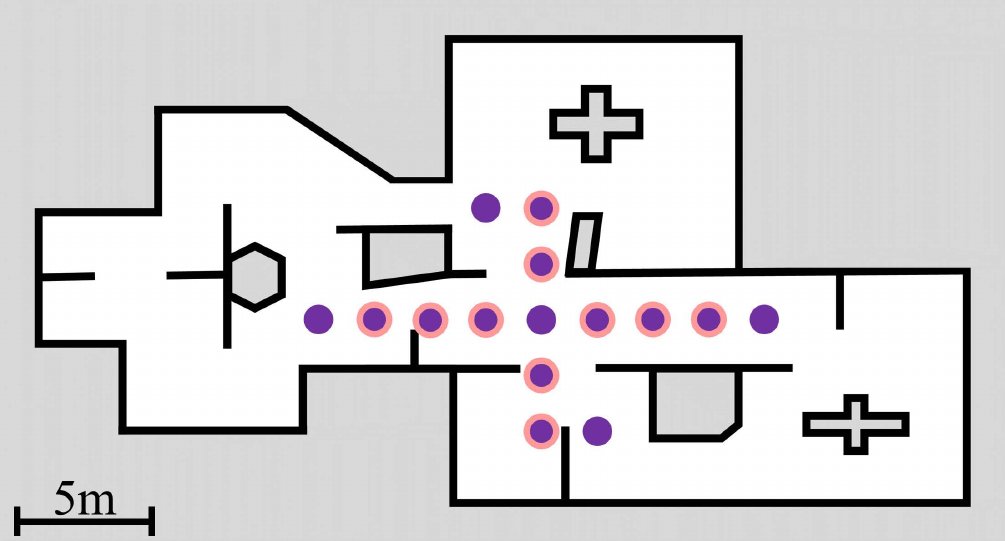}
         \caption{Map 2}
         \label{fig:map2-scalability}
     \end{subfigure}
     \begin{subfigure}[b]{\linewidth}
         \centering
         \includegraphics[width=\linewidth]{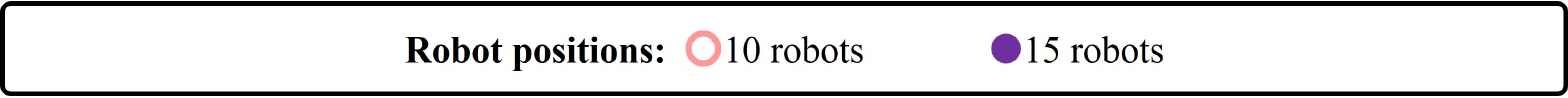}
     \end{subfigure}
        \caption{Initial positions for exploration with 10 and 15 robots}
        \label{fig:maps-scalability}
        \vspace*{-4mm}
\end{figure}

\begin{figure}[h]
     \centering
     \begin{subfigure}[b]{\linewidth}
         \centering
         \includegraphics[width=\linewidth]{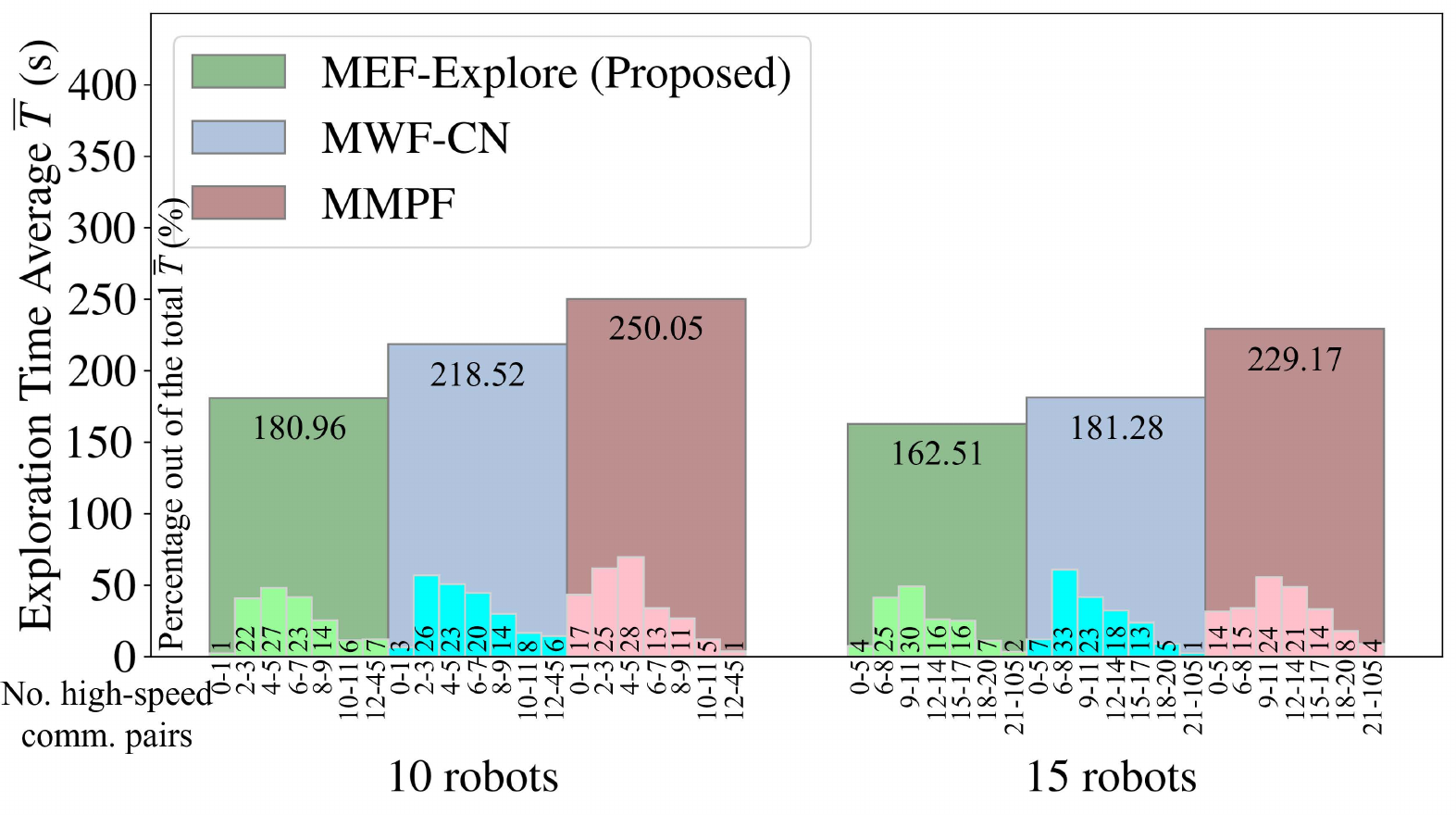}
         \caption{$\bar{T}$}
         \label{fig:map1_scalability_Ttotal}
     \end{subfigure}
     \begin{subfigure}[b]{0.49\linewidth}
         \centering
         \includegraphics[width=\linewidth]{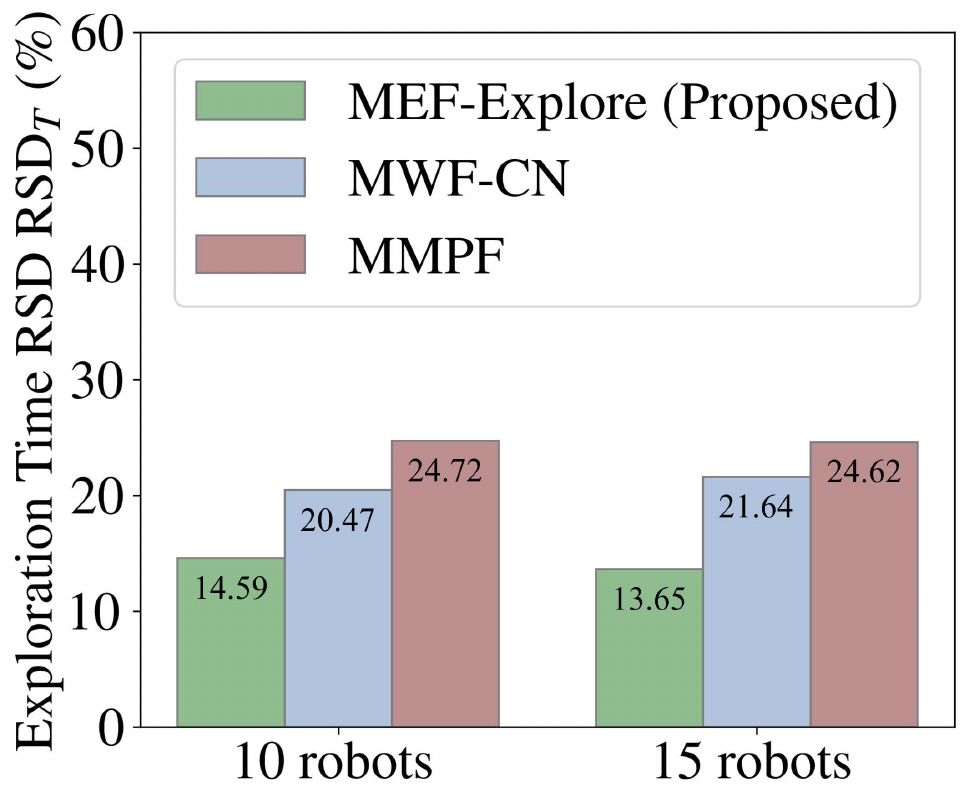}
         \caption{$\mathrm{RSD}_T$}
         \label{fig:map1_scalability_RSD}
     \end{subfigure}
     \begin{subfigure}[b]{0.49\linewidth}
         \centering
         \includegraphics[width=\linewidth]{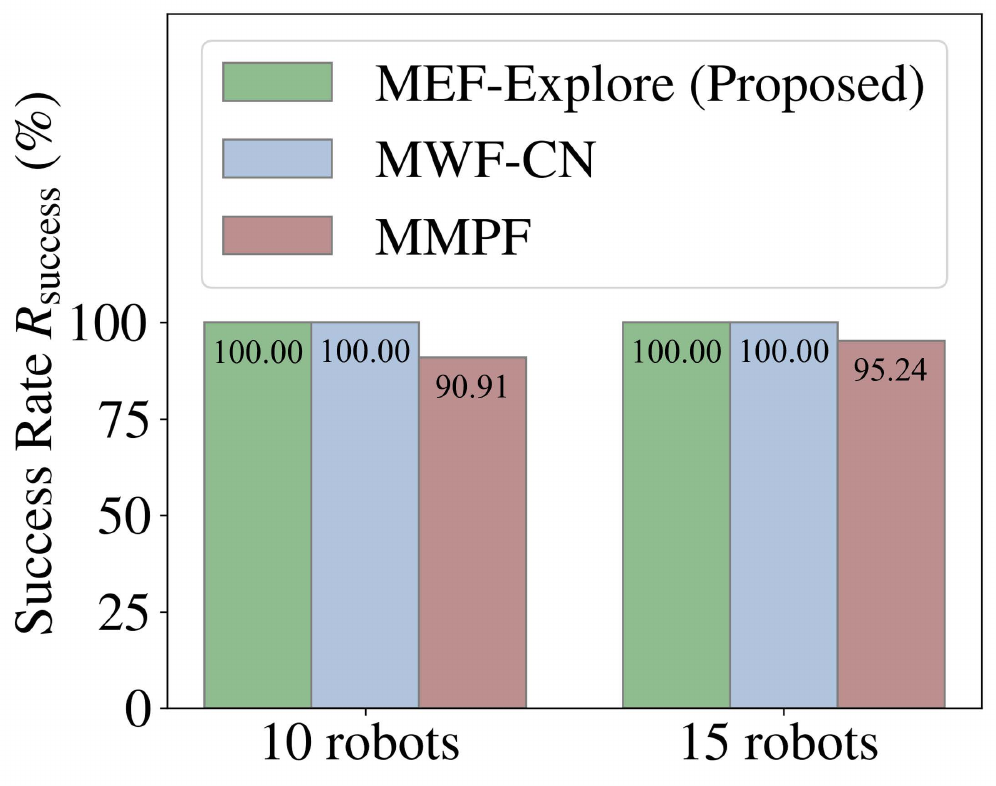}
         \caption{$R_\mathrm{Success}$}
         \label{fig:map1_scalability_Success}
     \end{subfigure}
        \caption{Simulation results on Map 1 of our proposed MEF-Explore, the MWF-CN, and the MMPF with 10 and 15 robots}
        \label{fig:scalability_map1}
\end{figure}

\subsection{Scalability with Large Numbers of Robots}
Ensuring that an exploration strategy remains effective as the number of robots increases is crucial for large-scale deployments. This subsection studies the scalability of the proposed MEF-Explore and the benchmarks, examining how the exploration performance is affected when scaling up the robot team size. To evaluate each method's scalability, we utilize 10 and 15 robots to explore Map 1 and Map 2 with $r_\mathrm{comm}=2\text{m}$. Robots' initial positions are shown in Fig. \ref{fig:maps-scalability}. 

\begin{figure}
     \centering
     \begin{subfigure}[b]{\linewidth}
         \centering
         \includegraphics[width=\linewidth]{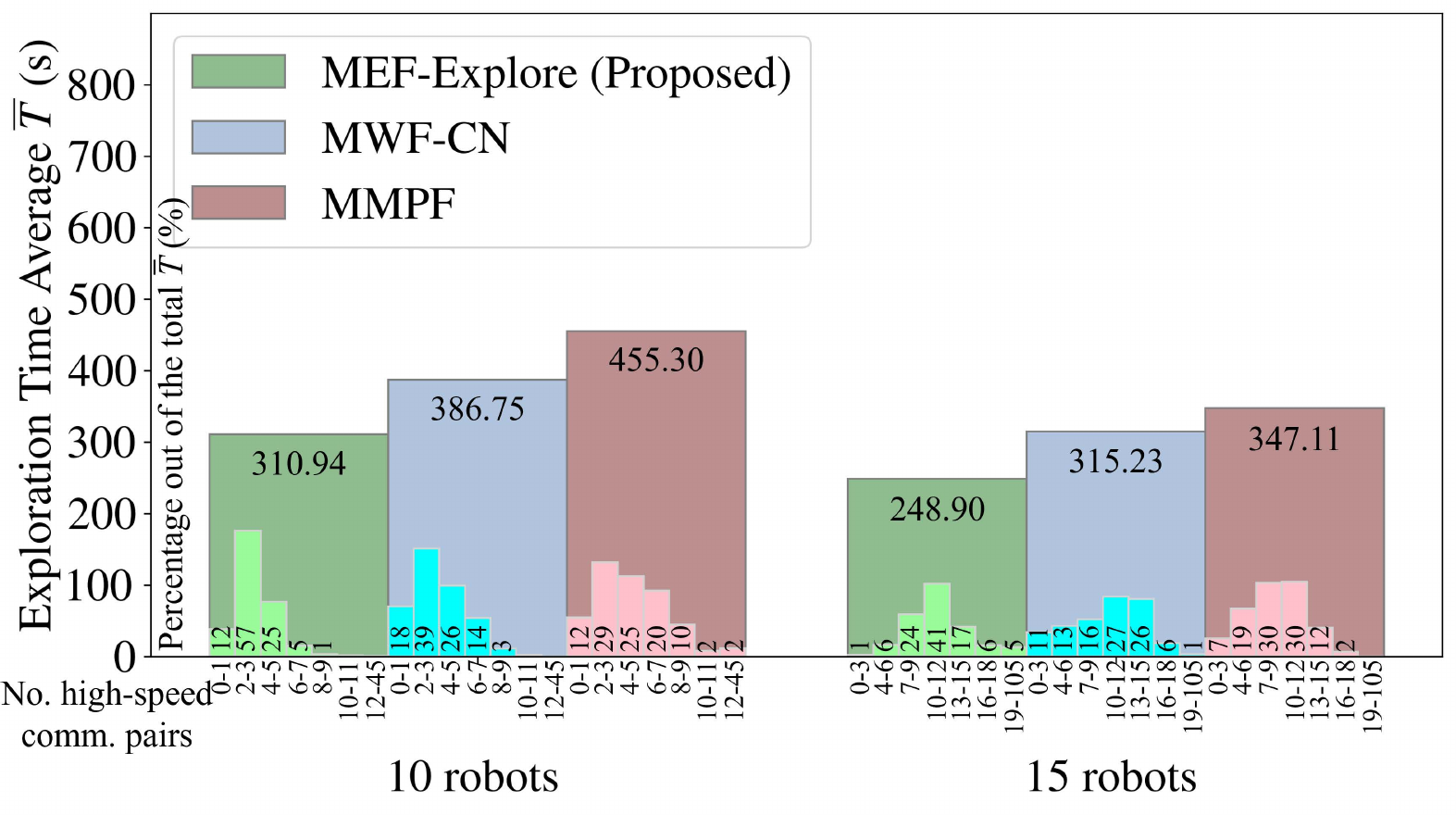}
         \caption{$\bar{T}$}
         \label{fig:map2_scalability_Ttotal}
     \end{subfigure}
     \begin{subfigure}[b]{0.49\linewidth}
         \centering
         \includegraphics[width=\linewidth]{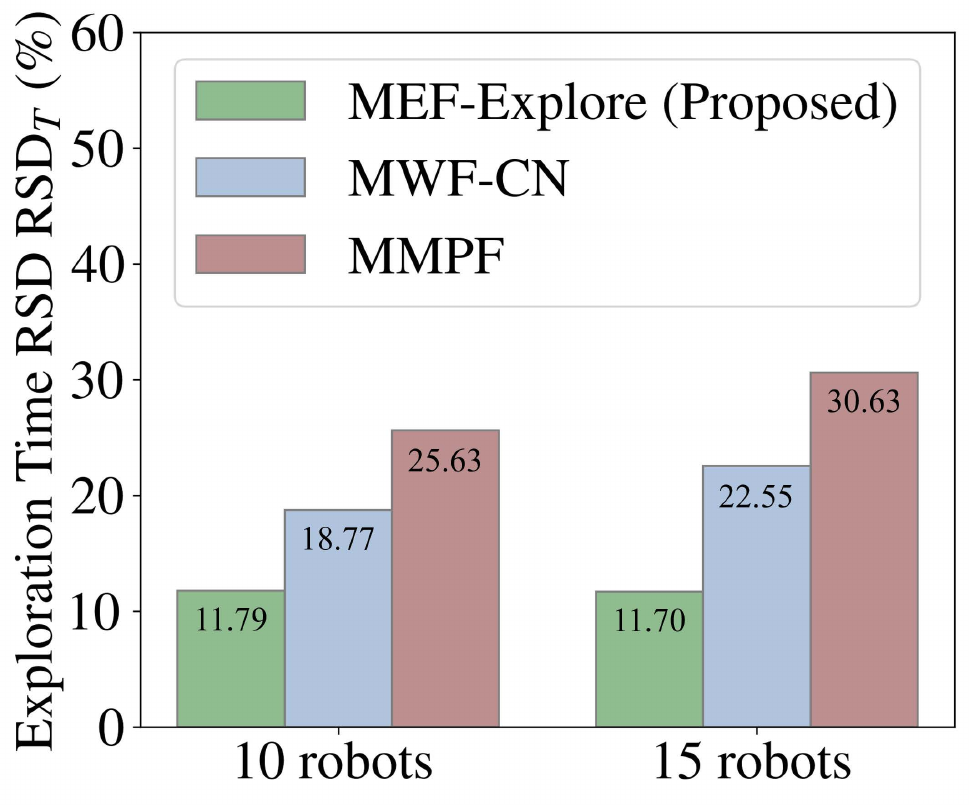}
         \caption{$\mathrm{RSD}_T$}
         \label{fig:map2_scalability_RSD}
     \end{subfigure}
     \begin{subfigure}[b]{0.49\linewidth}
         \centering
         \includegraphics[width=\linewidth]{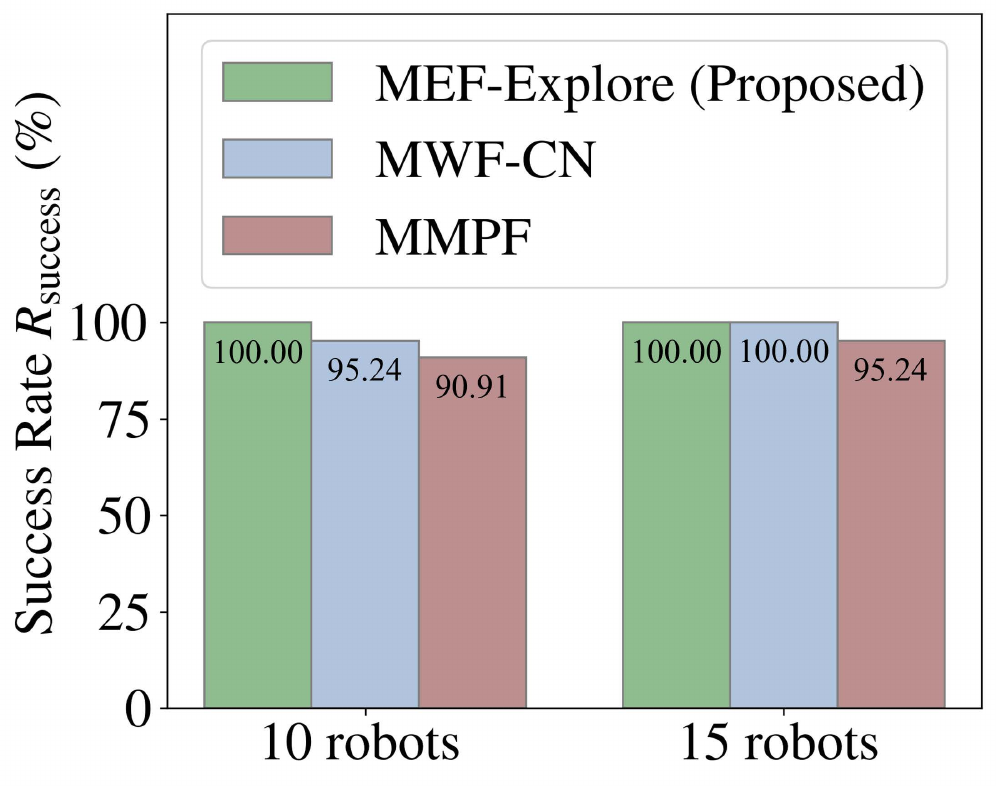}
         \caption{$R_\mathrm{Success}$}
         \label{fig:map2_scalability_Success}
     \end{subfigure}
        \caption{Simulation results on Map 2 of our proposed MEF-Explore, the MWF-CN, and the MMPF with 10 and 15 robots}
        \label{fig:scalability_map2}
\end{figure}

Note that we have attempted to implement the GVGExp with 10 and 15 robots; however, it is unable to fully explore the environments. The GVGExp was not designed to accommodate such large team sizes as its exploration strategy relies on generalized Voronoi graphs, which may become increasingly fragmented and inefficient as the number of robots grows, leading to coordination challenges. Additionally, the method imposes strict inter-robot communication requirements, which can become a limiting factor in larger teams, potentially causing delays or conflicts in task allocation. Therefore, we will evaluate only the exploration performance of the proposed MEF-Explore and the remaining benchmarks, the MWF-CN and the MMPF. The results are calculated based on 240 rounds of successful exploration (20 rounds for each scenario), and we also collect the number of unsuccessful rounds to compute $R_\mathrm{success}$.

According to Fig. \ref{fig:scalability_map1} and \ref{fig:scalability_map2}, the proposed MEF-Explore and the benchmarks successfully complete the exploration tasks in both simulated environments with 10 and 15 robots, demonstrating their scalability under large robot teams. Notably, the MEF-Explore consistently outperforms all benchmarks across all evaluated metrics. These results substantiate the performance advancements from using the proposed MEF-Explore, as previously discussed in Section \ref{subsection:benchmark}. Specifically, our novel entropies, implicit robot rendezvous, and efficient goal-assigning module continue to perform effectively even if the number of robots surges significantly.

Compared to the MWF-CN and the MMPF, for the case of Map 1, the MEF-Explore leads robots to explore 20.75\% and 38.18\% faster when using 10 robots and 11.55\% and 41.01\% faster when using 15 robots, as shown in Fig. \ref{fig:map1_scalability_Ttotal}. The exploration times are 5.88\% and 10.13\% more consistent when using 10 robots and 7.99\% and 10.97\% more consistent when using 15 robots, as shown in Fig. \ref{fig:map1_scalability_RSD}. Also, the exploration by the MEF-Explore is 9.09\% and 4.76\% more successful than the MMPF when using 10 and 15 robots, as shown in Fig. \ref{fig:map1_scalability_Success}. Similarly, for the case of Map 2, compared to the MWF-CN and the MMPF, the exploration by MEF-Explore is 24.38\% and 46.43\% faster when using 10 robots and 26.65\% and 39.46\% faster when using 15 robots, as shown in Fig. \ref{fig:map2_scalability_Ttotal}. The exploration times are 6.98\% and 13.84\% more consistent when using 10 robots and 10.85\% and 18.93\% more consistent when using 15 robots, as shown in Fig. \ref{fig:map2_scalability_RSD}. The MEF-Explore also leads to 4.76\% and 9.09\% more successful exploration compared to the MWF-CN and the MMPF when using 10 robots and 4.76\% more successful exploration compared to the MMPF when using 15 robots, as shown in Fig. \ref{fig:map2_scalability_Success}. These results confirm that the proposed MEF-Explore exhibits superior scalability and delivers the best performance when deployed with large numbers of robots.

\begin{figure}
     \centering
     \begin{subfigure}[b]{0.9\linewidth}
         \centering
         \includegraphics[width=\linewidth]{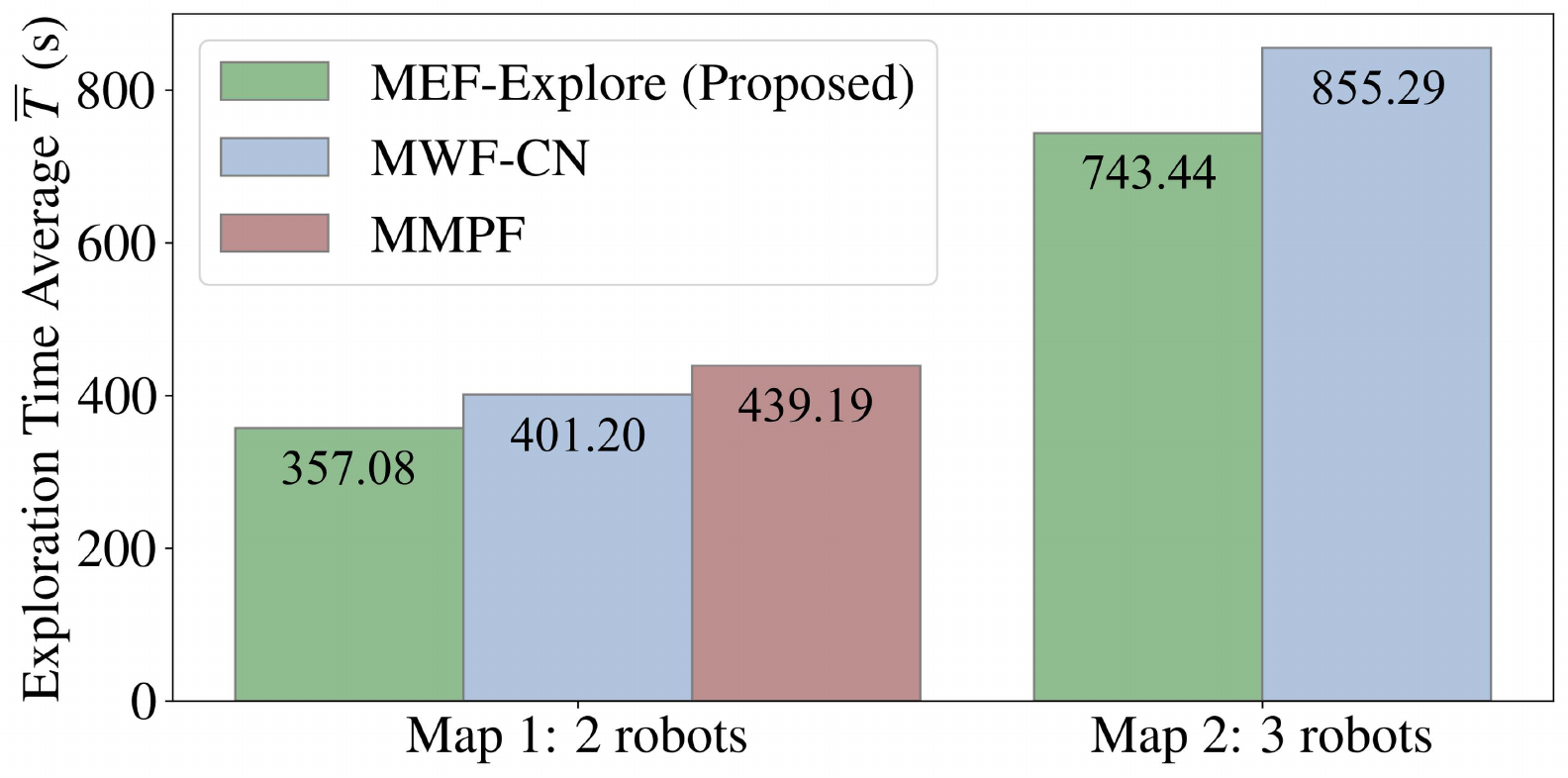}
         \caption{$\bar{T}$ (no inter-robot communication)}
         \label{fig:stability_Ttotal}
     \end{subfigure}
     \begin{subfigure}[b]{0.9\linewidth}
         \centering
         \includegraphics[width=\linewidth]{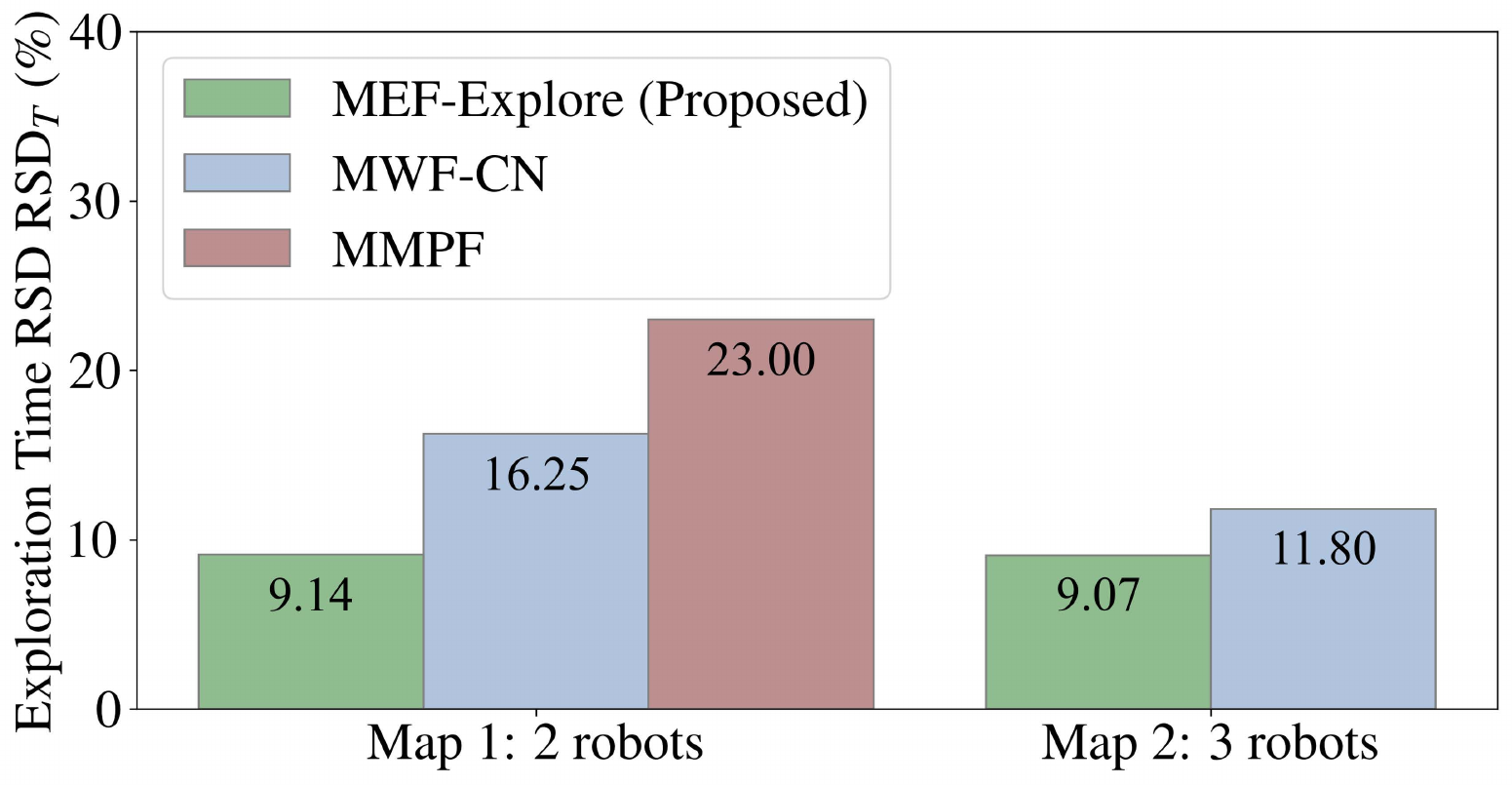}
         \caption{$\mathrm{RSD}_T$ (no inter-robot communication)}
         \label{fig:stability_RSD}
     \end{subfigure}
     \begin{subfigure}[b]{0.9\linewidth}
         \centering
         \includegraphics[width=\linewidth]{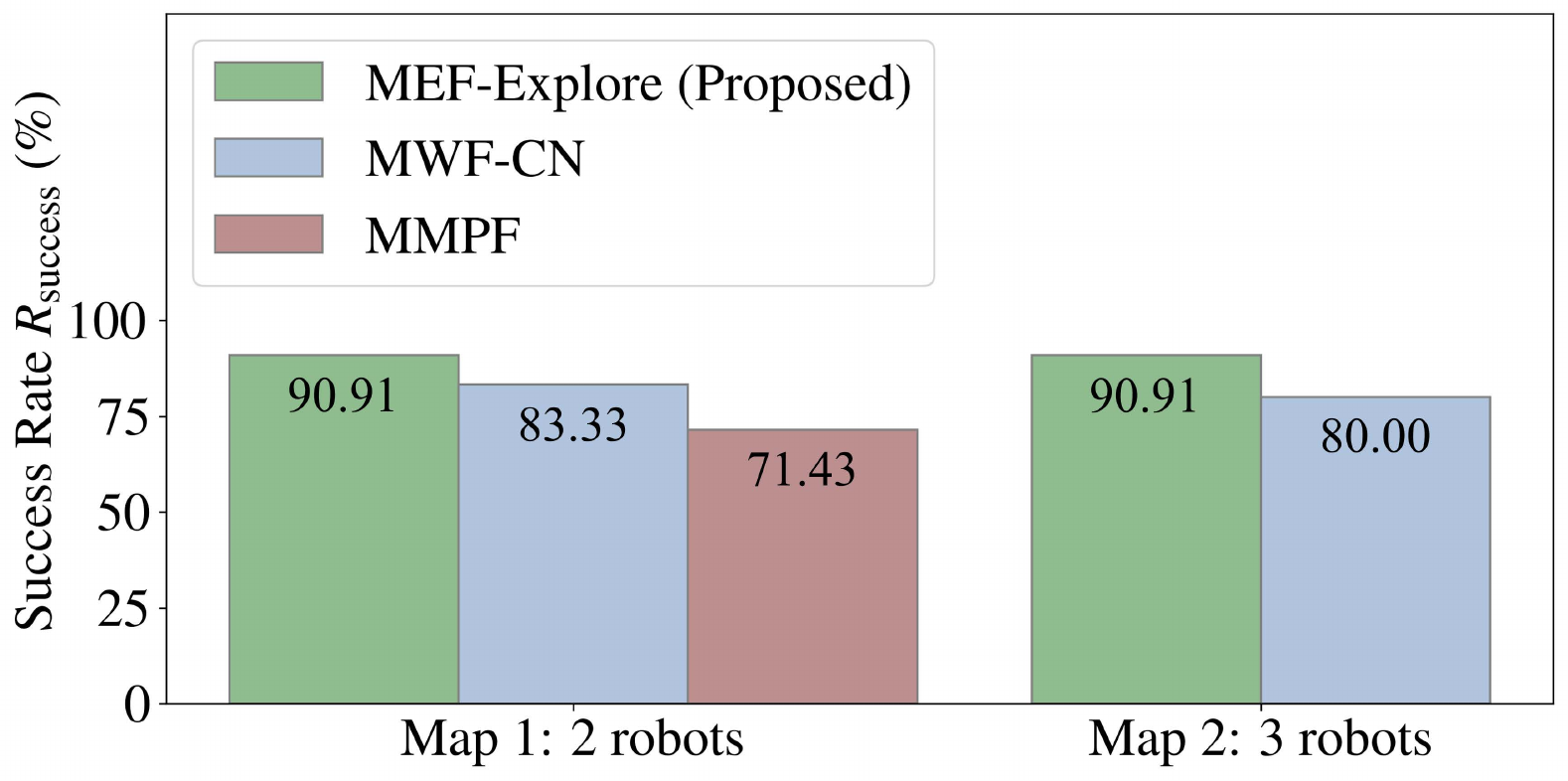}
         \caption{$R_\mathrm{Success}$ (no inter-robot communication)}
         \label{fig:stability_Success}
     \end{subfigure}
        \caption{Simulation results on Map 1 and Map 2 of our proposed MEF-Explore, the MWF-CN, and the MMPF without inter-robot communication}
        \label{fig:stability}
\end{figure}

\begin{figure*}
     \centering
     \begin{subfigure}[b]{0.2\linewidth}
         \centering
         \includegraphics[width=\linewidth]{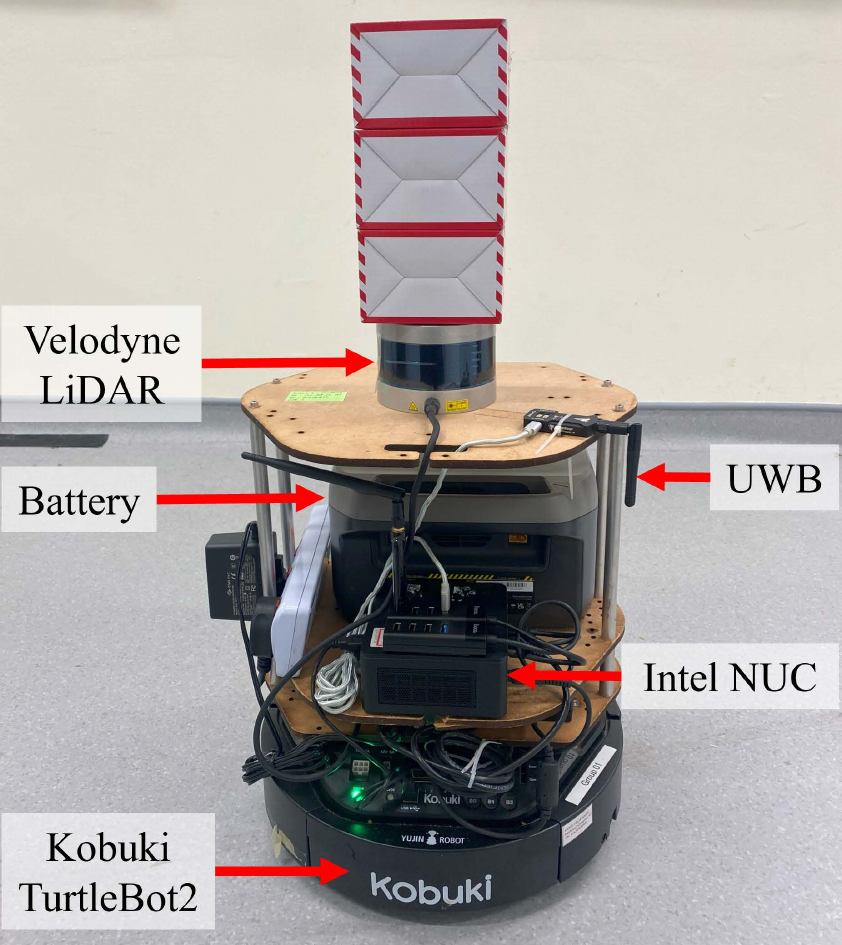}
         \caption{TurtleBot setup}
         \label{fig:turtlebot}
     \end{subfigure}
     \begin{subfigure}[b]{0.39\linewidth}
         \centering
         \includegraphics[width=\linewidth]{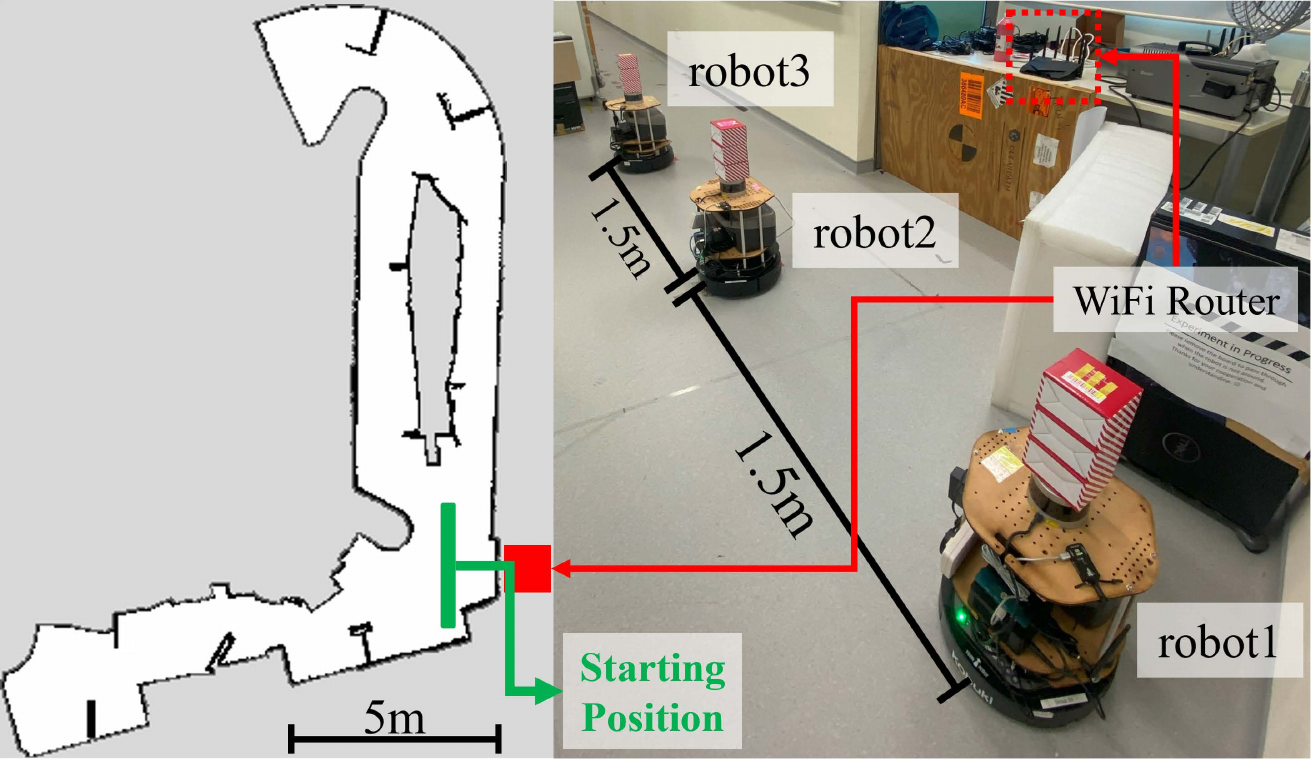}
         \caption{Real Environment}
         \label{fig:real-env}
     \end{subfigure}
        \caption{Robot and environment setup for three-robot exploration in real-world environment}
        \label{fig:robot+env}
\end{figure*}

\begin{figure*}
     \centering
     \begin{subfigure}[b]{0.45\linewidth}
         \centering
         \includegraphics[width=\linewidth]{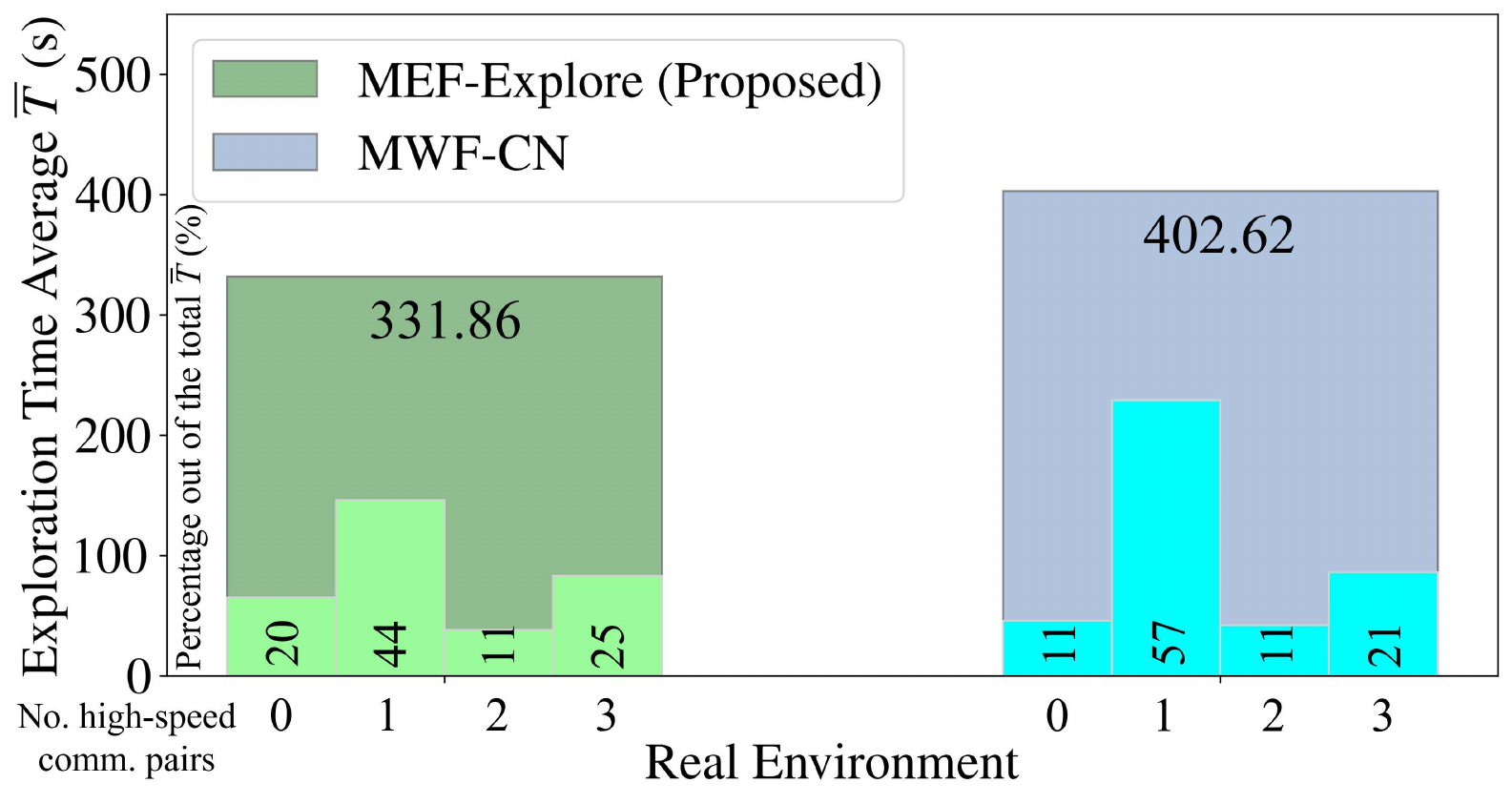}
         \caption{$\bar{T}$ (Real Environment)}
         \label{fig:real_Ttotal}
     \end{subfigure}
     \begin{subfigure}[b]{0.45\linewidth}
         \centering
         \includegraphics[width=\linewidth]{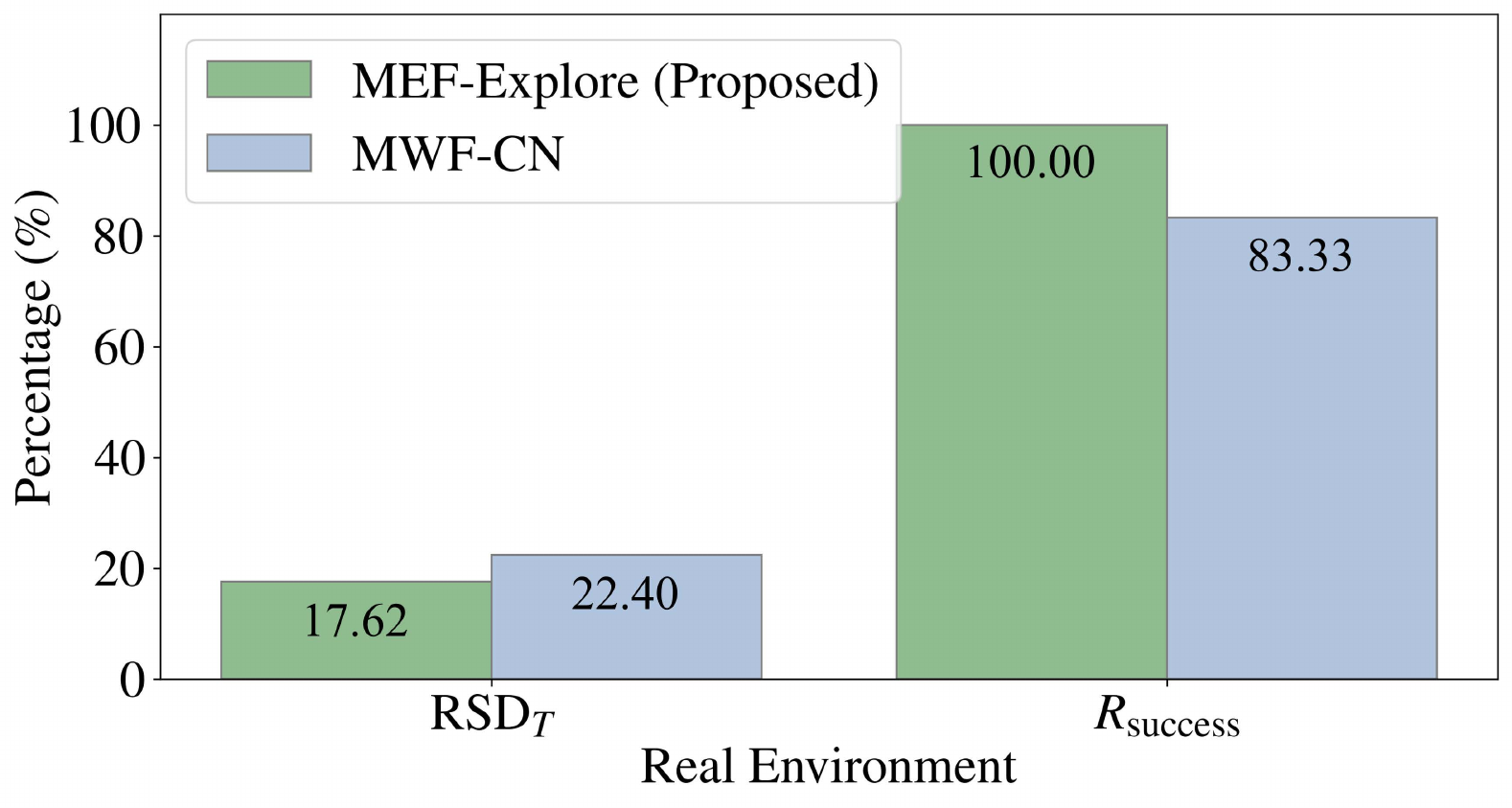}
         \caption{$\mathrm{RSD}_T$ and $R_\mathrm{Success}$ (Real Environment)}
         \label{fig:real_RSD+Success}
     \end{subfigure}
        \caption{Experimental results on Real Environment of our proposed MEF-Explore and the MWF-CN}
        \label{fig:real_results}
\end{figure*}

\subsection{Stability Under Communication Loss}
As robust exploration strategies should maintain stable performance even when inter-robot communication is unavailable, this subsection studies the impact of inter-robot communication loss on exploration stability, evaluating how effectively robots using different methods can adapt and still function under such conditions. As robots cannot communicate during exploration, they will not share their positions and local maps with each other at all. To demonstrate this clearly, we select the sparsest exploration environments in terms of the number of robots according to our previous simulation settings: Map 1 with two robots and Map 2 with three robots. Since the reliance on inter-robot communication in these scenarios is essential, making them stringent tests of whether robots using the proposed MEF-Explore and the benchmarks can function independently and stably if they lack such communication.

For the purpose of evaluation, the map merging is processed separately to check if the environments are fully explored, but this merged map is not accessible to any robots. Also, given that the GVGExp formulates its inter-robot communication based on graph structures from multiple robots, it inherently relies on such communication to support the exploration process. Its ability to coordinate and explore effectively can be significantly disrupted if such communication is unavailable. Therefore, we will focus only on the proposed MEF-Explore, the MWF-CN, and the MMPF. Note that we examine the exploration performance based on 20 successful rounds for each scenario, which means 100 successful rounds for all cases, and the number of unsuccessful rounds is also collected to calculate $R_\mathrm{success}$.

According to Fig. \ref{fig:stability}, the results reveal declines in exploration performance, which are longer exploration time $\bar{T}$ and lower success rate $R_\mathrm{success}$, compared to those with inter-robot communication in Fig. \ref{fig:map1_sim_results} and \ref{fig:map2_sim_results}. The main reason is that non-communicating robots do not share any information, including their positions and maps. Hence, each robot must rely solely on its own local information, which can lead to a lack of fully efficient goal selection. Furthermore, as each robot is entirely unaware of the others' explored areas, this leads to the likelihood of overlaps in exploration efforts, which ultimately escalates the exploration time and the risk of failures.

However, the extent of the degradation is not too excessive for the case of the proposed MEF-Explore, and it still outperforms the benchmarks in all aspects as our reformed entropies integrating both information entropies and potential fields enable more informed and adaptive decision-making compared to the MWF-CN and the MMPF that solely rely on potential fields. As the exploration goals obtained from the MEF-Explore are also assigned to robots more strategically, the MEF-Explore leads to more excellent overall stability.

Robots using the MEF-Explore can finish exploring Map 1 at 12.36\% and 23.00\% faster, as shown in Fig. \ref{fig:stability_Ttotal}, 7.12\% and 13.86\% more consistent, as shown in Fig. \ref{fig:stability_RSD}, and 7.58\% and 19.48\% more successful, as shown in Fig. \ref{fig:stability_Success}, compared to those using the MWF-CN and the MMPF, respectively. While robots using the MMPF cannot have any successful rounds when exploring Map 2, those using the proposed MEF-Explore still manage to explore at 15.05\% faster, 2.73\% more consistent, and 10.91\% more successful than the MWF-CN ones, as shown in Fig. \ref{fig:stability_Ttotal}, \ref{fig:stability_RSD}, and \ref{fig:stability_Success}. These findings demonstrate that the proposed MEF-Explore remains robust under communication loss and can effectively adapt to maintain stability and efficiency in exploration.

\section{Real-World Deployment}
\label{section:real-world}
Based on the performance of each method in simulation, our MEF-Explore and the MWF-CN have better results than the MMPF and the GVGExp. Thus, we select these two methods to deploy further on real robots. As shown in Fig. \ref{fig:turtlebot}, we utilize three TurtleBot2 run on a Kobuki base, each equipped with an Intel NUC i5 as the computing unit and a Velodyne LiDAR sensor (scanning range restricted to 7m). As we intend to emulate the communication-constrained environments, we choose LinkTrack's ultra-wideband node (UWB) to serve low-speed communication and WiFi 2.4Ghz (bounded by $r_\mathrm{comm}=4\text{m}$) to serve high-speed communication. The experimental environment is an indoor area of size 223.0$\text{m}^2$ with boxes and partitions as additional obstacles and walls. At the beginning of each round, the robots are placed at the starting position, as shown in Fig. \ref{fig:real-env}. Note that we use the same parameter set, localization, mapping, and navigation stacks as simulation. We also assess the exploration performance using the same evaluation metrics mentioned in Section \ref{subsection:metrics}.

The results in Fig. \ref{fig:real_results} are calculated based on ten successful exploration rounds, and the number of unsuccessful rounds is collected to measure $R_\mathrm{success}$. The well-performing manners are collectively reflected in the exploration enhancement: 21.32\% faster $\bar{T}$, 4.78\% lower $\mathrm{RSD}_T$, and 16.67\% higher $R_\mathrm{success}$. These outcomes from real-robot experiments closely align with the simulation results, in which our MEF-Explore outperforms the MWF-CN for all the metrics. The robots using our proposed method can explore the environment rapidly and consistently, and the rendezvous strategy is practical for triggering robot meetings to exchange their maps at appropriate times. It is also noticeable that the proposed goal-assigning module chooses a more appropriate goal to traverse compared to the MWF-CN, resulting in faster exploration time. These findings exhibit the strong robustness of our MEF-Explore in real-world environment exploration.

\section{Potential Challenges}
\label{section:challenges}
In this section, we outline potential challenges that could arise and discuss factors that influence the performance. These aspects will provide an understanding of the practical implications and possible areas for further investigation. While our proposed MEF-Explore has demonstrated promising results, there can be some specific challenges that occur during experiments, as follows:
\begin{enumerate}
    \item \textbf{Positioning and UWB ranging errors:} In general, mapping processes depend on sensors, such as LiDARs, whose measurements are relative to robot positions. Due to inherent sensor noise, positioning errors can accumulate over time if the loop closure technique is not applied, as studied in \cite{PosErrors}. In systems employing UWBs like our case, ranging errors further compound the challenges. We notice that UWBs utilized to check distances between robots to enable inter-robot map merging can have inevitable ranging errors due to the occlusions of obstacles in the environment. As a result, the mentioned errors can consequently affect each robot's merged map during exploration. For efficient and reliable exploration performance, further considerations can be made to mitigate these errors' impacts. However, according to our experiments, we have not experienced noticeable downgrades in the maps and overall exploration so far, presumably because the magnitude of the positioning errors is generally slight, and the UWB ranging errors are likely minor: below 0.5m, which aligns with findings from \cite{UWB}.  
    \item \textbf{Congestion due to large numbers of robots:} In this paper, we focus on multi-robot systems rather than swarm systems that typically involve large-scale deployment with reactive behaviors for exploration, as studied in \cite{Swarm}. However, our proposed method is conceptually scalable and consistently outperforms the benchmarks for all cases of numbers of robots in both simulation and real-world deployment. As our realistically simulated robots closely resemble real robots that are rigid bodies with mass, inertia, and volume, if an excessive number of them is deployed, they occasionally face congestion issues due to physical overlaps among robots, as investigated in \cite{Congestion}. Since congestion introduces unintended challenges, which are not the focus of this paper, we observe that the area allocated per robot should be at least 50$\text{m}^2$ across all environments to prevent overcrowding of robots. In other words, we deploy the number of robots proportional to the area of each environment, ensuring that each robot is allocated sufficient space to operate effectively. This approach minimizes overlap between robots and consequently reduces the likelihood of congestion. For future research, determining the optimal number of robots for exploration and conducting an exhaustive analysis of the method's scalability in varying environments also suggest intriguing avenues.
    \item \textbf{Considerations in robots' initial positions:} Even though the proposed method and the benchmarks are generally designed to be able to start exploration at any location in the environment, in reality, there should be considerations regarding where they should be located for both simulation and real-world deployment. In accordance with related studies by \cite{InitPos}, to ensure that the results realistically reflect the actual performance of each method without environment-induced constraints, we should carefully select these positions depending on the environment's characteristics and avoid beginning with highly obstructed areas. The comparison of exploration performance under diverse initial robot positions, including potentially suboptimal configurations, falls outside the scope of this paper. Nonetheless, investigating this aspect is an interesting topic for further research.
\end{enumerate}

\section{Conclusion and Future Work}
\label{section:conclusion}
This paper proposes the MEF-Explore, a novel distributed communication-constrained multi-robot exploration method. It comprises the two-layer inter-robot communication-aware information-sharing strategy and the entropy-field-based exploration strategy. Specifically, the proposed information-sharing strategy allows robots to exchange maps based on available communication. The dynamic graph representation also provides comprehensive information on each robot's communication status and local map. The proposed exploration strategy, which consists of novel entropy forms for frontiers and robots, combines the benefits of both the attractiveness inspired by the potential field and the entropy to deal with uncertainty caused by communication-constrained situations. Our well-constructed entropies also trigger implicit robot rendezvous to boost inter-robot map merging. Moreover, the duration-adaptive goal-assigning module gives robots better goal selection than the benchmark. As demonstrated by simulation and real-robot experiments, the robots using our proposed MEF-Explore outperform the other methods in all scenarios in terms of short and consistent exploration time and high success rate. For further studies, the strategy for heterogeneous multi-robot systems to collaboratively explore unknown environments under communication constraints is also an interesting research topic, as different types of robots can be responsible for various roles based on their capabilities. Together with the comparison with relevant works, this direction provides a more comprehensive evaluation of the method's effectiveness in various settings. Incorporating adaptability analysis would also offer valuable insights into how exploration methods perform under varying communication conditions, such as different communication intensities and devices. This examination will ensure the method's robustness and practicality in different scenarios.

\ifCLASSOPTIONcaptionsoff
  \newpage
\fi

\bibliographystyle{IEEEtran}
\bibliography{IEEEabrv,Bibliography}

{\appendix
In this Appendix, we provide the mathematical properties and proof of our proposed method's components mentioned in the previous sections.

\subsection{Properties of $\boxplus$}\label{appendix:prop}
Recall that $\mathscr{M}_i$, $\mathscr{M}_j$, $\mathscr{M}_k$ are the local maps of robots $i$, $j$, $k$, respectively. We have an operator $\boxplus: \mathbb{R}^2 \times \mathbb{R}^2 \to \mathbb{R}^2$ to represent map merging. For mathematical convenience, we assume that there is no error from merging. Hence, we have the following properties:
\begin{align}
    \text{Idempotence: } &\mathscr{M}_i \boxplus \mathscr{M}_i = \mathscr{M}_i \text{.}\\
    \text{Commutativity: } &\mathscr{M}_i \boxplus \mathscr{M}_j = \mathscr{M}_j \boxplus \mathscr{M}_i \text{.}\\
    \text{Associativity: } &(\mathscr{M}_i \boxplus \mathscr{M}_j) \boxplus \mathscr{M}_k = \mathscr{M}_i \boxplus (\mathscr{M}_j \boxplus \mathscr{M}_k) \text{.}
\end{align}

\subsection{Proof of Rendezvous Behavior} \label{appendix:rendezvous}
We analyze the rendezvous behavior by presenting the following theorem with detailed proof.
\begin{theorem}
    If robot $i$ almost does not have any frontier on its local map, it will rendezvous with other robots within its sensor range.
\end{theorem}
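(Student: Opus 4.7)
The plan is to establish a dominance argument: when the frontier entropy $H_f(p,q)$ becomes negligible in magnitude, the robot entropy $H_r(i,p)$ controls the sign and magnitude of $H_\mathrm{total}(i,p,q)$ in eq.~(\ref{eq:H_total}), so the minimiser that defines the new goal in eq.~(\ref{eq:g_new}) is pulled toward locations at which $H_r$ attains its strongly negative values, which are precisely neighbourhoods of other robots lying inside the sensor-range ball.

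First I would make ``almost no frontier'' quantitative. Substituting small $N_C$ and small $C_q$ into eq.~(\ref{eq:H_f}) shows that the multiplicative factor $C_q$ and the logarithmic factor $\log(N_C C_q)$ jointly shrink, so for any candidate point $p$ with $d^*(p,q)$ bounded away from zero one obtains a uniform envelope $|H_f(p,q)| \le M_\epsilon$ with $M_\epsilon \to 0$ as the frontier count vanishes; in the sharpest case $N_C = 0$ there is no centroid and the $H_f$ contribution drops out of $H_\mathrm{total}$ entirely.

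Next I would analyse $H_r(i,p)$ on the sensor-range ball, where the piecewise definition in eq.~(\ref{eq:H_r}) is active. Because the low-speed layer of the information-sharing strategy in Section~\ref{section:info-sharing} continually delivers every other robot's position to robot $i$, any robot $j$ satisfying $d_\mathrm{cur}(i,\mathrm{Pos}_\mathrm{cur}(j)) < d_s$ supplies candidate points at which the deterministic part of $H_r$ is strictly negative (since $\log N_r > 0$ for $N_r \ge 2$ and the denominator is negative) and can be made arbitrarily large in magnitude via the $\bigl(d_\mathrm{cur}(\cdot,p) - d_s\bigr)^{-1}$ factor. Once the frontier threshold is taken small enough that $M_\epsilon$ plus a high-probability bound on the colored noise $\chi^p_i(\alpha,\sigma_d)$ is strictly beaten by this negative contribution, the minimiser of $H_\mathrm{total}$ is forced into such a neighbourhood. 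The duration-adaptive goal-assignment rule then promotes this candidate to the current goal, so robot $i$ is dispatched toward robot $j$, which is the claimed rendezvous.

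The hard part will be making the comparison uniform in $p$ and robust to the stochastic term. Specifically, one must (i) fix a precise threshold on $N_C$ and $C_q$ under which $M_\epsilon$ is below the dominant scale of $H_r$ on the sensor-range ball, (ii) bound the colored noise $\chi^p_i(\alpha,\sigma_d)$ in $L^\infty$ or with high probability so it does not overturn the inequality, and (iii) verify that the argmin candidates near $\mathrm{Pos}_\mathrm{cur}(j)$ remain admissible under the reachability encoded by $d^*$, so that the minimiser is actually a navigable goal and the robot physically closes the distance to $j$.
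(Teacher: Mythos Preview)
Your proposal is correct and follows the same dominance argument as the paper: as the frontier count vanishes one shows $H_f \to 0$ (the paper does this by taking the joint limit $(N_C,C_q)\to(1,0)$ and applying L'H\^{o}pital to $C_q\log C_q$, which sharpens your loose claim that the log factor ``shrinks''---it actually diverges, but the product with $C_q$ vanishes), so $H_\mathrm{total}$ reduces to $H_r$ and the $\argmin$ in eq.~(\ref{eq:g_new}) is governed by $H_r$ on the sensor-range ball. The extra technical conditions you list (uniform envelope in $p$, high-probability control of $\chi^p_i$, reachability under $d^*$) are reasonable refinements but the paper's proof does not pursue them; it stops at the limit computation and a one-line appeal to the form of eq.~(\ref{eq:H_r}).
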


\begin{proof}
    Assume that robot $i$ almost does not have any frontier on its local map, i.e., $N_C \to 1$ and $C_q \to 0$. Using eq. (\ref{eq:H_total}), we have
    \begin{equation}
        \lim_{(N_C,C_q) \to (1,0)} H_\mathrm{total}(i,p,q) = H_r(i,p) + \lim_{(N_C,C_q) \to (1,0)} H_f(p,q)\text{.}
    \end{equation}
    Then, using eq. (\ref{eq:H_f}), we consider
    \begin{align*}
        \lim_{(N_C,C_q) \to (1,0)} H_f(p,q) 
        &= -\frac{k_f(N_r)}{d^*(p,q)}\lim_{C_q \to 0} \frac{\log C_q}{\frac{1}{C_q}} \\
        &= -\frac{k_f(N_r)}{d^*(p,q)}\lim_{C_q \to 0} \frac{\frac{1}{C_q}}{-\frac{1}{C_q^2}} \footnotesize\text{ [L'Hôpital's rule]}\\
        &= \frac{k_f(N_r)}{d^*(p,q)}\lim_{C_q \to 0} C_q = 0\text{.}
    \end{align*}
Hence, we have
\begin{equation}
    \lim_{(N_C,C_q) \to (1,0)} H_\mathrm{total}(i,p,q) = H_r(i,p)\text{,}
\end{equation}
which means using eq. (\ref{eq:g_new}), we have
\begin{equation}
     {g}_\mathrm{new}(i) = \argmin_p H_r(i,p)
\end{equation}
when $N_C \to 1$ and $C_q \to 0$. According to eq. (\ref{eq:H_r}), since $H_r(i,p)$ exists in robot $i$'s surroundings, robot $i$ itself and other robots within its sensor range $d_s$ will then be attracted to travel to each other's positions, i.e., rendezvous.
\end{proof}}

\begin{IEEEbiography}
[{\includegraphics[width=1in,height=1.25in,clip,keepaspectratio]{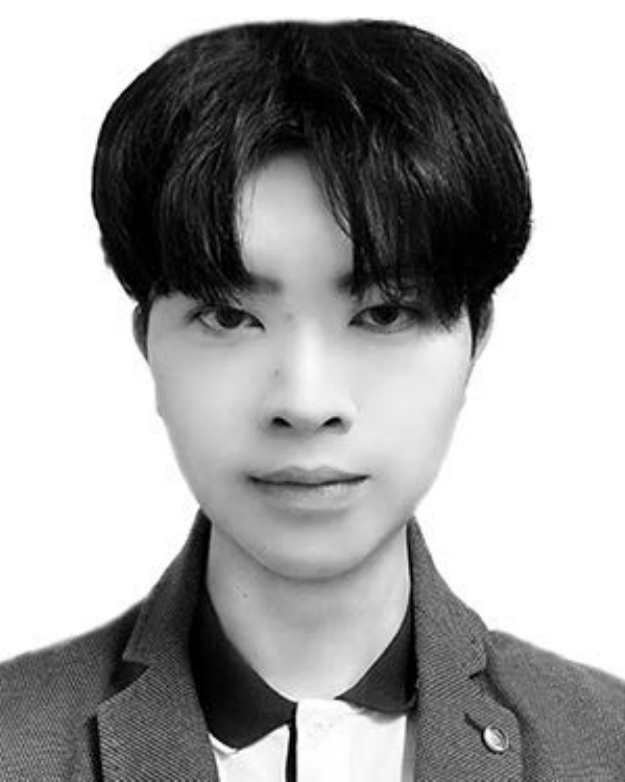}}]{Khattiya Pongsirijinda} received the B.Sc. degree in Mathematics from Silpakorn University, Thailand, in 2019 and the M.Sc. degree in Data Science from the Skolkovo Institute of Science and Technology, Russia, in 2021. He was also a research student at the University of Nebraska–Lincoln, USA, in 2018. He is currently pursuing his Ph.D. degree at the Singapore University of Technology and Design, Singapore, under the supervision of Prof. Chau Yuen. His current research interests include multi-robot exploration and applied mathematics in robotics.
\end{IEEEbiography}

\begin{IEEEbiography}
[{\includegraphics[width=1in,height=1.25in,clip,keepaspectratio]{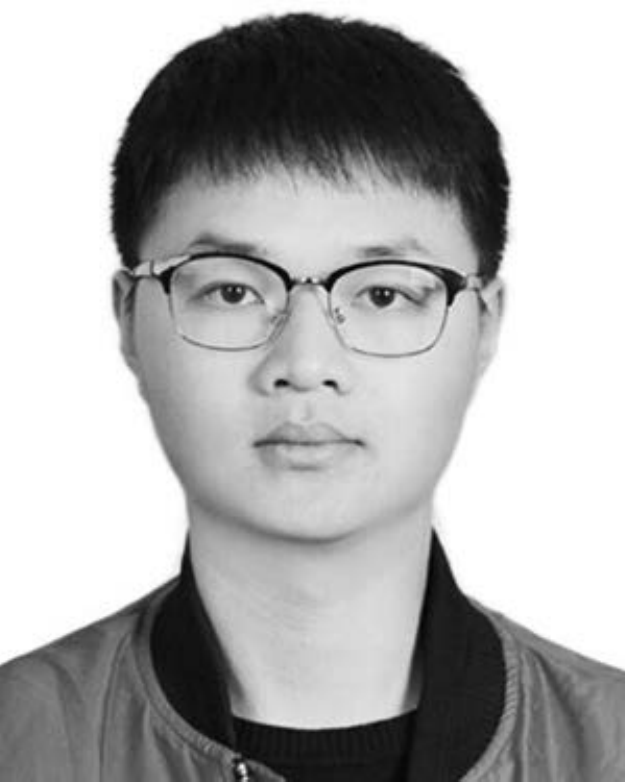}}]{Zhiqiang Cao}
received the B.S. and M.A.Sc. degrees from the Southwest University of Science and Technology, Mianyang, China, in 2019 and 2022, respectively. He is currently pursuing the Ph.D. degree with the Singapore University of Technology and Design, Singapore, under the supervision of Prof. Chau Yuen. His current research interests include multi-robot systems, collaborative localization, and distributed SLAM systems.
\end{IEEEbiography}

\begin{IEEEbiography}
[{\includegraphics[width=1in,height=1.25in,clip,keepaspectratio]{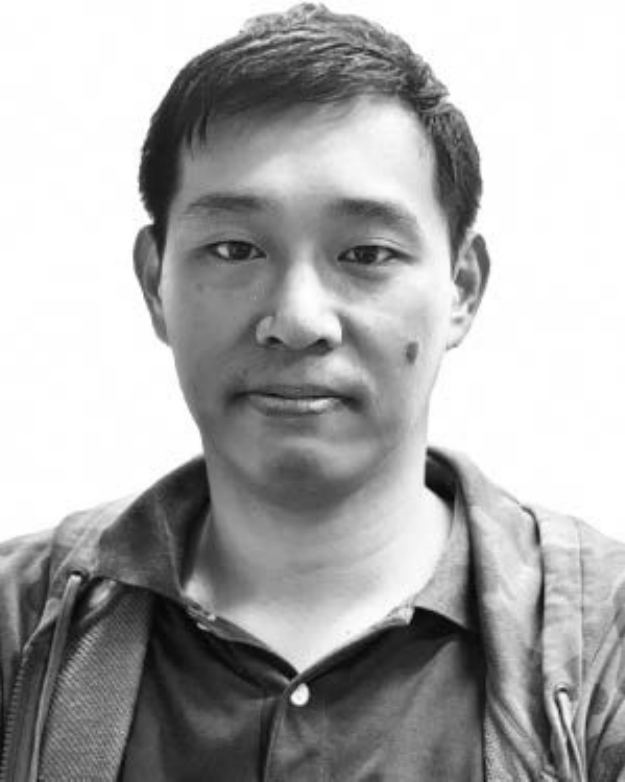}}]{Billy Pik Lik Lau} (Senior Member, IEEE) received the B.Sc. and M.Phil. degrees in computer science from Curtin University, Perth, WA, Australia, in 2010 and 2014, respectively, and the Ph.D. degree from Singapore University of Technology and Design (SUTD), Singapore, in 2021. He is currently working as a Research Fellow with Temasek Laboratory, SUTD. His Ph.D. research work includes smart city, the Internet of Things, data fusion, urban science, and big data analysis. His current research interests include integration of data fusion, urban science, multi-agent system design, multi-robotic collaboration, and robotic exploration.
\end{IEEEbiography}

\begin{IEEEbiography}
[{\includegraphics[width=1in,height=1.25in,clip,keepaspectratio]{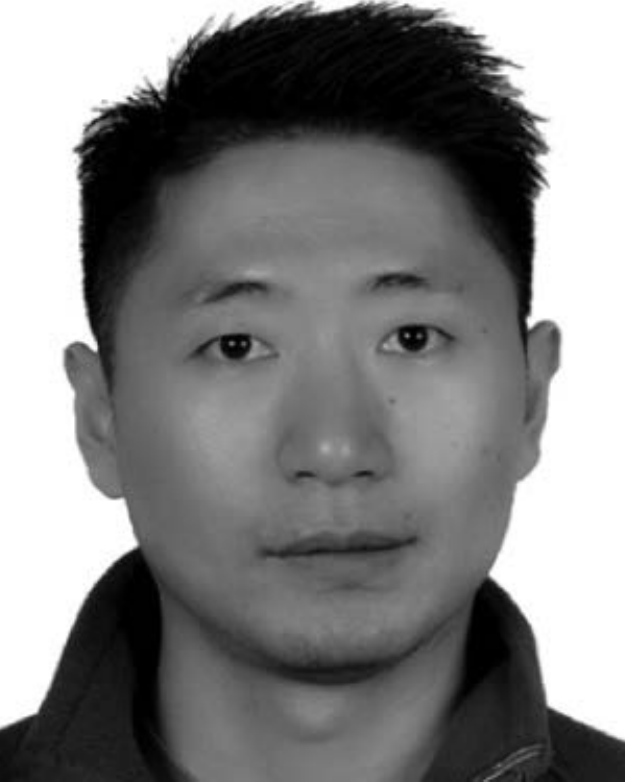}}]{Ran Liu}
(Senior Member, IEEE) received the B.S. degree from the Southwest University of Science and Technology, China, in 2007, and the Ph.D. degree from the University of Tüebingen, Germany, in 2014. From 2015 to 2024, he was a Research Fellow at Singapore University of Technology and Design. He is currently working as a Senior Research Fellow at Nanyang Technological University. His research interests include robotics and SLAM.
\end{IEEEbiography}

\begin{IEEEbiography}
[{\includegraphics[width=1in,height=1.25in,clip,keepaspectratio]{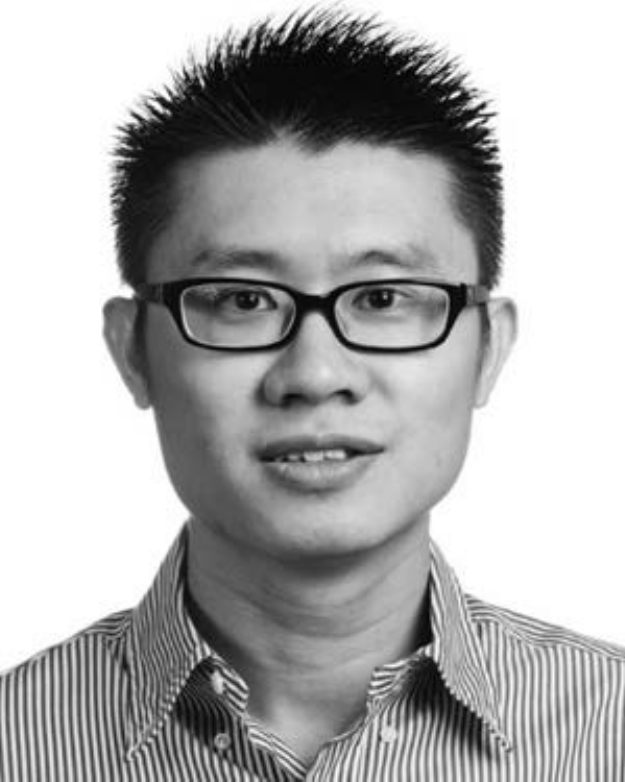}}]{Chau Yuen}
(S02-M06-SM12-F21) received the B.Eng. and Ph.D. degrees from Nanyang Technological University, Singapore, in 2000 and 2004, respectively. He was a Post-Doctoral Fellow with Lucent Technologies Bell Labs, Murray Hill, in 2005. From 2006 to 2010, he was with the Institute for Infocomm Research, Singapore. From 2010 to 2023, he was with the Engineering Product Development Pillar, Singapore University of Technology and Design. Since 2023, he has been with the School of Electrical and Electronic Engineering, Nanyang Technological University, currently he is Provost’s Chair in Wireless Communications, Assistant Dean in Graduate College, and Cluster Director for Sustainable Built Environment at ER@IN.

Dr. Yuen received IEEE Communications Society Leonard G. Abraham Prize (2024), IEEE Communications Society Best Tutorial Paper Award (2024), IEEE Communications Society Fred W. Ellersick Prize (2023), IEEE Marconi Prize Paper Award in Wireless Communications (2021), IEEE APB Outstanding Paper Award (2023), and EURASIP Best Paper Award for JOURNAL ON WIRELESS COMMUNICATIONS AND NETWORKING (2021).

Dr. Yuen current serves as an Editor-in-Chief for Springer Nature Computer Science, Editor for IEEE TRANSACTIONS ON VEHICULAR TECHNOLOGY, IEEE TRANSACTIONS ON NEURAL NETWORKS AND LEARNING SYSTEMS, and IEEE TRANSACTIONS ON NETWORK SCIENCE AND ENGINEERING, where he was awarded as IEEE TNSE Excellent Editor Award 2024 and 2022, and Top Associate Editor for TVT from 2009 to 2015. He also served as the guest editor for several special issues, including IEEE JOURNAL ON SELECTED AREAS IN COMMUNICATIONS, IEEE WIRELESS COMMUNICATIONS MAGAZINE, IEEE COMMUNICATIONS MAGAZINE, IEEE VEHICULAR TECHNOLOGY MAGAZINE, IEEE TRANSACTIONS ON COGNITIVE COMMUNICATIONS AND NETWORKING, and ELSEVIER APPLIED ENERGY.

He is listed as Top 2\% Scientists by Stanford University, and also a Highly Cited Researcher by Clarivate Web of Science from 2022. He has 4 US patents and published over 500 research papers at international journals.
\end{IEEEbiography}

\vfill
\pagebreak

\begin{IEEEbiography}
[{\includegraphics[width=1in,height=1.25in,clip,keepaspectratio]{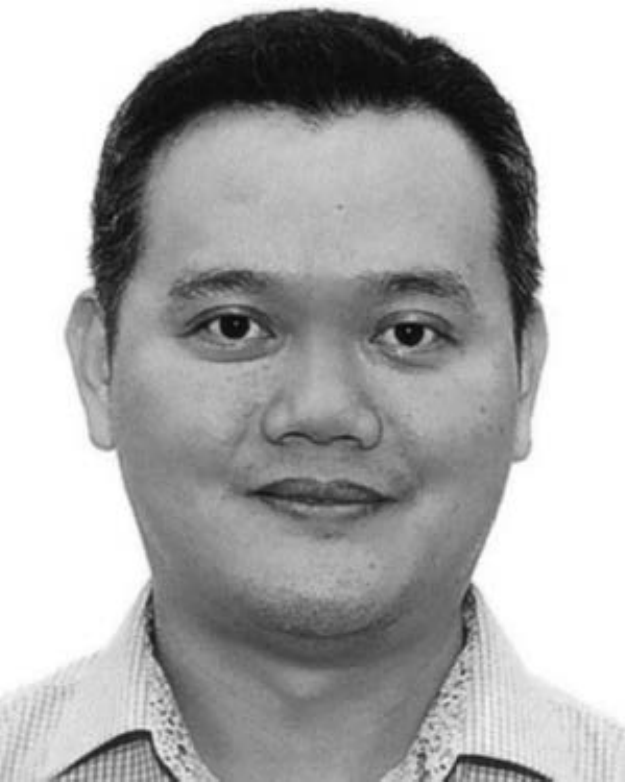}}]{U-Xuan Tan}
(Senior Member, IEEE) received the B.Eng. degree in mechanical and aerospace engineering and Ph.D. degree in mechatronics from Nanyang Technological University, Singapore, in 2005 and 2010, respectively. 

From 2009 to 2011, he did his postdoctoral research at the University of Maryland, College Park, MD, USA. From 2012 to 2014, he was a Lecturer with the Singapore University of Technology and Design, Singapore, where he took up a research intensive role as Assistant Professor in 2014, and has been promoted to Associate Professor since 2021. He is currently the Chair of SUTD Institutional Review Board and is also a Senior Visiting Academician with Changi General Hospital. His research interests include mechatronics, sensing and control, onsite micromanipulation and microsensing, sensing and control technologies for human–robot interaction, and interdisciplinary teaching.

\end{IEEEbiography}

\end{document}